\RequirePackage{fix-cm}
\documentclass{article}
\usepackage{iclr2027_conference}
\usepackage{times}

\setcitestyle{numbers,square,comma}

\usepackage[T1]{fontenc}
\usepackage{microtype}
\usepackage{placeins}
\usepackage{hyphenat}
\usepackage{setspace}
\usepackage[english]{babel}
\usepackage{etoolbox}
\usepackage{graphicx}
\usepackage{subcaption}
\usepackage{booktabs}
\usepackage{nicematrix}
\usepackage{multirow}
\usepackage{bm}
\usepackage[most]{tcolorbox}
\usepackage{xcolor}
\usepackage{caption}
\usepackage{hyperref}
\usepackage{fontawesome5}
\usepackage[noabbrev,nameinlink]{cleveref}

\definecolor{metablue}{HTML}{0064E0}
\definecolor{metafg}{HTML}{1C2B33}
\definecolor{metabg}{HTML}{F1F4F7}
\newcommand{\beginappendix}{\appendix{\huge\bfseries Appendix\par}\vspace{0.5em}}

\usepackage{enumitem}
\usepackage{wrapfig}
\usepackage{needspace}%
\AddToHook{env/wrapfigure/begin}{\setlength{\intextsep}{0pt}}
\AddToHook{cmd/WF@putfigmaybe/before}{\setlength{\intextsep}{0pt}}
\usepackage{amsthm}
\newtheorem{theorem}{Theorem}
\newtheorem{proposition}{Proposition}

\theoremstyle{definition}
\newtheorem{definition}{Definition}
\theoremstyle{plain}
\Crefname{theorem}{Theorem}{Theorems}
\Crefname{proposition}{Proposition}{Propositions}
\Crefname{lemma}{Lemma}{Lemmas}
\Crefname{definition}{Definition}{Definitions}
\crefformat{section}{\S#2#1#3}\Crefformat{section}{\S#2#1#3}
\crefformat{subsection}{\S#2#1#3}\Crefformat{subsection}{\S#2#1#3}
\crefformat{subsubsection}{\S#2#1#3}\Crefformat{subsubsection}{\S#2#1#3}
\usepackage{amssymb}
\usepackage{pifont}
\usepackage{float}
\usepackage{amsmath}
\usepackage{makecell}
\usepackage{titletoc}
\usepackage{tikz}
\usetikzlibrary{arrows.meta, positioning, calc, decorations.pathreplacing,
                fit, backgrounds, shapes.geometric, shapes.gates.logic.US}
\graphicspath{{figures/}}

\newcommand{\xmark}{\textcolor{red!85!black}{\ding{55}}}
\newcommand{\cmark}{\textcolor{green!55!black}{\ding{51}}}
\newcommand{\pmark}{\textcolor{gray}{$\sim$}}

\newcommand{\AntTmonoLone}{0.48}
\newcommand{\AntTmonoHwmLtwo}{0.52}
\newcommand{\AntTmonoHwmProj}{0.94}
\newcommand{\AntTmonoHjepaLtwo}{0.68}
\newcommand{\AntTmonoHjepaProj}{0.94}
\newcommand{\PushtTmonoLone}{0.75}
\newcommand{\PushtTmonoHwmLtwo}{0.77}
\newcommand{\PushtTmonoHwmProj}{0.85}
\newcommand{\PushtTmonoHjepaLtwo}{0.77}
\newcommand{\PushtTmonoHjepaProj}{0.89}
\newcommand{\OgbTmonoLone}{0.65}
\newcommand{\OgbTmonoHwmLtwo}{0.74}
\newcommand{\OgbTmonoHwmProj}{0.79}
\newcommand{\OgbTmonoHjepaLtwo}{1.00}
\newcommand{\OgbTmonoHjepaProj}{0.99}
\newcommand{\DistrTmonoLone}{0.56}
\newcommand{\DistrTmonoHwmLtwo}{0.62}
\newcommand{\DistrTmonoHwmProj}{0.92}
\newcommand{\DistrTmonoHjepaLtwo}{0.85}
\newcommand{\DistrTmonoHjepaProj}{1.00}

\newcommand{\AntDLTwoNative}{$23.3 \pm 3.3$}
\newcommand{\AntDLTwoLtwo}{$31.3 \pm 2.7$}
\newcommand{\AntDLThreeNative}{$16.7 \pm 1.8$}
\newcommand{\AntDLThreeLtwo}{$20.7 \pm 4.8$}
\newcommand{\AntDLThreeLthree}{$22.0 \pm 2.3$}
\newcommand{\AntDLFourNative}{$10.0 \pm 0.0$}
\newcommand{\AntDLFourLtwo}{$20.7 \pm 1.8$}
\newcommand{\AntDLFourLthree}{$21.3 \pm 4.7$}
\newcommand{\AntDLFourLfour}{$26.7 \pm 2.9$}
\newcommand{\DistrDLTwoNative}{$9.3 \pm 0.7$}
\newcommand{\DistrDLTwoLtwo}{$38.7 \pm 6.6$}
\newcommand{\DistrDLThreeNative}{$43.3 \pm 8.7$}
\newcommand{\DistrDLThreeLtwo}{$65.3 \pm 5.5$}
\newcommand{\DistrDLThreeLthree}{$71.3 \pm 4.1$}
\newcommand{\DistrDLFourNative}{$68.7 \pm 4.7$}
\newcommand{\DistrDLFourLtwo}{$72.0 \pm 3.1$}
\newcommand{\DistrDLFourLthree}{$66.0 \pm 3.1$}
\newcommand{\DistrDLFourLfour}{$70.7 \pm 2.7$}
\newcommand{\CubeDLTwoNative}{$24.0 \pm 1.2$}
\newcommand{\CubeDLTwoLtwo}{$13.3 \pm 0.7$}
\newcommand{\CubeDLThreeNative}{$24.7 \pm 1.3$}
\newcommand{\CubeDLThreeLtwo}{$28.0 \pm 4.2$}
\newcommand{\CubeDLThreeLthree}{$14.0 \pm 3.5$}
\newcommand{\CubeDLFourNative}{$26.7 \pm 2.7$}
\newcommand{\CubeDLFourLtwo}{$25.3 \pm 2.4$}
\newcommand{\CubeDLFourLthree}{$12.0 \pm 2.0$}
\newcommand{\CubeDLFourLfour}{$12.0 \pm 2.3$}
\newcommand{\PushtDLTwoNative}{$36.7 \pm 1.8$}
\newcommand{\PushtDLTwoLtwo}{$38.0 \pm 3.1$}
\newcommand{\PushtDLThreeNative}{$30.7 \pm 1.8$}
\newcommand{\PushtDLThreeLtwo}{$28.0 \pm 4.0$}
\newcommand{\PushtDLThreeLthree}{$24.7 \pm 4.1$}
\newcommand{\PushtDLFourNative}{$1.3 \pm 0.7$}
\newcommand{\PushtDLFourLtwo}{$1.3 \pm 0.7$}
\newcommand{\PushtDLFourLthree}{$0.7 \pm 0.7$}
\newcommand{\PushtDLFourLfour}{$1.3 \pm 1.3$}

\newcommand{\AntDLLewm}{$18.0 \pm 3.5$}
\newcommand{\AntDLTwoHier}{$39.3 \pm 3.7$}
\newcommand{\AntDLThreeHier}{$73.3 \pm 3.5$}
\newcommand{\AntDLFourHier}{$63.3 \pm 7.7$}
\newcommand{\DistrDLLewm}{$40.7 \pm 6.8$}
\newcommand{\DistrDLTwoHier}{$81.3 \pm 6.7$}
\newcommand{\DistrDLThreeHier}{$96.0 \pm 1.1$}
\newcommand{\DistrDLFourHier}{$80.0 \pm 4.0$}
\newcommand{\CubeDLLewm}{$32.0 \pm 6.1$}
\newcommand{\CubeDLTwoHier}{$47.3 \pm 2.7$}
\newcommand{\CubeDLThreeHier}{$60.0 \pm 5.0$}
\newcommand{\CubeDLFourHier}{$55.3 \pm 4.4$}
\newcommand{\PushtDLLewm}{$40.0 \pm 2.3$}
\newcommand{\PushtDLTwoHier}{$45.3 \pm 1.3$}
\newcommand{\PushtDLThreeHier}{$17.3 \pm 3.5$}
\newcommand{\PushtDLFourHier}{$0.7 \pm 0.7$}
\newcommand{\ImplRoomImage}{224}
\newcommand{\ImplRoomPatch}{14}
\newcommand{\ImplRoomAction}{2}
\newcommand{\ImplRoomLatent}{192}
\newcommand{\ImplRoomLayers}{4}
\newcommand{\ImplRoomHeads}{8}
\newcommand{\ImplRoomWidth}{192}
\newcommand{\ImplRoomMlp}{1024}
\newcommand{\ImplRoomLeWMLambda}{0.72}
\newcommand{\ImplRoomLambda}{0.18}
\newcommand{\ImplRoomEpochs}{7}
\newcommand{\ImplAntImage}{64}
\newcommand{\ImplAntPatch}{4}
\newcommand{\ImplAntAction}{8}
\newcommand{\ImplAntLatent}{384}
\newcommand{\ImplAntLayers}{5}
\newcommand{\ImplAntHeads}{12}
\newcommand{\ImplAntWidth}{224}
\newcommand{\ImplAntMlp}{1536}
\newcommand{\ImplAntLeWMLambda}{0.09}
\newcommand{\ImplAntLambda}{0.09}
\newcommand{\ImplAntEpochs}{5}
\newcommand{\ImplPushImage}{224}
\newcommand{\ImplPushPatch}{14}
\newcommand{\ImplPushAction}{2}
\newcommand{\ImplPushLatent}{192}
\newcommand{\ImplPushLayers}{6}
\newcommand{\ImplPushHeads}{16}
\newcommand{\ImplPushWidth}{192}
\newcommand{\ImplPushMlp}{2048}
\newcommand{\ImplPushLeWMLambda}{0.09}
\newcommand{\ImplPushLambda}{0.09}
\newcommand{\ImplPushEpochs}{9}
\newcommand{\ImplCubeImage}{224}
\newcommand{\ImplCubePatch}{14}
\newcommand{\ImplCubeAction}{5}
\newcommand{\ImplCubeLatent}{192}
\newcommand{\ImplCubeLayers}{6}
\newcommand{\ImplCubeHeads}{16}
\newcommand{\ImplCubeWidth}{192}
\newcommand{\ImplCubeMlp}{2048}
\newcommand{\ImplCubeLeWMLambda}{0.09}
\newcommand{\ImplCubeLambda}{0.09}
\newcommand{\ImplCubeEpochs}{8}
\newcommand{\ImplProprio}{27}
\newcommand{\ImplFramesOne}{4}
\newcommand{\ImplFramesTwo}{7}
\newcommand{\ImplFramesThree}{13}
\newcommand{\ImplFramesFour}{25}
\newcommand{\ImplSpanOne}{16}
\newcommand{\ImplSpanTwo}{31}
\newcommand{\ImplSpanThree}{61}
\newcommand{\ImplSpanFour}{121}
\newcommand{\ImplRoomClipsOne}{952{,}708}
\newcommand{\ImplRoomClipsTwo}{915{,}298}
\newcommand{\ImplRoomClipsThree}{840{,}478}
\newcommand{\ImplRoomClipsFour}{690{,}838}
\newcommand{\ImplRoomClipsTwoPct}{96}
\newcommand{\ImplRoomClipsThreePct}{88}
\newcommand{\ImplRoomClipsFourPct}{73}
\newcommand{\ImplAntClipsOne}{9{,}575{,}000}
\newcommand{\ImplAntClipsTwo}{9{,}200{,}000}
\newcommand{\ImplAntClipsThree}{8{,}450{,}000}
\newcommand{\ImplAntClipsFour}{6{,}950{,}000}
\newcommand{\ImplAntClipsTwoPct}{96}
\newcommand{\ImplAntClipsThreePct}{88}
\newcommand{\ImplAntClipsFourPct}{73}
\newcommand{\ImplPushClipsOne}{1{,}981{,}721}
\newcommand{\ImplPushClipsTwo}{1{,}701{,}446}
\newcommand{\ImplPushClipsThree}{1{,}143{,}623}
\newcommand{\ImplPushClipsFour}{285{,}931}
\newcommand{\ImplPushClipsTwoPct}{86}
\newcommand{\ImplPushClipsThreePct}{58}
\newcommand{\ImplPushClipsFourPct}{14}
\newcommand{\ImplCubeClipsOne}{1{,}783{,}600}
\newcommand{\ImplCubeClipsTwo}{1{,}636{,}600}
\newcommand{\ImplCubeClipsThree}{1{,}342{,}600}
\newcommand{\ImplCubeClipsFour}{754{,}600}
\newcommand{\ImplCubeClipsTwoPct}{92}
\newcommand{\ImplCubeClipsThreePct}{75}
\newcommand{\ImplCubeClipsFourPct}{42}
\newcommand{\PlanRoomBudget}{210}
\newcommand{\PlanRoomFlatLR}{0.1}
\newcommand{\PlanRoomHierLR}{0.08, 0.1}
\newcommand{\PlanRoomSamplesOne}{200}
\newcommand{\PlanRoomSamplesTwo}{100, 800}
\newcommand{\PlanRoomSamplesThree}{50, 200, 200}
\newcommand{\PlanRoomSamplesFour}{50, 200, 200, 400}
\newcommand{\PlanRoomHorizonsOne}{23}
\newcommand{\PlanRoomHorizonsTwo}{2, 12}
\newcommand{\PlanRoomHorizonsThree}{2, 2, 6}
\newcommand{\PlanRoomHorizonsFour}{2, 2, 2, 3}
\newcommand{\PlanAntBudget}{350}
\newcommand{\PlanAntFlatLR}{0.3}
\newcommand{\PlanAntHierLR}{0.06, 0.1}
\newcommand{\PlanAntSamplesOne}{100}
\newcommand{\PlanAntSamplesTwo}{25, 200}
\newcommand{\PlanAntSamplesThree}{100, 100, 400}
\newcommand{\PlanAntSamplesFour}{50, 200, 25, 50}
\newcommand{\PlanAntHorizonsOne}{23}
\newcommand{\PlanAntHorizonsTwo}{4, 9}
\newcommand{\PlanAntHorizonsThree}{4, 2, 5}
\newcommand{\PlanAntHorizonsFour}{3, 2, 2, 3}
\newcommand{\PlanPushBudget}{90}
\newcommand{\PlanPushFlatLR}{0.05}
\newcommand{\PlanPushHierLR}{0.1, 0.075}
\newcommand{\PlanPushSamplesOne}{800}
\newcommand{\PlanPushSamplesTwo}{50, 400}
\newcommand{\PlanPushSamplesThree}{25, 50, 200}
\newcommand{\PlanPushSamplesFour}{200, 800, 800, 800}
\newcommand{\PlanPushHorizonsOne}{15}
\newcommand{\PlanPushHorizonsTwo}{2, 8}
\newcommand{\PlanPushHorizonsThree}{2, 2, 4}
\newcommand{\PlanPushHorizonsFour}{2, 2, 2, 2}
\newcommand{\PlanCubeBudget}{30}
\newcommand{\PlanCubeFlatLR}{0.05}
\newcommand{\PlanCubeHierLR}{0.1, 0.075}
\newcommand{\PlanCubeSamplesOne}{800}
\newcommand{\PlanCubeSamplesTwo}{400, 100}
\newcommand{\PlanCubeSamplesThree}{25, 800, 800}
\newcommand{\PlanCubeSamplesFour}{200, 50, 800, 800}
\newcommand{\PlanCubeHorizonsOne}{4}
\newcommand{\PlanCubeHorizonsTwo}{2, 2}
\newcommand{\PlanCubeHorizonsThree}{2, 2, 1}
\newcommand{\PlanCubeHorizonsFour}{2, 2, 1, 1}

\title{H-JEPA: End-to-End Learning of Hierarchical \\ World Models for Visual Planning}

\author{%
\textbf{Wancong Zhang}\textsuperscript{1,2,}\thanks{Equal contribution; order determined by coin toss.}\qquad
\textbf{Basile Terver}\textsuperscript{2,3,}\footnotemark[1]\qquad
\textbf{Michael Rabbat}\textsuperscript{2} \\
\textbf{Yann LeCun}\textsuperscript{1,2,}\thanks{Equal advising.}\qquad
\textbf{Randall Balestriero}\textsuperscript{2,4,}\footnotemark[2] \\
\textsuperscript{1}NYU\quad
\textsuperscript{2}Advanced Machine Intelligence\quad
\textsuperscript{3}INRIA Paris\quad
\textsuperscript{4}Brown University
}
\iclrfinalcopy

\begin{document}
\maketitle
\fancyhead{}
\renewcommand{\headrulewidth}{0pt}
\vspace{-1em}
\centerline{\small
  \href{https://h-jepa.com/}{\faGlobe\enspace Website}\qquad
  \href{https://github.com/kevinghst/H-JEPA}{\faGithub\enspace Code}}
\vspace{1.25em}

\begin{abstract}
Long-horizon planning with latent world models requires reasoning across
timescales and levels of abstraction. Existing task-agnostic JEPA world models
predict and plan in a single latent space, often at a single timescale.
We introduce \textbf{H-JEPA}, an end-to-end recipe for training a
hierarchy of action-conditioned JEPAs in which each level predicts farther ahead in its
own learned latent space. Planning proceeds top-down: the top level
optimizes progress toward the goal, and each level's predictions become
subgoals for the planner below it. When factors in the data evolve at separated
timescales, higher levels discard fast, unpredictable detail and retain slower
task-relevant state. Across four simulated navigation and manipulation
environments, hierarchical planning improves over a flat JEPA;
on Visual AntMaze, a three-level hierarchy raises success from 18\% to 73\% using less planner compute. Ablations attribute these gains to both temporal decomposition
and higher-level goal representations. With inverse-dynamics supervision, the
approach extends to diverse real-robot videos from DROID, where hierarchy
improves offline planning fidelity at lower planner compute.

\end{abstract}

\begin{figure}[H]
\centering
\includegraphics[width=\linewidth,trim={0 14bp 0 0},clip]{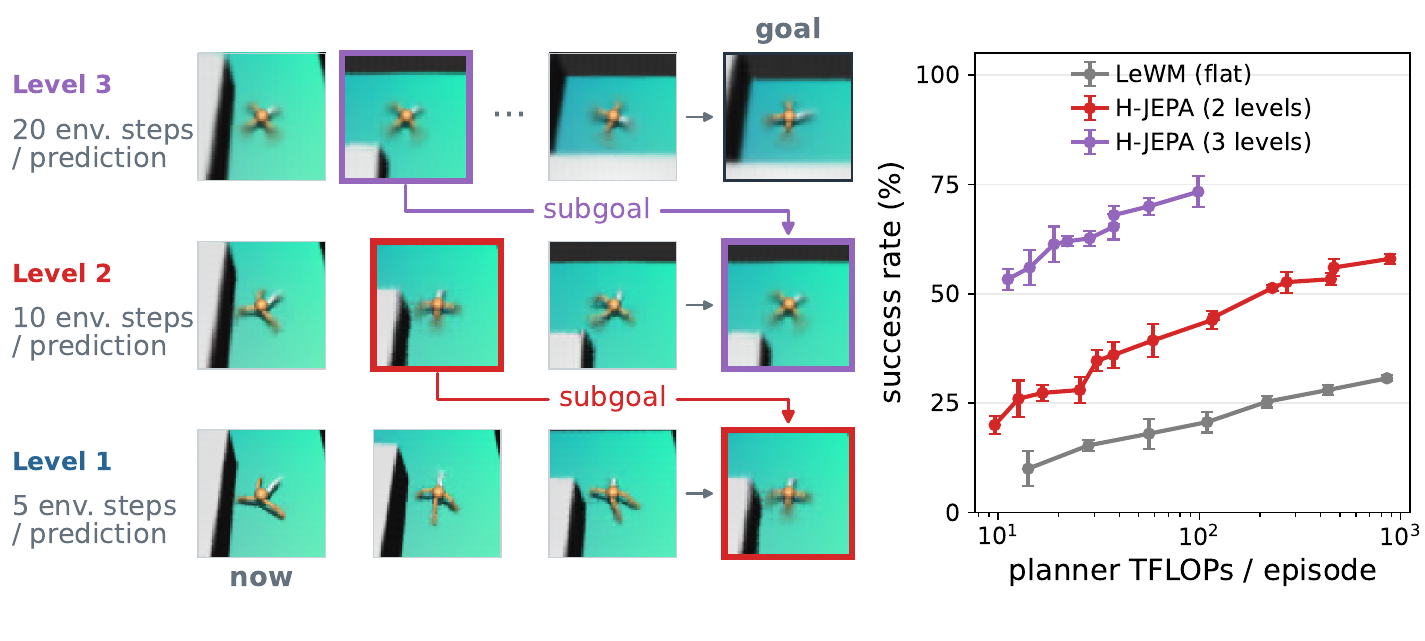}
\caption{\textbf{Hierarchical abstractions support planning at multiple timescales.}
\emph{Left:} visualization of hierarchical planning in Visual AntMaze. Level 3 plans toward
the final goal in the most abstract latent space. Its first predicted state becomes
a subgoal for the level-2 planner; this decomposition recurses to the level-1 planner, which
outputs raw actions.
The first and last columns decode each level's start and goal latents, respectively;
the middle columns decode its planned predictions. Higher-level latents retain
the ant's position and maze layout while losing leg-pose detail. See \S\ref{sec:method-planning} for the
formal planning procedure.
\emph{Right:} planning success versus planner compute on AntMaze. Curves show
Pareto fronts for flat LeWM~\citep{maes2026lewm} and two-/three-level H-JEPA; error bars are standard error (SE) over three paired
model and planner seeds (\S\ref{sec:planning-performance}).}
\label{fig:hierarchy-teaser}
\end{figure}

\section{Introduction}

World models learn the dynamics of an environment from experience so that an agent
can predict, understand, and plan~\citep{haschmidhuber2018worldmodels,hafner2019planet,hafner2020dreamer}.
Joint-Embedding Predictive Architectures (JEPAs) learn such models by predicting
masked \emph{latent} states rather than
pixels~\citep{lecun2022path,assran2023ijepa,bardes2024vjepa}, avoiding both reward
and pixel
reconstruction~\citep{balestriero2024reconstruction,nilaksh2026reconstructionsemantics}.
Recent action-conditioned, task-agnostic JEPA world models can plan zero-shot in
learned latent spaces~\citep{zhou2025dinowm,assran2025vjepa2,sobal2025pldm,maes2026lewm,terver2026whatdrives};
among these models, PLDM~\citep{sobal2025pldm} and LeWM~\citep{maes2026lewm}
learn the encoder and predictive dynamics jointly, end-to-end from pixels.
These models, however, predict and plan within a single latent space, often at a
single timescale.
This has two limitations. First, at a single timescale, long-horizon prediction
rolls out many fine-grained steps: prediction error compounds and the action search
space grows. Second, a
single shared latent space must support both low-level dynamics and goal
matching. A latent that carries every fast-varying detail may model low-level dynamics
well, but gives a poor goal-matching cost when the goal is more abstract
than the state (a location to reach rather than a pose to match;
\S\ref{sec:cost-surface}).

A hierarchical world model divides this work. Each higher level predicts over a
longer horizon and keeps in its latent space what remains predictable. Higher levels
operate on slow features over long strides and serve long-range prediction and
high-level planning; lower levels operate on fast features over short strides and
serve short-range prediction and low-level control. The brain has such hierarchical structure: cortical areas
form a hierarchy of intrinsic timescales, with slower-varying representations
higher up~\citep{hasson2008temporal,kiebel2008hierarchy,murray2014timescales,baldassano2017event}.
Hierarchical world models have been studied in video prediction and reward-driven
control (\S\ref{app:related-work}, Table~\ref{tab:hier-control}). Hierarchical latent
video models reconstruct pixels and are not used for
control~\citep{saxena2021clockwork,kim2019vta,mounir2023streamer}, while hierarchical
model-based reinforcement learning learns task-specific policies from
reward~\citep{hafner2022director,hansen2025puppeteer,gumbsch2024thick}.
Closest to our setting, HWM~\citep{zhang2026hierarchical} is a task-agnostic JEPA
world model that plans hierarchically, but its predictors over different horizons
share one latent space. It decomposes the horizon but cannot match objectives in
more abstract spaces.

We introduce \textbf{H-JEPA}, a hierarchical JEPA architecture in which each level
predicts the future within its own latent space, at a coarser temporal stride and a
higher degree of abstraction than the level below, without reward or reconstruction
objectives. Fig.~\ref{fig:hierarchy-teaser} shows the learned hierarchy and its
tradeoff between planner compute and success. We make four contributions:

\begin{enumerate}
  \item We introduce an end-to-end training method for hierarchical JEPA world models (\S\ref{sec:method}).

  \item We show that when factors in the data evolve at separated
  timescales, higher levels discard fast detail they cannot predict
  over their horizons and retain slower, predictable state
  (\S\ref{sec:learning-abstractions}).

  \item We show that H-JEPA's hierarchical planning outperforms
  single-level planning at a fraction of the test-time compute
  (\S\ref{sec:planning-performance}). We identify two complementary mechanisms:
  hierarchical latent spaces let higher-level planners score progress at
  the goal's own level of abstraction, while temporal decomposition breaks
  long-horizon tasks into easier subproblems
  (\S\ref{sec:cost-surface}).

  \item With an inverse-dynamics term, the method extends to DROID, a real-robot
  manipulation corpus whose scene, lighting and objects change every
  episode (\S\ref{sec:droid}).
\end{enumerate}

\section{Method}
\label{sec:method}

\subsection{Hierarchical predictive learning}
\label{sec:objective}

\begin{figure}[t]
\centering
\resizebox{\textwidth}{!}{%
\begin{tikzpicture}[
    x=1cm, y=1cm,
    font=\fontsize{15.20}{18.24}\selectfont,
    >={Stealth[length=4.5pt,width=3.5pt]},
    panel/.style={draw=hjepaborder, line width=0.6pt, rounded corners=5pt},
    objective/.style={draw=hjepacost!65, fill=hjepacostbg,
        text=hjepacost, rounded corners=3pt, line width=0.6pt,
        inner xsep=6pt, inner ysep=4pt},
    costflow/.style={flow,draw=hjepacost},
    flow/.style={->, draw=black!65, line width=0.65pt},
    encoder/.style={and gate US,logic gate inputs=nn,
        rotate=90,transform shape,xscale=0.90,yscale=1.17,
        draw=black!65,fill=black!3,line width=0.65pt,
        minimum width=1.08cm,minimum height=0.72cm,
        inner xsep=5pt,inner ysep=2pt},
    predictor/.style={and gate US,logic gate inputs=nn,
        draw=black!65,fill=black!3,line width=0.65pt,
        minimum width=1.13cm,minimum height=0.78cm,
        inner xsep=1pt,inner ysep=0pt},
    state/.style={text=hjepastate, inner sep=3pt},
    action/.style={text=hjepaaction, inner sep=3pt},
    heading/.style={text=hjepaheading,font=\sffamily\bfseries\fontsize{15.20}{18.24}\selectfont, anchor=west},
    note/.style={font=\sffamily\fontsize{13.85}{16.62}\selectfont, text=black!65, inner sep=2pt},
    subheading/.style={text=hjepaheading,font=\sffamily\fontsize{13.85}{16.62}\selectfont, text=black!80},
    level/.style={font=\sffamily\fontsize{12.50}{15.00}\selectfont, text=black!60, anchor=east},
]
\definecolor{hjepastate}{HTML}{286494}
\definecolor{hjepaaction}{HTML}{B76622}
\definecolor{hjepaborder}{HTML}{B5C6D2}
\definecolor{hjepaheader}{HTML}{EAF2F6}
\definecolor{hjepaheading}{HTML}{29495D}
\definecolor{hjepacost}{HTML}{77529C}
\definecolor{hjepacostbg}{HTML}{F5F1FA}

\def\encoderGap{0.38cm}
\path[use as bounding box] (-0.65,0.35) rectangle (21.55,9.10);
\begin{scope}[on background layer]
    \begin{scope}
        \clip[rounded corners=5pt] (-0.65,0.35) rectangle (11.25,9.10);
        \fill[hjepaheader] (-0.65,8.20) rectangle (11.25,9.10);
    \end{scope}
    \begin{scope}
        \clip[rounded corners=5pt] (11.70,0.35) rectangle (21.55,9.10);
        \fill[hjepaheader] (11.70,8.20) rectangle (21.55,9.10);
    \end{scope}
    \draw[panel] (-0.65,0.35) rectangle (11.25,9.10);
    \draw[panel] (11.70,0.35) rectangle (21.55,9.10);
    \draw[hjepaborder,line width=0.5pt] (-0.65,8.20) -- (11.25,8.20);
    \draw[hjepaborder,line width=0.5pt] (11.70,8.20) -- (21.55,8.20);
\end{scope}
\node[heading] (hierarchyheading) at (-0.25,8.65)
    {(a) Build hierarchy ($w_2=1$, $s_2=2$)};
\node[subheading] (stateheading) at (3.75,7.27) {State encoding};
\node[subheading] at (9.05,7.27) {Action encoding};
\node[heading] at (12.10,8.65) {(b) Train each level};
\node[note,anchor=west] at (12.10,7.47) {For each level $\ell=1,2$};

\node[predictor,anchor=north,yshift=-0.50cm] (predictor) at (15.20,0 |- stateheading.north)
    {$F^{(\ell)}$};

\foreach \x/\t/\columnopacity in {1.55/0/1,2.95/1/0.45,4.55/2/1,6.15/3/0.45} {
    \begin{scope}[opacity=\columnopacity]
    \node[state] (o\t) at (\x,0.90) {$o_{\t}$};
    \node[encoder,anchor=west] (e\t) at ([yshift=\encoderGap]o\t.north) {};
    \node at (e\t.center) {$E^{(1)}$};
    \node[state,anchor=south] (z\t) at ([yshift=\encoderGap]e\t.output) {$z^{(1)}_{\t}$};
    \draw[flow] (o\t.north) -- (e\t.west);
    \draw[flow] (e\t.output) -- (z\t.south);
    \end{scope}
}
\node[level] at (1.05,0 |- z0) {Level 1};
\node[encoder,anchor=west] (upper0) at ([yshift=\encoderGap+0.25cm]z0.north) {};
\node at (upper0.center) {$E^{(2)}$};
\node[state,anchor=south] (u0) at ([yshift=\encoderGap]upper0.output) {$z^{(2)}_0$};
\node[encoder,anchor=west] (upper1) at ([yshift=\encoderGap+0.25cm]z2.north) {};
\node at (upper1.center) {$E^{(2)}$};
\node[state,anchor=south] (u1) at ([yshift=\encoderGap]upper1.output) {$z^{(2)}_1$};
\node[level] at (1.05,0 |- u0) {Level 2};
\draw[flow] (z0.north) -- (upper0.west);
\draw[flow] (z2.north) -- (upper1.west);
\draw[flow] (upper0.output) -- (u0.south);
\draw[flow] (upper1.output) -- (u1.south);

\foreach \x/\t in {8.05/0,10.05/1} {
    \node[action] (rawa\t) at (\x,0.90) {$a_{\t}$};
    \node[encoder,anchor=west] (ae\t) at ([yshift=\encoderGap]rawa\t.north) {};
    \node at (ae\t.center) {$A^{(1)}$};
    \node[action,anchor=south] (az\t) at ([yshift=\encoderGap]ae\t.output) {$a^{(1)}_{\t}$};
    \draw[flow] (rawa\t.north) -- (ae\t.west);
    \draw[flow] (ae\t.output) -- (az\t.south);
}
\node[encoder] (actionupper) at (9.05,0 |- upper0.center) {};
\node at (actionupper.center) {$A^{(2)}$};
\node[action,anchor=south] (au) at ([yshift=\encoderGap]actionupper.output) {$a^{(2)}_0$};
\draw[flow] (az0.north) -- (actionupper.input 1);
\draw[flow] (az1.north) -- (actionupper.input 2);
\draw[flow] (actionupper.output) -- (au.south);

\node[state] (context) at (12.95,0 |- predictor.center) {$z^{(\ell)}_t$};
\node[state] (prediction) at (17.65,0 |- predictor.center) {$\hat z^{(\ell)}_{t+1}$};
\node[objective] (loss) at (20.20,0 |- predictor.center) {$\mathcal{L}^{(\ell)}_{\mathrm{pred}}$};
\node[state,below=0.35cm of loss] (target) {$z^{(\ell)}_{t+1}$};
\node[action,anchor=base] (conditioning) at (predictor.center |- target.base)
    {$a^{(\ell)}_t$};
\draw[flow] (context.east) -- (predictor.west);
\draw[flow] (conditioning.north) -- (predictor.south);
\draw[flow] (predictor.output) -- (prediction.west);
\draw[costflow] (prediction) -- (loss);
\draw[costflow] (target) -- (loss);
\node[note,align=center] at ([yshift=-0.90cm]context.center) {Context};
\node[note] at ([yshift=-0.90cm]prediction.center) {Prediction};
\node[note,align=center,below=0.12cm of target] (targetnote) {Target};
\node[note,anchor=west,text=hjepacost] (objectiveheading) at (12.10,2.30) {Per-level objective};
\node[objective,anchor=south,text=black] at (16.65,0.75)
    {$\displaystyle \textcolor{hjepacost}{\mathcal{L}^{(\ell)}}
      = \textcolor{hjepacost}{\mathcal{L}^{(\ell)}_{\mathrm{pred}}}
      + \lambda_\ell\,\mathrm{SIGReg}\!\left(
          \textcolor{hjepastate}{Z^{(\ell)}}\right)$};

\coordinate (regularizationcenter) at (17.10,3.75);
\node[note,anchor=west,align=left,inner ysep=0pt]
    at (12.10,0 |- regularizationcenter)
    {Regularize encoded states\\[0.28cm]
     {\fontsize{15.20}{18.24}\selectfont\rmfamily\color{black}$\mathrm{SIGReg}\!\left(
        \textcolor{hjepastate}{Z^{(\ell)}}\right),
      \qquad \textcolor{hjepastate}{Z^{(\ell)}}
      = \{\textcolor{hjepastate}{z^{(\ell)}_t}\}_{\text{batch, time}}$}};
\end{tikzpicture}%
}
\caption{\textbf{H-JEPA training across temporal scales.}
(a) With $w_2{=}1$ and $s_2{=}2$, level~2 encodes every other level-1 state
pointwise and aggregates the intervening action embeddings.
Faded columns are retained at level~1 but skipped at level~2.
(b) Each level predicts the next latent state conditioned on preceding latent
states and actions,
with SIGReg regularizing the states against collapse.
All levels are trained end-to-end,
with gradients from upper-level objectives propagating through lower-level encoders.}
\label{fig:hjepa_training}
\end{figure}


H-JEPA learns a hierarchy of latent predictive models from trajectories of
observations and actions.  The hierarchy is temporal: level 1 operates on the
finest observation stream, while each higher level consumes the latent states
produced by the level below over a coarser stride.  This gives a stack of JEPA
world models in which each level has its own observation encoder, action encoder, and
latent predictor. All levels predict future
latent states from past latent states and actions.

Let $o_t$ denote an observation and $a_t$ the action block for the transition
from $o_t$ to $o_{t+1}$.  The first level encodes the observation, optionally with
the proprioceptive state, into a latent state $z^{(1)}_t = E^{(1)}(o_t)$, and the
action block into $a^{(1)}_t = A^{(1)}(a_t)$.  Higher levels are built
compositionally: both state and action encoders pool temporal windows of latents from the level below.

Each level $\ell>1$ has two temporal hyperparameters.  The \emph{stride}
$s_\ell$ is the subsampling factor: upper-level time $t$ maps to lower-level time
$t\cdot s_\ell$, so consecutive upper-level states are $s_\ell$ lower-level steps
apart.  The \emph{window size} $w_\ell$ is the number of lower-level steps
each upper-level state summarizes. The two are independent: $s_\ell$ sets how
far each window advances and $w_\ell$ how far it spans.

With $a{:}b$ denoting the $b{-}a$ indices $a,\ldots,b{-}1$, the
level-$\ell$ state and action embeddings are
\begin{equation}
\label{eq:level-state}
z^{(\ell)}_t = E^{(\ell)}\!\left(z^{(\ell-1)}_{t\cdot s_\ell\,:\,t\cdot s_\ell+w_\ell}\right),
\qquad
a^{(\ell)}_t = A^{(\ell)}\!\left(a^{(\ell-1)}_{t\cdot s_\ell+w_\ell-1\,:\,(t+1)\cdot s_\ell+w_\ell-1}\right).
\end{equation}
The state encoder $E^{(\ell)}$ pools the $w_\ell$-step lower-level window into one
abstract state.  The action encoder $A^{(\ell)}$ instead aggregates
$s_\ell$ lower-level action embeddings, independent
of $w_\ell$, so $a^{(\ell)}_t$ covers the full transition from the state
anchoring $z^{(\ell)}_t$ toward the next upper-level state $z^{(\ell)}_{t+1}$.
All experiments use $w_\ell{=}1$, so upper-level state encoders are pointwise
(Fig.~\ref{fig:hjepa_training}).

Each level is trained as a JEPA in its own latent
space. Given a context of $c_\ell$ latent states
$(z^{(\ell)}_{t-c_\ell+1}, \ldots, z^{(\ell)}_t)$ and the associated action embeddings,
a predictor $F^{(\ell)}$ predicts future latent states.  One-step
training uses the teacher-forced latent prediction loss
\begin{equation}
\label{eq:pred-loss}
\mathcal{L}^{(\ell)}_{\mathrm{pred}}
= \frac{1}{c_\ell}
\sum_{\tau=1}^{c_\ell}
\left\|
F^{(\ell)}(z^{(\ell)}_{<\tau}, a^{(\ell)}_{<\tau}) - z^{(\ell)}_{\tau+1}
\right\|_2^2 .
\end{equation}

The per-level loss also includes SIGReg~\citep{balestriero2025lejepa}, a sketched
normality regularizer that prevents collapse by encouraging an isotropic
Gaussian embedding distribution (\S\ref{app:epps-pulley}). Let $Z^{(\ell)} \in \mathbb{R}^{n \times D}$
collect $n$ level-$\ell$ embeddings of width $D$, flattened over batch and time. The overall level-$\ell$ objective is
\begin{equation}
\label{eq:level-obj}
\mathcal{L}^{(\ell)}
= \mathcal{L}^{(\ell)}_{\mathrm{pred}}
+ \lambda_\ell \mathrm{SIGReg}\!\left(Z^{(\ell)}\right).
\end{equation}

\subsection{Hierarchical planning over learned abstractions}
\label{sec:method-planning}

\begin{figure}[t]
\centering
\resizebox{\textwidth}{!}{%
\begin{tikzpicture}[
    x=1cm,y=1cm,font=\fontsize{12.20}{14.64}\selectfont,
    >={Stealth[length=3.62pt,width=2.81pt]},
    panel/.style={draw=hjepaborder, line width=0.482pt, rounded corners=4.02pt},
    objective/.style={draw=hjepacost!65, fill=hjepacostbg,
        text=hjepacost, rounded corners=2.41pt, line width=0.482pt,
        inner xsep=4.82pt, inner ysep=3.22pt},
    costflow/.style={flow,draw=hjepacost},
    flow/.style={->,draw=black!65,line width=0.523pt},
    encoder/.style={and gate US,logic gate inputs=nn,
        rotate=90,transform shape,xscale=0.90,yscale=1.17,
        draw=black!65,fill=black!3,line width=0.523pt,
        minimum width=0.868cm,minimum height=0.579cm,
        inner xsep=4.02pt,inner ysep=1.61pt},
    predictor/.style={and gate US,logic gate inputs=nn,
        draw=black!65,fill=black!3,line width=0.523pt,
        minimum width=0.942cm,minimum height=0.650cm,
        inner xsep=0.804pt,inner ysep=0pt},
    state/.style={text=hjepastate,inner sep=3pt},
    action/.style={text=hjepaaction,inner sep=3pt},
    heading/.style={text=hjepaheading,font=\sffamily\bfseries\fontsize{12.20}{14.64}\selectfont,anchor=west},
    note/.style={font=\sffamily\fontsize{11.10}{13.32}\selectfont,text=black!65,inner sep=2pt},
    level/.style={font=\sffamily\fontsize{10.00}{12.00}\selectfont,text=black!60,anchor=east},
    omitted/.style={text=black!55,inner sep=3pt},
]
\definecolor{hjepastate}{HTML}{286494}
\definecolor{hjepaaction}{HTML}{B76622}
\definecolor{hjepaborder}{HTML}{B5C6D2}
\definecolor{hjepaheader}{HTML}{EAF2F6}
\definecolor{hjepaheading}{HTML}{29495D}
\definecolor{hjepacost}{HTML}{77529C}
\definecolor{hjepacostbg}{HTML}{F5F1FA}

\path[use as bounding box] (-0.75,-0.657) rectangle (17.10,7.064);
\begin{scope}[on background layer]
    \begin{scope}
        \clip[rounded corners=4.02pt] (-0.75,-0.657) rectangle (17.10,7.064);
        \fill[hjepaheader] (-0.75,6.264) rectangle (17.10,7.064);
    \end{scope}
    \draw[panel] (-0.75,-0.657) rectangle (17.10,7.064);
    \draw[hjepaborder,line width=0.402pt] (-0.75,6.264) -- (17.10,6.264);
\end{scope}
\node[heading] (title) at (-0.40,6.664) {Hierarchical planning};
\node[note,anchor=base west,text=hjepaheading] at ([xshift=0.35cm]title.base east)
    {Two-level example: subgoals $K_2=1$, stride $s_2=2$, window $w_2=1$};

\def\lowerEncoderGap{0.42cm}
\node[state] (zinitialone) at (1.45,2.10) {$z^{(1)}_0$};
\node[encoder,anchor=output] (einitialone)
    at ([yshift=-\lowerEncoderGap]zinitialone.south) {};
\node[state,anchor=north] (o0)
    at ([yshift=-\lowerEncoderGap]einitialone.west) {$o_0$};
\node at (einitialone.center) {$E^{(1)}$};
\coordinate (einitialonein) at ($(einitialone.input 1)!0.5!(einitialone.input 2)$);
\coordinate (einitialoneout) at (einitialone.output);
\node[encoder] (einitialtwo) at (1.45,3.85) {};
\node at (einitialtwo.center) {$E^{(2)}$};
\coordinate (einitialtwoin) at ($(einitialtwo.input 1)!0.5!(einitialtwo.input 2)$);
\coordinate (einitialtwoout) at (einitialtwo.output);
\node[state] (zinitialtwo) at (1.45,5.65) {$z^{(2)}_0$};
\draw[flow] (o0.north) -- (einitialone.west);
\draw[flow] (einitialone.output) -- (zinitialone.south);
\draw[flow] (zinitialone) -- (einitialtwoin);
\draw[flow] (einitialtwoout) -- (zinitialtwo);
\node[note,anchor=base east] (initiallabel) at ([xshift=-0.12cm]o0.base west) {Initial};

\node[state] (gone) at (16.25,2.10) {$g^{(1)}$};
\node[encoder,anchor=output] (egoalone)
    at ([yshift=-\lowerEncoderGap]gone.south) {};
\node[state,anchor=north] (og)
    at ([yshift=-\lowerEncoderGap-0.058cm]egoalone.west) {$o_g$};
\node at (egoalone.center) {$E^{(1)}$};
\coordinate (egoalonein) at ($(egoalone.input 1)!0.5!(egoalone.input 2)$);
\coordinate (egoaloneout) at (egoalone.output);
\node[encoder] (egoaltwo) at (16.25,3.85) {};
\node at (egoaltwo.center) {$E^{(2)}$};
\coordinate (egoaltwoin) at ($(egoaltwo.input 1)!0.5!(egoaltwo.input 2)$);
\coordinate (egoaltwoout) at (egoaltwo.output);
\node[state] (gtwo) at (16.25,5.65) {$g^{(2)}$};
\draw[flow] (og.north) -- (egoalone.west);
\draw[flow] (egoalone.output) -- (gone.south);
\draw[flow] (gone) -- (egoaltwoin);
\draw[flow] (egoaltwoout) -- (gtwo);
\node[note,anchor=base east] (goallabel) at ([xshift=-0.12cm]og.base west) {Goal};
\node[level] at (0.60,5.65) {Level 2};
\node[level] at (0.60,2.10) {Level 1};

\node[predictor] (ftwofirst) at (4.20,5.65) {};
\node at (ftwofirst.center) {$F^{(2)}$};
\node[state] (subgoal) at (7.00,5.65) {$\hat z^{(2)}_1$};
\node[omitted] (skipcoarse) at (8.45,5.65) {$\cdots$};
\node[predictor] (ftwolast) at (9.90,5.65) {};
\node at (ftwolast.center) {$F^{(2)}$};
\node[state] (terminal) at (11.95,5.65) {$\hat z^{(2)}_{H_2}$};
\node[objective] (costtwo) at (14.00,5.65) {$C^{(2)}$};
\draw[flow] (zinitialtwo) -- (ftwofirst.west);
\draw[flow] (ftwofirst.output) -- (subgoal);
\draw[flow] (subgoal) -- (skipcoarse);
\draw[flow] (skipcoarse) -- (ftwolast.west);
\draw[flow] (ftwolast.output) -- (terminal);
\draw[costflow] (terminal) -- (costtwo);
\draw[costflow] (gtwo) -- (costtwo);
\node[action,below=0.38cm of ftwofirst] (atwofirst) {$a^{(2)}_0$};
\node[action,below=0.38cm of ftwolast] (atwolast) {$a^{(2)}_{H_2-1}$};
\draw[flow] (atwofirst.north) -- (ftwofirst.south);
\draw[flow] (atwolast.north) -- (ftwolast.south);

\node[predictor] (fonefirst) at (2.85,2.10) {};
\node at (fonefirst.center) {$F^{(1)}$};
\node[state] (fineintermediate) at (4.225,2.10) {$\hat z^{(1)}_1$};
\node[predictor] (fonelast) at (5.60,2.10) {};
\node at (fonelast.center) {$F^{(1)}$};
\node[state] (fineterminal) at (7.00,2.10) {$\hat z^{(1)}_2$};
\draw[flow] (zinitialone) -- (fonefirst.west);
\draw[flow] (fonefirst.output) -- (fineintermediate);
\draw[flow] (fineintermediate) -- (fonelast.west);
\draw[flow] (fonelast.output) -- (fineterminal);
\foreach \suffix/\t in {first/0,last/1} {
    \node[action,below=0.38cm of fone\suffix] (aone\suffix) {$a^{(1)}_{\t}$};
    \draw[flow] (aone\suffix.north) -- (fone\suffix.south);
}

\node[encoder] (liftencoder) at (7.00,3.25) {};
\node at (liftencoder.center) {$E^{(2)}$};
\coordinate (liftencoderin) at ($(liftencoder.input 1)!0.5!(liftencoder.input 2)$);
\coordinate (liftencoderout) at (liftencoder.output);
\node[objective] (costone) at (7.00,4.45) {$C^{(1)}$};
\draw[costflow] (fineterminal) -- (liftencoderin);
\draw[costflow] (liftencoderout) -- (costone);
\draw[costflow,line width=0.804pt] (subgoal) -- (costone)
    node[midway,left=3pt,note,font=\sffamily\fontsize{11.10}{13.32}\selectfont\bfseries,text=hjepacost] {Subgoal};

\def\descriptionGap{0.30cm}
\node[anchor=south west,inner ysep=0pt] (finecosttext)
    at (8.55,-0.251)
    {$\textcolor{hjepacost}{C^{(1)}}=\bigl\|E^{(2)}(\textcolor{hjepastate}{\hat z^{(1)}_2})
        -\textcolor{hjepastate}{\hat z^{(2)}_1}\bigr\|_2^2$};
\node[note,anchor=south west,align=left,inner ysep=0pt] (finedescription)
    at ([yshift=0.12cm]finecosttext.north west)
    {Level 1: optimize $\textcolor{hjepaaction}{a^{(1)}_0,\,a^{(1)}_1}$\\first subgoal term shown};
\node[anchor=south west,inner ysep=0pt] (coarsecosttext)
    at ([yshift=\descriptionGap]finedescription.north west)
    {$\textcolor{hjepacost}{C^{(2)}}=\bigl\|\textcolor{hjepastate}{\hat z^{(2)}_{H_2}}
        -\textcolor{hjepastate}{g^{(2)}}\bigr\|_2^2$};
\node[note,anchor=south west,align=left,inner ysep=0pt]
    at ([yshift=0.12cm]coarsecosttext.north west)
    {Level 2: optimize $\textcolor{hjepaaction}{a^{(2)}_{0:H_2-1}}$\\to minimize goal cost};
\end{tikzpicture}%
}
\caption{\textbf{Planning runs top-down through the hierarchy, and the lower level
is steered by subgoals the upper level predicted.}  Two-level instance ($L{=}2$) of
\S\ref{sec:method-planning}.  Current and goal observations $o_0$ and $o_g$
are encoded bottom-up into initial states $z^{(1)}_0, z^{(2)}_0$ and goal states
$g^{(1)}, g^{(2)}$.  Level~2 optimizes macro-actions by unrolling $F^{(2)}$ and matching
its final prediction to $g^{(2)}$.  Its predicted states $\hat z^{(2)}_i$ become subgoals
for level~1, which optimizes primitive actions and lifts its predictions
through $E^{(2)}$ for comparison in level-2 latent space
(Eq.~\eqref{eq:lower-plan}).}
\label{fig:hierarchical_planning}
\end{figure}


H-JEPA plans top-down through the learned hierarchy
(Fig.~\ref{fig:hierarchical_planning}). The current and goal observations
are encoded at every level $\ell$, yielding initial latent states
$\{z^{(\ell)}_0\}_{\ell=1}^L$ and goal latent states
$\{g^{(\ell)}\}_{\ell=1}^L$, where $L$ is the number of levels.
The top level proposes a coarse plan to the goal. Each lower level plans
toward subgoals supplied by the predicted trajectory of the level above,
refining the plan into finer-scale transitions until level 1
produces primitive actions.
The top-level planner optimizes macro-actions to minimize the distance
between its final predicted state and the goal:
\begin{equation}
\label{eq:top-plan}
a^{(L),*}_{0:H_L-1}
= \arg\min_{a^{(L)}_{0:H_L-1}}
\left\|\hat{z}^{(L)}_{H_L}-g^{(L)}\right\|_2^2.
\end{equation}
At each level $\ell$, the predicted states
$(\hat{z}^{(\ell)}_1,\ldots,\hat{z}^{(\ell)}_{H_\ell})
= \mathrm{Roll}^{(\ell)}(z^{(\ell)}_0,a^{(\ell)}_{0:H_\ell-1})$
are obtained by autoregressively applying predictor $F^{(\ell)}$ from
$z^{(\ell)}_0$ under the candidate action sequence, with horizon $H_\ell$
measured in level-$\ell$ steps. A superscript $*$ marks the optimized
actions and their predicted states.

For each lower level $\ell<L$, the optimized rollout from the level above,
$(\hat{z}^{(\ell+1),*}_1,\ldots,\hat{z}^{(\ell+1),*}_{H_{\ell+1}})$,
supplies subgoals. The lower level optimizes its actions to match these
subgoals with its own predicted states, encoded into the upper-level latent
space by $E^{(\ell+1)}$.

We first choose the number of subgoals to match,
$1\le K_{\ell+1}\le H_{\ell+1}$, then set the lower-level horizon to cover
the corresponding encoder windows. With upper-level stride $s_{\ell+1}$
and window size $w_{\ell+1}$, this requires
$H_\ell=K_{\ell+1}s_{\ell+1}+w_{\ell+1}-1$. The encoded predictions are
\begin{equation}
\label{eq:lift}
\tilde{z}^{(\ell+1)}_i
= E^{(\ell+1)}\!\left(
\hat{z}^{(\ell)}_{i s_{\ell+1}\,:\,i s_{\ell+1}+w_{\ell+1}}\right),
\qquad i=1,\ldots,K_{\ell+1}.
\end{equation}
Each $\tilde{z}^{(\ell+1)}_i$ is aligned with its upper-level subgoal in time and latent space.
The lower level solves
\begin{equation}
\label{eq:lower-plan}
a^{(\ell),*}_{0:H_\ell-1}
= \arg\min_{a^{(\ell)}_{0:H_\ell-1}}
\left[
\left\|\tilde{z}^{(\ell+1)}_{K_{\ell+1}}-\hat{z}^{(\ell+1),*}_{K_{\ell+1}}\right\|_2^2
+ \beta\sum_{i=1}^{K_{\ell+1}-1}
\left\|\tilde{z}^{(\ell+1)}_i-\hat{z}^{(\ell+1),*}_i\right\|_2^2
\right],
\end{equation}
where $\beta\ge0$ weights each intermediate subgoal relative to the terminal
target. Closed-loop planning uses $K_{\ell+1}=1$, matching only the first
predicted subgoal; open-loop planning uses $K_{\ell+1}=H_{\ell+1}$, with
$\beta=0$ matching only the terminal prediction and $\beta>0$ also matching
intermediate subgoals.

We optimize action sequences at every level by gradient descent. Level 1 returns
primitive actions for execution in the environment. In closed-loop control, we execute an action prefix, encode the
new observation, and repeat hierarchical planning until the evaluation budget
of environment steps is exhausted.
Execution schedules and solver settings are in \S\ref{app:planning-eval}.

\section{Learning Hierarchical Abstractions}
\label{sec:learning-abstractions}

\begin{figure}[t]
\centering
\includegraphics[width=\linewidth]{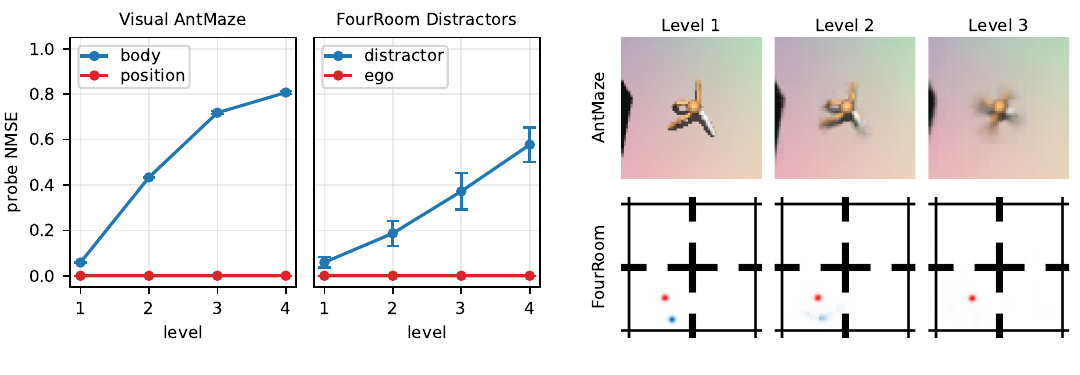}
\captionsetup{skip=-2pt}
\caption{\textbf{Selective abstraction across hierarchy levels.}
\emph{Left}: MLP-probe NMSE (mean $\pm$ SE over three training seeds) at
successive levels of four-level H-JEPAs on Visual AntMaze and
FourRoomDistractors. Body state or distractor position becomes less
recoverable at higher levels while agent position remains recoverable.
\emph{Right}: one observation decoded from
levels $1$--$3$; decoders serve only for visualization.
AntMaze's leg pose becomes less distinct; FourRoom's blue distractor fades
while the red agent remains visible.}
\label{fig:depth}
\end{figure}

\textbf{Environments.} We use FourRoomDistractors and Visual AntMaze for navigation,
and Push-T and OGBench Cube for manipulation.
FourRoomDistractors extends PLDM's two-room navigation
environment~\citep{sobal2025pldm} to four rooms. It pairs a controlled
agent with a distractor that moves independently and smoothly between occasional
random teleports, while Visual AntMaze requires a quadruped to navigate a maze.
Push-T involves pushing a T-shaped block, and OGBench
Cube involves picking up and placing a cube. In \S\ref{sec:droid}, we extend the
evaluation to DROID, a dataset of real-robot manipulation trajectories. Additional
details are in \S\ref{app:envs}.

\textbf{Setup.} We train H-JEPAs up to four levels end-to-end.
The level-1 JEPA is LeWM~\citep{maes2026lewm}: a ViT-Tiny encoder that
summarizes each image in a global representation through its \texttt{[CLS]} token,
and a causal transformer predictor. Trained on its own, this single-level model
is the flat baseline we compare against throughout.
Higher-level JEPAs use two-layer MLP encoders and causal transformer predictors,
with stride $s_\ell{=}2$ and window $w_\ell{=}1$, so a single prediction step at each
higher level spans twice as many environment timesteps as a step at the level below.
Within each environment, all levels share the same SIGReg coefficient. Additional
details in \S\ref{app:implementation}.

\textbf{Evaluation protocol.} We train two-layer MLP
probes on frozen representations and report variance-normalized MSE (NMSE),
averaged with equal weight over an entity's dimensions. A value near $1$ is the
error of predicting the marginal mean, while a value near $0$ indicates accurate
recovery. Quantitative results in this section are mean $\pm$ SE over three
training seeds; pixel decodings are qualitative examples from post-hoc decoders.
\S\ref{app:probing} details probe and decoder training.

\begin{wrapfigure}{r}{0.5\textwidth}
\centering
\includegraphics[width=\linewidth]{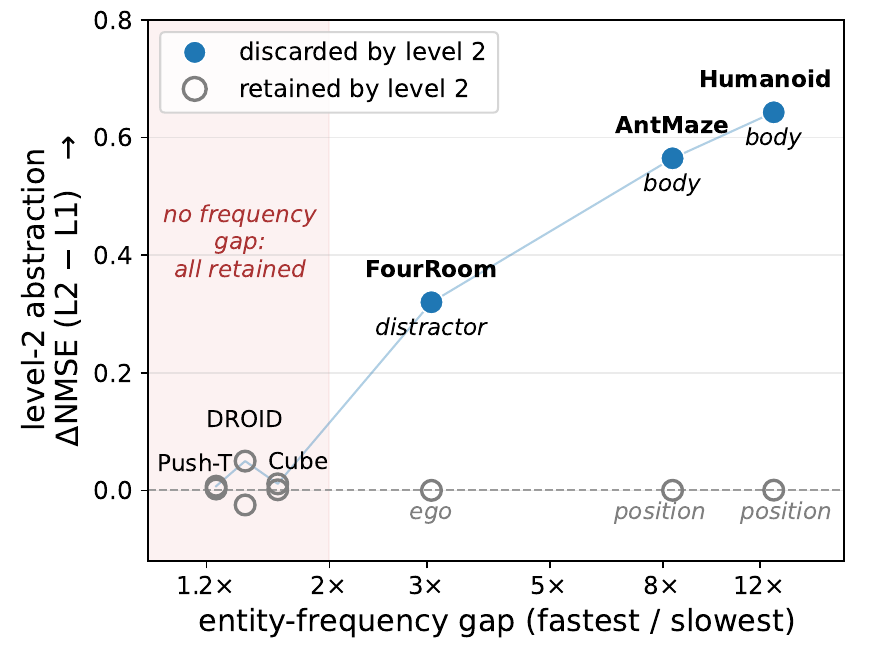}
\vspace{-2em}
\caption{\textbf{Larger temporal-frequency gaps correlate with stronger selective abstraction.}
The x-axis measures temporal separation in each training dataset:
the ratio of the characteristic frequencies (spectral centroids) of the fastest-
and slowest-varying state components, shown on a log scale.
The y-axis shows the change in per-entity
probing NMSE from level 1 to level 2;
positive values indicate poorer recoverability at level 2.
Across environments, larger frequency gaps coincide with greater increases
in fast-entity probing error (filled markers), while slow-entity error remains
near its level-1 value (open markers).
Environments with gaps below $2\times$ (shaded region) show little selective abstraction.}
\label{fig:selective}
\end{wrapfigure}

\textbf{Higher levels discard fast-varying features while retaining slow ones.} In
Fig.~\ref{fig:depth}, AntMaze's body state and FourRoom's distractor position become
less recoverable with depth, while the agent's position remains
accurately recoverable throughout both hierarchies. The discarded detail
is consistent with each level's prediction horizon: the FourRoom distractor's
motion is locally predictable between teleports, but longer horizons are more likely
to cross a random teleport, making its future position harder to predict.
In AntMaze, the ant's joint configurations oscillate many times faster than its
global position drifts (\S\ref{app:entity-traces}), so they are harder to
predict over long horizons. These trends are consistent with JEPA's joint
optimization of the encoder and predictor: prediction-error gradients flow through
the encoder, encouraging each level to retain features it can predict at its own
timescale and abstract away those it cannot. In these environments, H-JEPA thus
learns a hierarchy in which lower levels retain both fast- and slow-varying features,
while higher levels discard detail and preserve the
slow features.

Selective abstraction does not
emerge in every environment. We relate the change in per-entity probing error
from level 1 to level 2 to the training dataset's \emph{entity frequency gap}: the ratio of the
frequencies (spectral centroids) of the fastest- and slowest-varying state components, which define the fast and slow entities, respectively.
\S\ref{app:entity-traces} illustrates this separation in an AntMaze trajectory.
Across environments, larger gaps accompany greater loss of fast-entity
information at level 2, while slow-entity recoverability remains near its level-1
value (Fig.~\ref{fig:selective}). This separation appears in AntMaze, Humanoid,
and FourRoom; the manipulation datasets have smaller gaps and show little
selective abstraction. Gap measurements in \S\ref{app:selective-bars}.

\section{Planning with Hierarchical Abstractions}
\label{sec:long-horizon}

\begin{figure}[t]
\centering
\includegraphics[width=\linewidth]{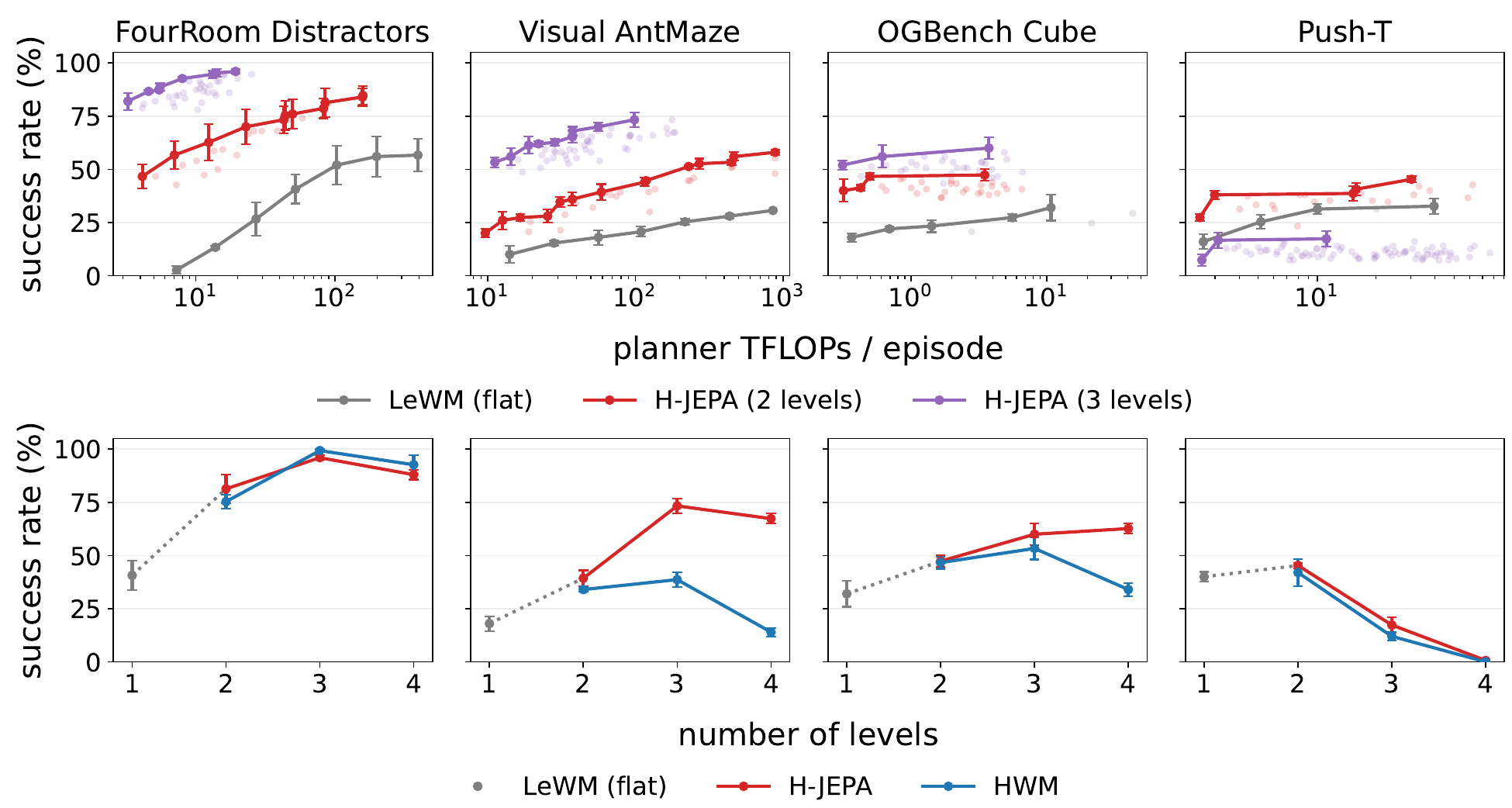}
\caption{\textbf{Gradient-based planning across compute budgets and hierarchy depths},
comparing LeWM~\citep{maes2026lewm}, H-JEPA, and the hierarchical world model baseline
HWM~\citep{zhang2026hierarchical}.
Error bars in both panels show one SE over three seeds.
\emph{Top:} success versus planner FLOPs per episode, sweeping action-sample counts
with other planner settings fixed. FLOPs include forward and backward passes. Bold curves are Pareto fronts.
\emph{Bottom:} success versus hierarchy depth on a separate set of $50$ tasks per
environment. Planner settings are selected under a $100$-TFLOP planning budget.
See \S\ref{app:planning-eval} for protocols and
Fig.~\ref{fig:compute-pareto-real} for the DROID results.}
\label{fig:compute-pareto}
\label{fig:depth-planning}
\end{figure}

Planning top-down through the learned hierarchy improves long-horizon control.
We first compare end-to-end H-JEPA with flat LeWM~\citep{maes2026lewm} across test-time compute
budgets and hierarchy depths, and with HWM~\citep{zhang2026hierarchical}, whose temporal
hierarchy is confined to one latent space. We then examine how learned abstractions
reshape the planning cost surface and how temporal decomposition breaks long-horizon
control into easier subproblems. Both evaluations use simulated environments
with a fixed camera and background. Finally, we scale the method to diverse scenes
with real teleoperated robot video from DROID. Task
construction and planner settings for all evaluations are in \S\ref{app:planning-eval}.

\subsection{Planning performance across compute budgets and depths}
\label{sec:planning-performance}

H-JEPA scales LeWM in depth:
\textbf{LeWM} is the single-level planner, and \textbf{H-JEPA} adds levels
above it, planning across distinct latent spaces with the top-down procedure of
\S\ref{sec:method-planning}. We test up to four levels, first examining
how planning success scales with test-time compute
(Fig.~\ref{fig:compute-pareto}, top). The sweep
varies the number of trajectories optimized by gradient descent at each
planning level, with other planner settings and tasks fixed within the sweep.
The depth comparisons and cost ablations use a separate evaluation task set
(\S\ref{app:planning-eval}). On FourRoom, AntMaze and
Cube, each additional level up to three shifts the Pareto frontier up and to the
left: higher success with less planner compute.
Push-T is the exception: two-level H-JEPA matches or exceeds
LeWM at every budget, but the three-level model performs poorly even with maximum compute. The likely cause is training
data. Clips must lie within one episode, and Push-T episodes
are short relative to the minimum number of clips required to train a three level model, so it sees only
$\ImplPushClipsThreePct\%$ of LeWM's transitions per epoch
(Table~\ref{tab:impl-data}); the other environments retain
$\ImplCubeClipsThreePct$--$\ImplRoomClipsThreePct\%$ at three levels.

\textbf{Distinct representations within a temporal hierarchy.}
We compare H-JEPA depth for depth with \textbf{HWM}~\citep{zhang2026hierarchical},
a hierarchical planner confined to one latent space, asking whether
distinct latent spaces add to the benefit of temporal
hierarchy alone. For a controlled comparison, HWM (blue curves in
Fig.~\ref{fig:depth-planning}, bottom) uses the same implementation, data and
training settings as H-JEPA at each depth, except that encoders above level~1
are identity maps. All levels therefore predict and plan in the same
latent space.
The improvement of H-JEPA over HWM is most pronounced on AntMaze,
where three-level H-JEPA reaches nearly twice
HWM's success.
On Cube, H-JEPA also scores higher at higher depths,
although more moderately relative to AntMaze.
On FourRoom and Push-T, both hierarchies perform similarly. Consistent with earlier findings,
both methods deteriorate on Push-T at three levels and beyond, likely because
short episodes limit usable training data: three- and four-level H-JEPA see only
$\ImplPushClipsThreePct\%$ and $\ImplPushClipsFourPct\%$ of LeWM's training
transitions per epoch, respectively (Table~\ref{tab:impl-data}).

HWM achieves higher mean success than LeWM in all four environments,
supporting the benefit of temporal decomposition even without a representational
hierarchy. H-JEPA provides further gains in some environments, particularly
substantial on AntMaze, suggesting that hierarchical representations offer
benefits beyond temporal decomposition alone. We next examine the mechanisms
through which temporal decomposition and hierarchical representations each
improve planning.

\Needspace{24\baselineskip}
\subsection{How hierarchy improves planning}
\label{sec:cost-surface}

\begin{figure}[t]
\begin{minipage}[t]{0.49\linewidth}
\vspace{0pt}
\centering
\footnotesize
\setlength{\tabcolsep}{1pt}
\begin{tabular}{@{}lccc@{}}
\toprule
& \multicolumn{3}{c}{Model depth} \\
\cmidrule(lr){2-4}
Cost / planner & 2 levels $\uparrow$ & 3 levels $\uparrow$ & 4 levels $\uparrow$ \\
\midrule
Native L1 & \AntDLTwoNative & \AntDLThreeNative & \AntDLFourNative \\
L2 projection & {\boldmath\AntDLTwoLtwo} & \AntDLThreeLtwo & \AntDLFourLtwo \\
L3 projection & -- & {\boldmath\AntDLThreeLthree} & \AntDLFourLthree \\
L4 projection & -- & -- & {\boldmath\AntDLFourLfour} \\
\midrule
Hierarchical & \AntDLTwoHier & \AntDLThreeHier & \AntDLFourHier \\
\bottomrule
\end{tabular}
\captionof{table}{\textbf{Changing only the cost on AntMaze.}
Success (\%, mean $\pm$ SE, three seeds).
Within each column, the first four rows share level-1 dynamics and planner
settings; only the cost space changes. Hierarchical planning is a separate reference.}
\label{tab:cost-ladder}
\end{minipage}\hfill
\begin{minipage}[t]{0.49\linewidth}
\vspace{0pt}
\centering
\includegraphics[width=\linewidth]{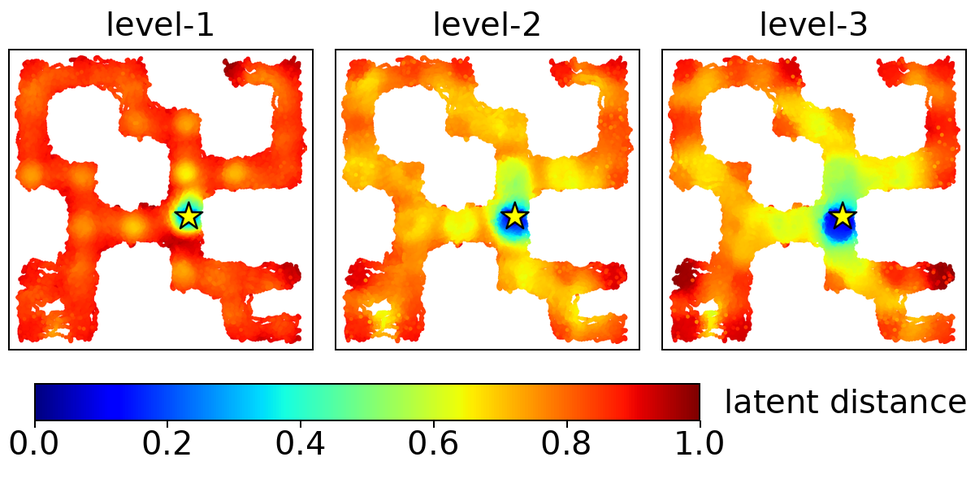}
\captionsetup{skip=6.06pt}
\caption{\textbf{Planning cost geometry on AntMaze.}
Distance to anchor $\star$ in each latent space of a three-level H-JEPA
(blue near, red far; one training seed).
Higher levels show more graded costs along corridors and a broader low-cost
region around the anchor.}
\label{fig:heatmaps-levels}
\end{minipage}
\end{figure}

Hierarchical planning changes two things at once: it decomposes the search in
time, and it scores candidate futures in a more abstract latent space. We
isolate the second factor, asking how much an objective in a higher, more
abstract level helps on its own, independent of temporal decomposition. We
focus on Visual AntMaze, where H-JEPA's substantial gains over HWM suggest
benefits from hierarchical representations beyond temporal decomposition alone
(Fig.~\ref{fig:depth-planning}, bottom).

\textbf{Planning costs across the hierarchy.} We visualize the cost geometry in
each level's latent space by mapping the latent distance from every state to a fixed
anchor (Fig.~\ref{fig:heatmaps-levels}). In a maze, a useful cost should correlate
with the length of the shortest path to the anchor, around the walls. In the
three-level model shown, level-1 distances are nearly constant away from the
anchor, providing little signal to guide a gradient planner until it is
close. At higher levels, costs rise smoothly along corridors from the low-cost
anchor toward distant states.
These results suggest that predicting over longer horizons both promotes
abstraction and gives higher levels a more informative cost toward distant goals.

\textbf{Changing only the cost.} To attribute planning gains to the cost geometry, we
perform flat planning with each model's level-1 world model, keeping rollout dynamics
fixed while varying only the latent space used to compute the goal cost.
The native L1 cost scores a rollout by the level-1 latent
distance $\lVert \hat{z}^{(1)} - g^{(1)}\rVert$ between the prediction and the goal.
The \emph{level-$k$ projection} maps both the prediction and the goal through the
upper encoders $E^{(2)}\circ\cdots\circ E^{(k)}$ of the same H-JEPA model before measuring
the distance; with window $w_\ell{=}1$ these encoders are pointwise, so a single
level-1 latent projects to every level. Each column of Table~\ref{tab:cost-ladder}
corresponds to a separately trained H-JEPA model with two, three or four levels,
evaluated on the tasks of Fig.~\ref{fig:depth-planning} (bottom). Within each
column, the cost-space rows share the same level-1 world model and planner
settings, varying only the latent space used to measure the goal cost. Each
model's full hierarchical planner serves as a reference.

For every tested multilevel H-JEPA, at least one
upper-level cost improves planning success over the native level-1 cost
(Table~\ref{tab:cost-ladder}). Gains are largest where native-space planning
performs poorly. The best projected level-1 planner also exceeds the
separately trained LeWM baseline in mean success at every tested depth on AntMaze
(LeWM: \AntDLLewm\%; Fig.~\ref{fig:depth-planning}, bottom).
These gains require only level-1 planning: the upper-level encoders define the
cost, but no upper-level planner generates subgoals. Learning a hierarchy of
representations can thus help planning simply by providing a more abstract space
in which to measure distance to the goal, before any temporal decomposition.
We extend this ablation to FourRoom, Push-T and Cube in
\S\ref{sec:negative-control} (Table~\ref{tab:cost-ladder-appendix}).
On FourRoom, abstract goal costs improve flat planning even though H-JEPA and
HWM perform similarly. On Push-T and Cube, where we observe no selective
abstraction, projected costs provide no clear benefit.

\begin{figure}[t]
\centering
\includegraphics[width=\linewidth]{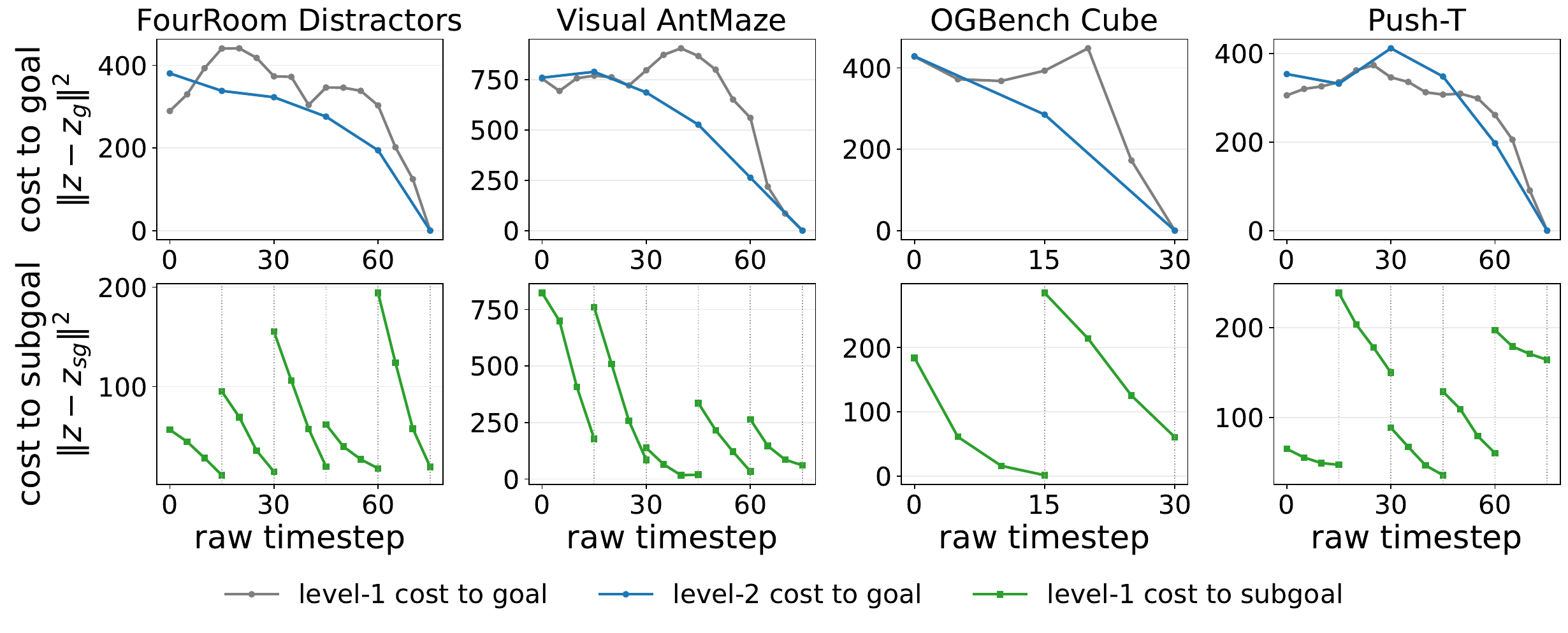}
\caption{\textbf{Temporal decomposition of planning costs.} We analyze a two-level H-JEPA with $s_2=3$. \emph{Top:} \textbf{Goal cost in level-1 and level-2 space.} For a held-out expert trajectory we encode each frame at both levels,
and compute the distance between each timestep's latent $z^{(\ell)}_t$ and the
goal (final-timestep) latent, $\|z^{(\ell)}_t-g^{(\ell)}\|_2^2$. At level 2 we subsample
at the training stride. \emph{Bottom:} \textbf{Subgoal
tracking cost.} The level-2 subgoals are the level-2 latents at the subsampled
timesteps. Within each subgoal segment we re-anchor the level-1 rollout from the encoded latent at the segment start, unroll it to the next
subgoal, project it into level-2 space, and measure its distance to that subgoal. Within each segment, the cost decreases monotonically toward the next subgoal.}
\label{fig:tmono-instance}
\end{figure}

\textbf{Temporal decomposition.}
Independently of shaping the planning objective through higher-level
representations, hierarchy helps by decomposing a long-horizon task into
subgoal problems. When each lower-level planner tracks only a prefix of the
upper-level plan ($K_{\ell+1}<H_{\ell+1}$), these subproblems require fewer sequential rollout steps per candidate
and limit the action search to shorter horizons. This reduces compute,
consistent with the improved success--compute tradeoffs in
Fig.~\ref{fig:compute-pareto} (top).
Second, a flat planner's full-horizon goal cost can stall or increase even along
expert trajectories, providing an inconsistent measure of progress toward the
goal (Fig.~\ref{fig:tmono-instance}).
Across all four simulated environments, the level-1 planner's costs for tracking
level-2 subgoals decrease more consistently than its full-horizon goal costs:
for H-JEPA, monotonicity (negative Spearman correlation between cost and time)
ranges from $\PushtTmonoHjepaProj$--$\DistrTmonoHjepaProj$ for subgoal tracking,
compared with $\AntTmonoLone$--$\PushtTmonoLone$ for the full-horizon goal cost
(Table~\ref{tab:tmono} in \S\ref{app:subgoal-monotonicity}).
Both HWM and H-JEPA exhibit this improvement, consistent with temporal
decomposition contributing to HWM's planning gains over LeWM even without
distinct abstract representations (Fig.~\ref{fig:depth-planning}, bottom).

\subsection{Scaling to diverse scenes: DROID}
\label{sec:droid}
We now extend H-JEPA to natural scenes that change across episodes. Once the scene varies across episodes, the background becomes the easily predictable part of the signal compared to the robot, so a purely predictive objective is pushed to encode the scene and discard the agent. To address this, we add an Inverse-Dynamics Modeling (IDM) loss term to the objective.

\paragraph{Environments and evaluation.}
In this section we use
DROID~\citep{khazatsky2024droid}, a dataset of real teleoperated manipulation videos whose scene, lighting, and manipulated objects all change between episodes. Details about data are in \Cref{app:envs}.
DROID has no simulator, so we score plans offline with the
\emph{Fr\'echet fidelity}, which measures how closely a plan's three-dimensional end-effector
path follows the expert's. Let $p$ and $\tilde p$ be the cumulative $xyz$ paths of the
planned and the expert delta-pose sequences, let $d_F$ be
the discrete Fr\'echet distance in meters, and let $\mathbf{0}$ be the zero-action path:
\begin{equation}
\label{eq:skill}
\text{fidelity} = \frac{d_F(\mathbf{0}, \tilde{p}) - d_F(p, \tilde{p})}{d_F(\mathbf{0}, \tilde{p})}.
\end{equation}
Thus, $0\%$ is satisfied by the arm not moving, $100\%$ is the expert path, and negative values are worse than
doing nothing. Scoring the whole path, rather than its endpoint, is why it is relevant for the non-greedy pick-and-place tasks we evaluate. Planners and budgets are in
\S\ref{app:planning-eval}, DROID clips and protocol in \S\ref{app:protocol}, and
calibration of this metric against success in simulation, where it ranks graded task
progress, in \S\ref{app:frechet}.

\paragraph{Inverse dynamics loss avoids slow-feature collapse.}
Against a non-stationary background a purely predictive objective admits a slow-feature
solution in which the encoder keeps the scene identity, which is predictable over the horizon, and discards the moving agent, which is only predictable with actions. A simple construction in \Cref{app:slowfeature} shows that this solution is always feasible when the regularizer sees only the marginal law of the embeddings, a failure mode related to slow-feature analysis~\citep{wiskott2002slow}.
To avoid such collapse, a loss has to act on the joint distribution of representations
across time. Following the use of inverse dynamics in
PLDM~\citep{sobal2025pldm} and related JEPA world models~\citep{terver2026ebjepa,sobal2022slowfeatures},
we add a third term
to the level objective of \S\ref{sec:objective}: an inverse-dynamics model that must regress, from a pair of consecutive state representations, the action relating them.
With $d_a$ the action dimension and $\operatorname{sg}[\cdot]$ the stop-gradient, the
level-$\ell$ IDM head $I^{(\ell)}$ gives the loss term
\begin{equation}
\label{eq:idm-loss}
\mathcal{L}^{(\ell)}_{\mathrm{idm}}
= \frac{1}{T-1}\sum_{t=1}^{T-1}
\frac{1}{d_a}\left\|
I^{(\ell)}\!\left(z^{(\ell)}_t,\, z^{(\ell)}_{t+1}\right)
- \operatorname{sg}\!\left[a^{(\ell)}_t\right]
\right\|_2^2 ,
\end{equation}
gets added to the level objective of Eq.~\eqref{eq:level-obj}.
The term follows that prior work but, to our knowledge, has not been paired with SIGReg, so we cross-validate both weights jointly (level-1 panel of Fig.~\ref{fig:crossval-l1grid-cls}).
At $\ell>1$ the regressed target is the
pooled macro-action $a^{(\ell)}_t$ of Eq.~\eqref{eq:level-state}, not a primitive action, so it is detached and the gradient reaches the encoder only through the two states.

\begin{figure}[t]
\centering
\newsavebox{\cornerbox}%
\newcommand{\cornerlabel}[2]{\sbox{\cornerbox}{#2}%
  \usebox{\cornerbox}\hspace{-\wd\cornerbox}%
  \rlap{\raisebox{\dimexpr\ht\cornerbox-1.6ex\relax}[0pt][0pt]{\small(#1)}}%
  \hspace{\wd\cornerbox}}%
\begin{subfigure}[c]{0.56\linewidth}
  \centering
  \phantomsubcaption\label{fig:scene-diversity}%
  \cornerlabel{a}{\includegraphics[width=\linewidth]{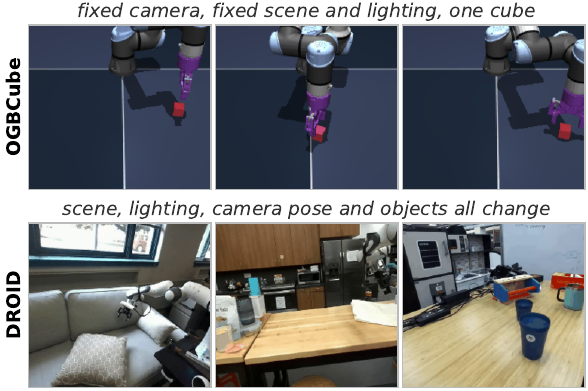}}
\end{subfigure}\hspace{0.012\linewidth}%
\begin{minipage}[c]{0.41\linewidth}
  \makebox[0.11905\linewidth][c]{%
    \raisebox{-0.5\height}{\rotatebox{90}{\fontsize{8}{9.6}\selectfont
      Fr\'echet fidelity (\%) $\rightarrow$}}}%
  \begin{minipage}[c]{0.88095\linewidth}
    \begin{subfigure}[t]{\linewidth}
      \raggedright
      \phantomsubcaption\label{fig:cls-ladder-droid}%
      \hspace*{0.01652\linewidth}%
      \cornerlabel{b}{\includegraphics[trim=34 0 0 0, clip,
        width=0.95764\linewidth]{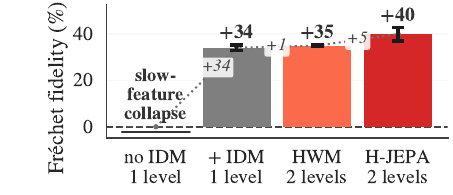}}
    \end{subfigure}

    \vspace{0.6em}

    \begin{subfigure}[t]{\linewidth}
      \raggedright
      \phantomsubcaption\label{fig:compute-pareto-real}%
      \cornerlabel{c}{\includegraphics[width=\linewidth]{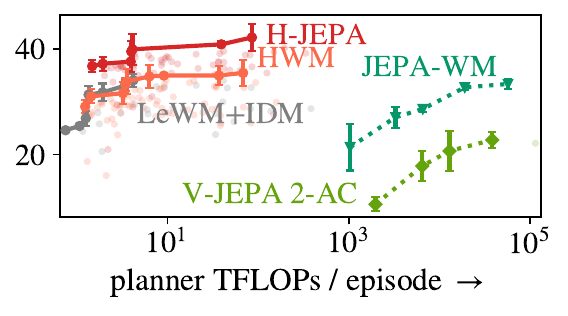}}
    \end{subfigure}
  \end{minipage}
\end{minipage}
\caption{\textbf{Scene diversity makes the inverse-dynamics loss necessary, and a second
level adds performance at a lower planner budget.} \emph{(a)} First frames of three Cube
and three DROID clips, showing scene variation across episodes.
\emph{(b)} DROID at $5$\,fps, Fr\'echet fidelity (\%), showing the slow-feature-collapsed
LeWM, LeWM+IDM, then two hierarchies: HWM, whose level~2 is an identity encoder, and
H-JEPA, whose level~2 is learned. \emph{(c)} The one-level and two-level models of
\emph{(b)} swept over planner budgets, Fr\'echet fidelity against measured planner TFLOPs
per episode on a log axis, the bold line being the Pareto front. Two frozen-encoder world
models are shown for reference: they are public checkpoints, evaluated at their native
$1$\,fps on the $15$ clips whose window fits at that rate, so their intervals span three
planner seeds. All other intervals are SE over three train seeds.}
\label{fig:scene-diversity-recipe}
\end{figure}

\paragraph{Hierarchical planning results.}
Without the loss term of
Eq.~\eqref{eq:idm-loss} the model collapses and performance is zero, as in Fig.~\ref{fig:cls-ladder-droid} and Fig.~\ref{fig:anticollapse}.
OGBench Cube escapes it because its scene barely varies: latent variance is $75\%$ \emph{between} episodes on DROID but only $28\%$ on Cube (\S\ref{app:varcomp}), so diverse scenes pull the encoder's capacity toward scene identity.
Fig.~\ref{fig:cls-ladder-droid} exhibits that adding IDM makes LeWM a strong baseline, and that a second level adds performance on top of it. Both
the representational lever and the temporal lever of \Cref{sec:cost-surface} are active on DROID, as shown by the clear advantage of H-JEPA over HWM. Finally, Fig.~\ref{fig:compute-pareto-real} shows, like in the single-scene datasets of
\Cref{sec:planning-performance}, that H-JEPA reaches higher performance at a lower
planning budget than the flat model. The frozen-encoder baselines, shown for reference, need orders of magnitude more compute for a lower fidelity.

\section{Related Work}
\label{sec:related}

Action-conditioned JEPA world models plan in learned latent spaces without
pixel reconstruction or reward supervision. DINO-WM~\citep{zhou2025dinowm}
learns dynamics over frozen pretrained features, V-JEPA\,2~\citep{assran2025vjepa2}
adds action-conditioned prediction after video pretraining, and
PLDM~\citep{sobal2025pldm} and LeWM~\citep{maes2026lewm} train the encoder and
predictor jointly from pixels. H-JEPA extends this end-to-end setting to a
hierarchy of latent spaces and prediction timescales, using LeJEPA's SIGReg
regularizer~\citep{balestriero2025lejepa} at every level.

Hierarchical video models learn representations at several timescales but
reconstruct observations and do not plan~\citep{saxena2021clockwork,kim2019vta,mounir2023streamer},
while hierarchical model-based control methods are reward-driven and
task-specific~\citep{hafner2022director,hansen2025puppeteer,gumbsch2024thick}.
Closest to our setting, HWM~\citep{zhang2026hierarchical} trains latent world
models at multiple temporal horizons from reward-free offline data and plans
hierarchically, but within a shared latent space.
H-JEPA additionally learns a distinct representation at each level, so that
higher-level objectives can discard detail that lower-level dynamics retain.
\S\ref{app:related-work} gives the extended survey and
Table~\ref{tab:hier-control} compares hierarchical control methods.

\section{Conclusion}
\label{sec:conclusion}

We presented H-JEPA, an end-to-end method for training a hierarchy of
action-conditioned JEPAs, each predicting in its own latent space over
progressively longer horizons. Higher levels discard fast features that are
unpredictable over their horizon and retain slow, predictable ones. Hierarchical planning
with H-JEPA outperforms flat planning at lower planner compute by scoring goals in a more
abstract latent space and decomposing long tasks into shorter subgoal problems.
With inverse-dynamics supervision, H-JEPA also improves offline, open-loop planning
fidelity on diverse real-robot videos from DROID. A next step is to test whether
these gains translate to closed-loop control on a physical robot.

Broader directions include learning hierarchies from action-free video and other
high-dimensional temporal streams; aligning upper-level latents with language so
that goals can be specified in words rather than as target observations; and training
upper levels on variable-duration segments instead of fixed strides, which may yield better
representations or subgoals for hierarchical planning.

\section*{Acknowledgments}

We thank Artem Zholus for his advice on robotic data and for setting up custom
RoboCasa tasks such as ObstacleReach; Raktim Goswami for generating alternative
RoboCasa data used in exploratory experiments; Yilun Kuang for providing access to
additional computational resources and helping run experiments; and Jean Ponce
for his valuable advice throughout the project.

\newpage

\setlength{\bibsep}{0.4ex plus 0.2ex}

\clearpage
\beginappendix

\startcontents[appendix]
\printcontents[appendix]{}{1}{\setcounter{tocdepth}{1}}
\clearpage

\section{Notation}
\label{app:notation}

\begingroup
\small
\setlength{\tabcolsep}{6pt}
\renewcommand{\arraystretch}{0.83}%
\begin{table}[H]
\centering
\caption{\textbf{Notation reference.} Symbols are introduced in \S\ref{sec:method} unless the row names another section.
Level superscripts are always parenthesized, $(\ell)$; a bare subscript is a time index and
$a{:}b$ is a half-open range. Appendix-local symbols are defined where they are used.}\label{tab:notation}
\begin{tabular}{@{}l p{0.78\textwidth}@{}}
\toprule
\textbf{Symbol} & \textbf{Description} \\
\midrule
\multicolumn{2}{@{}l}{\emph{Levels and indices}} \\
$\ell$, $L$ & Hierarchy level index, $\ell = 1,\ldots,L$; level $1$ is the finest \\
$t$ & Discrete time index, at the level's own rate \\
$a{:}b$ & Half-open slice: the $b{-}a$ indices $a,\ldots,b{-}1$ \\
\addlinespace[2pt]
\multicolumn{2}{@{}l}{\emph{Observations, latents and actions}} \\
$o_t$ & Image observation at time $t$ \\
$a_t$ & Primitive action block for the transition $o_t \to o_{t+1}$ \\
$z^{(\ell)}_t$ & Level-$\ell$ latent state; $z^{(1)}_t = E^{(1)}(o_t)$ \\
$\hat{z}^{(\ell)}_{1:H_\ell}$ & Action-conditioned autoregressive rollout at level $\ell$ \\
$\hat{z}^{(\ell+1),*}$ & Optimized rollout produced by the level above; $*$ marks an optimized quantity \\
$a^{(\ell)}_t$ & Level-$\ell$ action embedding (a macro-action for $\ell > 1$) \\
$a^{(\ell),*}_{0:H_\ell-1}$ & Optimized action sequence returned by the level-$\ell$ planner \\
$g^{(\ell)}$ & Goal state at level $\ell$, obtained by encoding the goal observation \\
$Z^{(\ell)}$ & Minibatch of level-$\ell$ embeddings, $Z^{(\ell)} \in \mathbb{R}^{n \times D}$: $n$ rows of width $D$ \\
$N$, $T$ & Sequences and timesteps of a minibatch of embeddings \\
$D$, $W$ & Embedding width, and the number of data-parallel ranks (\S\ref{app:sigreg-ddp}) \\
\addlinespace[2pt]
\multicolumn{2}{@{}l}{\emph{Learned components}} \\
$E^{(\ell)}$ & State encoder at level $\ell$; pools a $w_\ell$-step lower-level window \\
$A^{(\ell)}$ & Action encoder at level $\ell$; pools an $s_\ell$-step lower-level window for $\ell > 1$ \\
$F^{(\ell)}$, $I^{(\ell)}$ & Latent predictor and inverse-dynamics head at level $\ell$ \\
$\mathrm{Roll}^{(\ell)}$ & Autoregressive rollout operator induced by $F^{(\ell)}$ \\
\addlinespace[2pt]
\multicolumn{2}{@{}l}{\emph{Training objective}} \\
$\mathcal{L}^{(\ell)}$, $\mathcal{L}^{(\ell)}_{\mathrm{pred}}$, $\mathcal{L}^{(\ell)}_{\mathrm{idm}}$ & Total level-$\ell$ objective and its prediction and inverse-dynamics terms \\
$\mathrm{SIGReg}(Z^{(\ell)})$ & Sketched normality (anti-collapse) regularizer \\
$u_m$, $h_m$ & Unit direction $u_m \in \mathbb{S}^{D-1}$ and its projection $h_m = Z^{(\ell)} u_m$ (\S\ref{app:epps-pulley}) \\
$M$ & Number of random directions drawn by SIGReg (\S\ref{app:epps-pulley}) \\
$\mathcal{T}_{\mathrm{EP}}$ & Epps--Pulley normality statistic (\S\ref{app:epps-pulley}) \\
\addlinespace[2pt]
\multicolumn{2}{@{}l}{\emph{Planning}} \\
$H_\ell$ & Planning horizon at level $\ell$, in level-$\ell$ steps \\
$C^{(\ell)}(z,z')$ & Pairwise cost paid by the level-$\ell$ planner; squared Euclidean distance $\|z-z'\|_2^2$ within one latent space \\
$\tilde{z}^{(\ell+1)}_i$ & Level-$\ell$ rollout lifted into level-$(\ell{+}1)$ space by $E^{(\ell+1)}$ \\
$K_{\ell+1}$ & Number of subgoal terms in the level-$\ell$ cost; pinned by the lift, $K_{\ell+1} = \lfloor H_\ell/s_{\ell+1}\rfloor \le H_{\ell+1}$ \\
$g^{(\ell+1)}_i$ & The $i$-th subgoal handed to level $\ell$; the last is anchored, $g^{(\ell+1)}_{K_{\ell+1}} = g^{(\ell+1)}$ \\
$\beta$ & Weight of one subgoal term relative to the terminal term, Eq.~\eqref{eq:lower-plan} \\
$t_j$, $\xi_j$ & Replanning times, and primitive commands committed at phase $j$ (the commit schedule, \S\ref{app:planning-eval}) \\
$S$, $q$, $R$ & Planner candidates per level, scored final steps of a rollout, and level-1 blocks executed per replan (\S\ref{app:planning-eval}) \\
\addlinespace[2pt]
\multicolumn{2}{@{}l}{\emph{Architecture and loss hyperparameters}} \\
$s_\ell$, $w_\ell$ & Temporal stride and window size from level $\ell{-}1$ to level $\ell$: each level-$\ell$ state spans $w_\ell$ lower-level states, taken every $s_\ell$ \\
$c_\ell$ & Context length, in level-$\ell$ latent states \\
$\lambda_\ell$, $\gamma_\ell$ & SIGReg and inverse-dynamics weights at level $\ell$ \\
\bottomrule
\end{tabular}
\end{table}
\endgroup
\clearpage

\section{Environments and datasets}
\label{app:envs}

We evaluate on two navigation environments (FourRoomDistractors and Visual AntMaze), two
simulated manipulation environments (Push-T and OGBench Cube) and the real-robot DROID
corpus.
Push-T and Cube follow the LeWM setup~\citep{maes2026lewm} in environment and
dataset; FourRoomDistractors, Visual AntMaze and their datasets are ours; DROID is a
public release. Observations are RGB, $224{\times}224$ except
Visual AntMaze at $64{\times}64$ and DROID at $256{\times}256$;
actions are continuous.

\begin{enumerate}[label=\alph*)]
\item \textbf{FourRoomDistractors} is a continuous 2D navigation environment derived
from the two-room environment of \citet{sobal2025pldm}. The world is divided into four
equal rooms by two walls, each with one door; every door is closed independently with
probability $0.5$ at reset, so the route between two rooms changes across episodes.
The ego agent is a dot moving $2.5$\,px per step under a 2D velocity command. A single
distractor dot of the same speed moves under a smooth random policy that ignores the
ego, and teleports to a random free location at Poisson-distributed
intervals of mean $35$ steps; it neither collides with the ego nor affects the task.
We collect $3{,}750$ episodes of $400$ steps ($1.5$M transitions) with a per-episode
mixture of two ego policies: a heavily noised shortest-path expert that picks a new
goal each time it reaches the current one, and a smooth random-exploration policy.
Level 1 is trained on $1.0$M of these transitions (Table~\ref{tab:impl-training}), and
$150$ episodes of this collection serve as validation data.

\item \textbf{Visual AntMaze} is the OGBench \texttt{visual-antmaze-medium}
environment~\citep{park2025ogbench}: an 8-DoF quadruped navigating a maze from a
third-person camera over a color-coded floor, so that the agent's location can be read
from the image, with an 8D torque command in $[-1,1]$. The world model additionally
receives the $27$-dimensional proprioceptive state (joint positions without the global
$xy$, and joint velocities). We collect data with the OGBench protocol, a SAC
low-level locomotion expert driven by a commanded direction, in two of its variants.
\texttt{explore} resamples a random direction every $10$ steps under heavy action
noise, giving low-quality but high-coverage trajectories. \texttt{stitch} drives the
lightly noised expert along the maze's shortest path toward a goal four cells away, giving
short goal-directed segments that must be stitched to solve longer tasks. Each variant has
$12{,}500$ episodes of $400$ steps ($5.0$M transitions), and the training set combines both
($10.05$M transitions). Validation uses $1{,}250$ held-out \texttt{stitch} episodes.

\item \textbf{Push-T} is the continuous 2D manipulation task of
\citet{chi2023diffusionpolicy} in which a circular agent pushes a T-shaped block to a
target configuration; the 2D command is the agent's target position. We use the expert
dataset of DINO-WM~\citep{zhou2025dinowm} as released with LeWM: $18{,}685$ training
episodes ($2.34$M frames, $125$ steps on average) and $21$ validation episodes.

\item \textbf{OGBench Cube} is the single-cube variant of the OGBench
\texttt{cube} manipulation environment~\citep{park2025ogbench}, in which a robot arm
must pick up a cube and place it at a target location; the 5D command sets the
end-effector displacement, yaw, and gripper. As in LeWM, the data are $10{,}000$
episodes of $200$ steps from the benchmark's scripted data-collection policy, split
into $9{,}800$ training and $200$ validation episodes.

\item \textbf{DROID}~\citep{khazatsky2024droid} is a corpus of real teleoperated
manipulation video with no simulator, in which scene, lighting, camera calibration and
manipulated objects all change between episodes. The end-effector command is
$7$-dimensional: Cartesian $xyz$, roll-pitch-yaw and gripper. Actions are normalized per
coordinate against dataset statistics before training, so the gripper dimension, one to two
orders of magnitude larger than the Cartesian deltas, does not dominate the mean in
Eq.~\eqref{eq:idm-loss}. We train on a re-encoded copy ($74{,}530$ episodes, $220{,}084$
views, three cameras at $256{\times}256$), described in \S\ref{app:droid-pipeline}.
Plans are scored offline against logged expert
trajectories on a frozen $16$-clip evaluation set, specified in \S\ref{app:protocol}.
\end{enumerate}

\section{Architecture and training}
\label{app:implementation}

This appendix collects the architecture and training settings of every model in the
paper: the LeWM baseline and the two-, three-, and four-level H-JEPA models of the
simulated environments (\S\ref{app:implementation-sim}), then the DROID models
(\S\ref{app:hparams}). Probing and decoding are
described in \S\ref{app:probing}, and planning evaluation in
\S\ref{app:planning-eval}.

\subsection{Single-scene environments}
\label{app:implementation-sim}

These are the models of Fig.~\ref{fig:depth-planning} (bottom). LeWM is a single-level JEPA using
the architecture and objective of \citet{maes2026lewm}. Each H-JEPA model trains all
levels jointly from scratch in one end-to-end run, with upper-level losses
backpropagating through the encoders below. We describe shared settings once and
list environment-specific choices in Table~\ref{tab:impl-training}.

\paragraph{Data.} Within each environment, all four models use the same training
data: $1.0$M transitions from the one-distractor FourRoom dataset; a $10.05$M-transition,
equal mixture of AntMaze \texttt{explore} and \texttt{stitch}; the Push-T expert set;
or the Cube scripted set (\S\ref{app:envs}). Frames use ImageNet
normalization without augmentation. A level-1 step spans $5$ environment steps,
with the intervening actions concatenated. Every upper level has stride
$s_\ell=2$ and window $w_\ell=1$. Each sample is one clip long enough to supply
$T=4$ states at the top level; each lower level takes a random four-state crop of
its own stream. The causal predictor receives $c_\ell=3$ states at every level and learns their
teacher-forced one-step successors. If $n_\ell$ is the number of level-$\ell$
frames needed in the clip, $n_L=4$ and
$n_{\ell-1}=(n_\ell-1)s_\ell+w_\ell$, giving the minimum clip lengths in
Table~\ref{tab:impl-data}. Batches contain $128$ clips. A clip must lie within one
episode, so the number of clips an epoch yields decreases with the minimum clip length
(Table~\ref{tab:impl-data}): slightly on FourRoom and AntMaze, whose episodes are
long, and sharply on Push-T, whose episodes are short. H-JEPA training is
epoch-matched across depths, so deeper models take fewer gradient steps.

\begin{table}[ht]
\centering
\small
\caption{\textbf{Temporal layout of one training sample and the clips it leaves per
epoch.} The minimum clip length is the shortest episode segment that yields one
training sample, in environment steps, including both endpoint observations. All H-JEPA
models use stride two and window one at every level above the first. A clip must
lie within one episode (the loader reserves $5\,n_1$ environment steps per clip),
so the number of clips an epoch yields falls as the minimum clip length grows; parentheses give each count as a percentage of the LeWM count in the same
environment. Each clip supplies three teacher-forced transitions per level.}
\label{tab:impl-data}
\setlength{\tabcolsep}{3.0pt}   %
\begin{tabular}{lcccc}
\toprule
 & LeWM & Two-level H-JEPA & Three-level H-JEPA & Four-level H-JEPA \\
\midrule
Level-1 frames per clip & $\ImplFramesOne$ & $\ImplFramesTwo$ & $\ImplFramesThree$ & $\ImplFramesFour$ \\
Minimum clip length (env.\ steps) & $\ImplSpanOne$ & $\ImplSpanTwo$ & $\ImplSpanThree$ & $\ImplSpanFour$ \\
\midrule
\multicolumn{5}{l}{\emph{Training clips per epoch}} \\
FourRoom & $\ImplRoomClipsOne$ & $\ImplRoomClipsTwo$ ($\ImplRoomClipsTwoPct\%$) & $\ImplRoomClipsThree$ ($\ImplRoomClipsThreePct\%$) & $\ImplRoomClipsFour$ ($\ImplRoomClipsFourPct\%$) \\
AntMaze & $\ImplAntClipsOne$ & $\ImplAntClipsTwo$ ($\ImplAntClipsTwoPct\%$) & $\ImplAntClipsThree$ ($\ImplAntClipsThreePct\%$) & $\ImplAntClipsFour$ ($\ImplAntClipsFourPct\%$) \\
Push-T & $\ImplPushClipsOne$ & $\ImplPushClipsTwo$ ($\ImplPushClipsTwoPct\%$) & $\ImplPushClipsThree$ ($\ImplPushClipsThreePct\%$) & $\ImplPushClipsFour$ ($\ImplPushClipsFourPct\%$) \\
Cube & $\ImplCubeClipsOne$ & $\ImplCubeClipsTwo$ ($\ImplCubeClipsTwoPct\%$) & $\ImplCubeClipsThree$ ($\ImplCubeClipsThreePct\%$) & $\ImplCubeClipsFour$ ($\ImplCubeClipsFourPct\%$) \\
\bottomrule
\end{tabular}
\end{table}

\paragraph{Training dynamics.} Fig.~\ref{fig:training-dynamics} tracks the
unweighted prediction and SIGReg terms of every level of the four-level models
over training. In a healthy run, the SIGReg statistic drops sharply early in
training, then settles within a small factor of its value on exactly Gaussian
embeddings, about $1$ (\S\ref{app:sigreg-ddp}). The prediction
loss often rises over the first epochs, while SIGReg spreads out the initially
concentrated embeddings, then decreases to a nonzero plateau. A prediction loss
falling toward zero is the signature of collapse, since constant embeddings are
trivially predictable. Neither term degenerates at any level, so
end-to-end training keeps all levels non-collapsed at the deepest setting we
train. Both terms are ordered by level in every environment, each higher level
converging to a larger prediction loss and a larger SIGReg value, consistent with
the fewer clips and coarser targets an upper level sees
(Table~\ref{tab:impl-data}). Upper levels also start improving later than
level~1, most visibly on AntMaze, where the abstract predictors hold a plateau
over the first epochs before descending. On Push-T and Cube the level-3 and
level-4 prediction curves are the noisiest and flatten earliest, in line with
the short-episode clip deficit of Table~\ref{tab:impl-data}. The total
training objective, summed over levels, decreases throughout training in every
environment and is flat by the final epochs.

On DROID (bottom row of Fig.~\ref{fig:training-dynamics}), no term degenerates
either. The prediction loss follows the same ordering as in the other
environments, with level~2 converging well above level~1. The SIGReg ordering is
reversed, level~2 ending below level~1, consistent with its twice larger SIGReg
weight. Level~1 of H-JEPA also reaches lower prediction and inverse-dynamics
losses than the flat LeWM+IDM trained with the same weights, so training the
upper level jointly does not degrade the lower one. The total training objective, almost
entirely the weighted inverse-dynamics term, decreases monotonically up to
epoch-to-epoch noise.

\begin{figure}[htbp]
\centering
\includegraphics[width=\linewidth]{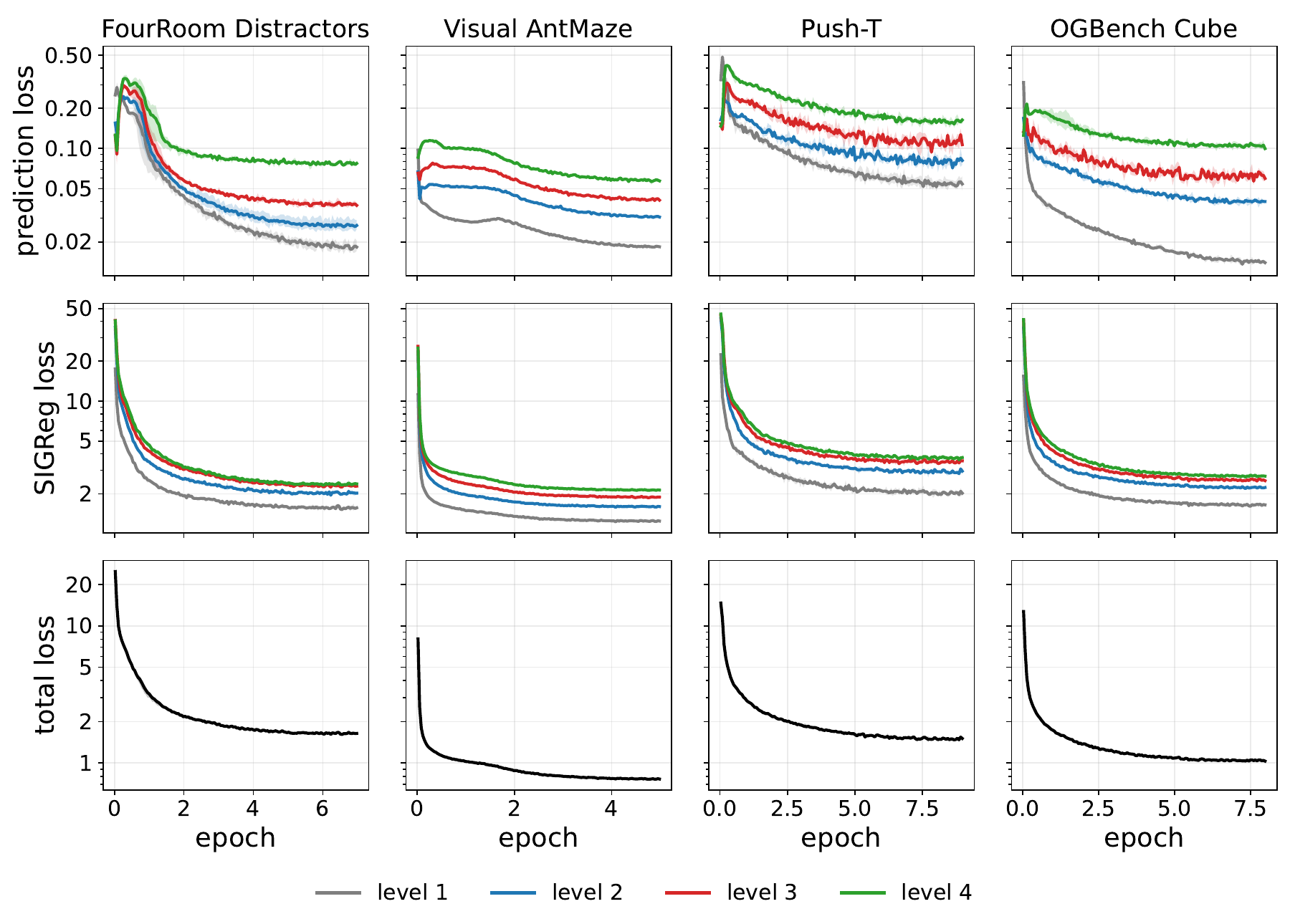}
\vspace{0.5em}
\includegraphics[width=\linewidth]{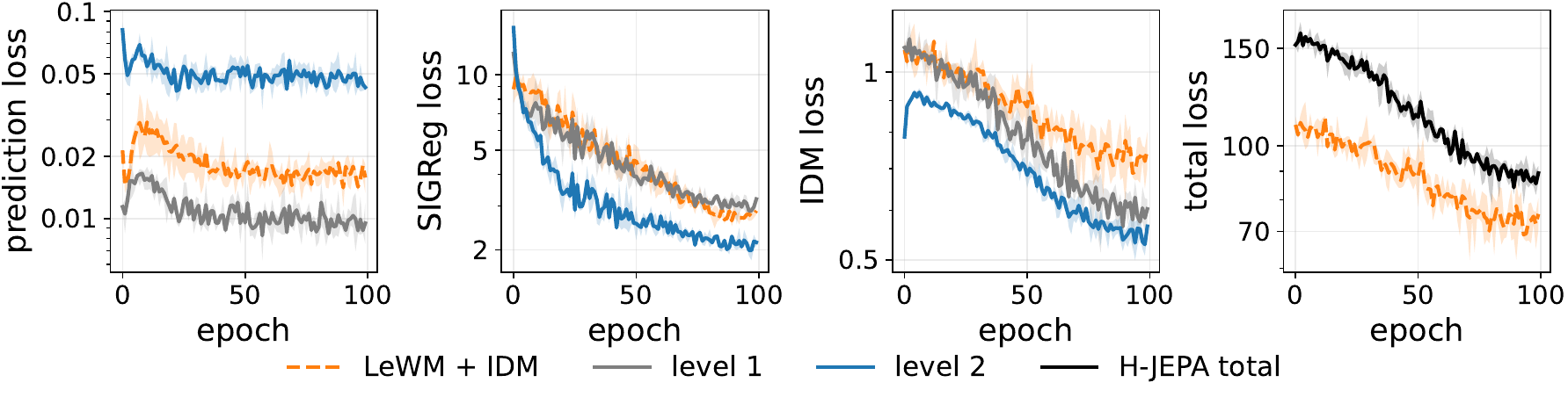}
\caption{\textbf{Per-level training losses of the four-level H-JEPA models and
of the DROID models.} Top three rows: unweighted prediction loss and SIGReg
statistic of each level of the four-level models over training, and their
total training objective, the sum over levels of the prediction loss plus the
weighted SIGReg term. Every level of a model uses the same SIGReg weight
($0.18$ on FourRoom, $0.09$ elsewhere), so the weighted curves differ from the
unweighted ones by a constant factor per environment. The seed band of the
total objective is narrower than its line.
Bottom row: unweighted prediction, SIGReg and inverse-dynamics losses of the
DROID flat LeWM+IDM (orange, dashed) and of both levels of the two-level H-JEPA
(solid) over their $100$ training epochs, and the total training objective, the sum
of all terms weighted as in Table~\ref{tab:hparams} (level~1: SIGReg $0.08$,
IDM $100$; level~2: SIGReg $0.16$, IDM $50$), summed over both levels for
H-JEPA. All curves are the mean over three training seeds with a min/max band,
on a log scale.}
\label{fig:training-dynamics}
\end{figure}

\paragraph{Architecture.} The level-1 encoder is a ViT-Tiny whose \texttt{[CLS]}
token passes through a one-hidden-layer MLP projector with Batch Normalization.
On AntMaze, its $256$-dimensional pixel projection is concatenated with a
$128$-dimensional embedding of the proprioceptive state from a two-layer MLP.
The level-1 predictor is a causal transformer with $4$ layers, $8$ heads, width
$192$, feed-forward width $1024$, action conditioning through adaptive LayerNorm,
and an MLP projector of the same form as the encoder's. Each upper level uses a
two-layer MLP encoder with hidden width $384$, applied to the lower-level latent;
a one-layer transformer pools the two intervening lower-level action embeddings
into a $4$-dimensional macro-action. The upper-level predictor has the same structure
as at level~1, with dimensions given in Table~\ref{tab:impl-training}. Within an
environment, latent dimensions and upper-level architectures are shared across
all H-JEPA depths.

\paragraph{Training.} We minimize the sum of
$\mathcal{L}_{\mathrm{pred}}+\lambda_\ell\,\mathrm{SIGReg}$ over levels
(\S\ref{sec:objective}), with $M=1024$ random projections and no stop-gradient,
moving-average targets, or pretrained weights. All models use AdamW with weight
decay $10^{-3}$, linear warm-up (1\%) followed by cosine decay, gradient clipping at
$1.0$, and \texttt{bfloat16}. Peak learning rates are $5\times10^{-5}$ for level~1
and $10^{-5}$ for every upper level. The SIGReg coefficient is uniform across
levels within each H-JEPA model. Training uses three seeds ($42$, $43$, $44$).
The two-, three-, and four-level H-JEPA runs share an epoch budget within each
environment; longer clips change the number of valid samples, so optimizer-step
counts and training FLOPs are not matched across depths.

\begin{table}[ht]
\centering
\small
\setlength{\tabcolsep}{5pt}
\caption{\textbf{Per-environment architecture and training settings.} Upper-level
predictor dimensions and H-JEPA epoch counts apply to every depth. H-JEPA models
are initialized independently of the LeWM baselines.}
\label{tab:impl-training}
\begin{tabular}{lcccc}
\toprule
 & FourRoom & AntMaze & Push-T & Cube \\
\midrule
Frame size / patch & $\ImplRoomImage$ / $\ImplRoomPatch$ & $\ImplAntImage$ / $\ImplAntPatch$ & $\ImplPushImage$ / $\ImplPushPatch$ & $\ImplCubeImage$ / $\ImplCubePatch$ \\
Command / proprio dim. & $\ImplRoomAction$ / n/a & $\ImplAntAction$ / $\ImplProprio$ & $\ImplPushAction$ / n/a & $\ImplCubeAction$ / n/a \\
Latent dim.\ $d$ & $\ImplRoomLatent$ & $\ImplAntLatent$ & $\ImplPushLatent$ & $\ImplCubeLatent$ \\
Upper predictor layers / heads & $\ImplRoomLayers$ / $\ImplRoomHeads$ & $\ImplAntLayers$ / $\ImplAntHeads$ & $\ImplPushLayers$ / $\ImplPushHeads$ & $\ImplCubeLayers$ / $\ImplCubeHeads$ \\
Upper predictor width / FF width & $\ImplRoomWidth$ / $\ImplRoomMlp$ & $\ImplAntWidth$ / $\ImplAntMlp$ & $\ImplPushWidth$ / $\ImplPushMlp$ & $\ImplCubeWidth$ / $\ImplCubeMlp$ \\
\midrule
SIGReg $\lambda$, LeWM / H-JEPA & $\ImplRoomLeWMLambda$ / $\ImplRoomLambda$ & $\ImplAntLeWMLambda$ / $\ImplAntLambda$ & $\ImplPushLeWMLambda$ / $\ImplPushLambda$ & $\ImplCubeLeWMLambda$ / $\ImplCubeLambda$ \\
H-JEPA epochs & $\ImplRoomEpochs$ & $\ImplAntEpochs$ & $\ImplPushEpochs$ & $\ImplCubeEpochs$ \\
\bottomrule
\end{tabular}
\end{table}

\paragraph{Implementation and hardware.} We use the \texttt{stable-worldmodel} and
\texttt{stable-pretraining} libraries of LeWM~\citep{maes2026lewm}, and train
on H200 GPUs.

\subsection{DROID}
\label{app:hparams}

At level $\ell>1$, the macro-actions are embeddings produced by $A^{(\ell)}$ rather than raw
actions. To avoid slow-feature collapse, we also apply SIGReg to these action embeddings, with
weight $0.005$; level~1 carries no such term. Our DROID models see pixels and actions only, no proprioception.

The data pipeline and learning rates also differ from the single-scene environments
(\S\ref{app:implementation-sim}). There, the levels are balanced: each lower level takes a
random four-state crop of its own stream, so every level learns from three teacher-forced
transitions per clip, and every upper level trains at a fifth of the level-1 learning rate.
On DROID, a training clip supplies eight level-1 states, and level~2 reads three of them at
stride $3$, with no crop of its own. With one-step teacher forcing, level~2 thus receives two
prediction targets per clip against seven at level~1. Both levels share the peak learning
rate of Table~\ref{tab:hparams}.

Table~\ref{tab:hparams} gives the training settings of the headline two-level run.
A stagewise variant instead trains level~1 alone, then freezes it and trains level~2
on its latents. We use this variant only for the data-specialization analysis
(\S\ref{sec:dataset-per-level}), the level-2 coefficient search
(Fig.~\ref{fig:crossval-l1grid-cls}), and the decoded plans
(Fig.~\ref{fig:decoded-plans}); all headline results use end-to-end training.
The architecture and loss settings describe the model rather than how
it is scheduled, so they apply equally to the end-to-end and stagewise procedures;
the optimization settings are those of the end-to-end headline runs.
Level 1 consumes raw actions, which is why the action regularizer appears only in the level-2 block.

The level-1 and optimization blocks of Table~\ref{tab:hparams} also apply to one-level
LeWM+IDM. Frames are center-cropped to a square, resized to
$256{\times}256$ and normalized with ImageNet statistics, with no augmentation beyond the
random choice of clip. Level~2 has stride $s_2=3$ and window $w_2=1$: its encoder is
applied to every third level-1 state on its own. Its macro-action encoder concatenates
the raw $7$-dimensional commands of the three level-1 steps it spans and maps them to an
$8$-dimensional macro-action through an MLP with one hidden layer of width $64$ and a
final LayerNorm. At both levels the predictor is a
causal transformer with $6$ heads of width $64$, feed-forward width $1536$ and dropout
$0.1$, conditioned on actions through adaptive LayerNorm. All SIGReg terms, including the action term,
use $M=1024$ random projections.

\begin{table}[ht]
\centering
\small
\setlength{\tabcolsep}{6pt}
\caption{\textbf{Training hyperparameters on DROID.} Predictors are causal transformers, given as
width/depth. The grouping row
gives the shape of the tensor SIGReg tests: $(T,N,D)$ tests normality within each timestep,
over the $N$ sequences of the batch. The architecture and loss settings do not depend on
the training schedule, whether end-to-end or stagewise (\S\ref{app:hparams}); the
optimization block does, and reports the end-to-end headline runs. The peak learning rate is
the same at both levels, AdamW runs at $\beta = (0.9, 0.999)$, and the dataloader length caps the
iterations per epoch, which gives $292$ at this batch size.}
\label{tab:hparams}
\begin{tabular}{lc}
\toprule
 & DROID \\
\midrule
\multicolumn{2}{l}{\emph{Global}} \\
Stop-gradient on latent targets & None \\
Level-1 to level-2 stride       & $3$ \\
\midrule
\multicolumn{2}{l}{\emph{Level 1}} \\
Encoder                         & ViT-S/16, \texttt{CLS} \\
Predictor                       & $384$/$12$ \\
Predictor context $c_1$         & $7$ states \\
SIGReg grouping                 & $(T,N,D)$ \\
SIGReg weight $\lambda_1$       & $0.08$ \\
IDM weight $\gamma_1$           & $100$ \\
\midrule
\multicolumn{2}{l}{\emph{Level 2}} \\
Encoder                         & latent MLP $768$ \\
Predictor                       & $384$/$12$ \\
Predictor context $c_2$         & $2$ states \\
SIGReg grouping                 & $(T,N,D)$ \\
SIGReg weight $\lambda_2$       & $0.16$ \\
IDM weight $\gamma_2$           & $50$ \\
Action SIGReg weight            & $0.005$ \\
\midrule
\multicolumn{2}{l}{\emph{Optimization}} \\
Optimizer                       & AdamW \\
Total batch size                & $256$ \\
Peak learning rate              & $5\times10^{-4}$ \\
Warm-up, fraction of steps      & $0.1$, linear \\
Decay                           & cosine to $0$ \\
Weight decay                    & $10^{-4}$ \\
Gradient clipping, per level    & $1.0$ \\
Precision                       & \texttt{bfloat16} \\
Epochs                          & $100$ \\
Iterations per epoch            & $292$ \\
Training seeds                  & $1$, $1000$, $10000$ \\
\bottomrule
\end{tabular}
\end{table}

\section{Probing and decoding}
\label{app:probing}

All probe and decoder results are obtained \emph{post hoc}: after training, the world
model is frozen and every frame of a held-out probing set is encoded through the
hierarchy without striding, giving one detached latent per level for every
frame. Probes and decoders are then fitted on these latents.

\paragraph{Probes.} Each probed entity has a two-layer MLP (hidden width $512$,
LayerNorm, GELU) per level, mapping the latent to the $z$-scored entity vector under a
mean-squared-error loss. We report the variance-normalized error (NMSE) on the
evaluation split: the per-dimension de-normalized squared error divided by
the entity's variance and averaged equally over dimensions
(\S\ref{sec:learning-abstractions}).

\paragraph{Decoder.} Following LeWM, a lightweight transformer decoder reconstructs the
frame from a single latent, for visualization only. The latent is projected to width
$256$ and serves as key and value for a set of learnable query tokens, one per
encoder patch ($16{\times}16$ queries at both resolutions); four
cross-attention blocks with $8$ heads and residual MLPs update the queries, and a linear
layer maps each to its pixel patch. The loss is the mean-squared error to the
ImageNet-normalized frame.

\paragraph{Training.} Probes and decoders are trained jointly with AdamW (weight
decay $10^{-4}$), batch size $128$, for $25$ epochs ($60$ on FourRoomDistractors);
probes use learning rate $10^{-3}$ and decoders $10^{-4}$ ($10^{-3}$ on Visual
AntMaze). Table~\ref{tab:impl-probing} lists the probing data and the probed entities
per environment. Entities are grouped by their temporal frequency into the slow entity
that level~2 retains and the fast entity it discards (\S\ref{sec:learning-abstractions}); Push-T
and Cube form a single frequency band, so all of their entities are retained and none is
marked fast.

\begin{table}[ht]
\centering
\small
\setlength{\tabcolsep}{5pt}
\caption{\textbf{Probing data and probed entities per environment.} Transition counts
are for fitting (train) and scoring (eval). Visual AntMaze is scored on one evaluation
set with equal numbers of \texttt{explore} and \texttt{stitch} episodes. On FourRoomDistractors and Visual
AntMaze, whose state separates into slow and fast entities
(Fig.~\ref{fig:selective-bars}), the entities are labeled accordingly. Entity
dimensions in parentheses.}
\label{tab:impl-probing}
\begin{tabular}{lll}
\toprule
Environment & transitions (train / eval) & Probed entities \\
\midrule
FourRoomDistractors & $160$k / $40$k & \makecell[l]{\textbf{slow:} ego $xy$ (2)\\ \textbf{fast:} distractor $xy$ (2)} \\
\addlinespace
Visual AntMaze & $500$k / $50$k & \makecell[l]{\textbf{slow:} global $xy$ (2)\\ \textbf{fast:} body state (27)} \\
\addlinespace
Push-T & $250$k / $2.5$k & \makecell[l]{agent $xy$ (2), block $xy$ (2),\\ block orientation (2), agent velocity (2)} \\
\addlinespace
OGBench Cube & $250$k / $40$k & \makecell[l]{cube position (3), cube orientation (6),\\ effector position (3), effector yaw (2), gripper (2),\\ arm joint positions (6), arm joint velocities (6)} \\
\bottomrule
\end{tabular}
\end{table}

\section{Planning evaluation}
\label{app:planning-eval}

\paragraph{Evaluation tasks.} FourRoomDistractors and
Push-T use pairs of held-out expert frames $75$ environment steps apart; the
FourRoom start and goal lie two rooms apart. Cube uses held-out $20$-step segments
centered on a grasp, so the task requires manipulating the object. Visual AntMaze
uses expert rollouts between maze cells at grid distance three ($82$ steps on
average), with neutral-pose start and goal states. The planner receives goal
images and, on AntMaze, proprioception; success is determined by the environment's
goal test within the episode budget in Table~\ref{tab:impl-planning} (on AntMaze,
within $1.0$ of the goal $xy$). Fig.~\ref{fig:eval-tasks} shows one successful
H-JEPA execution per environment.
We use separate task sets for the compute sweeps
in Fig.~\ref{fig:depth-planning} (top) and the final evaluations in
Fig.~\ref{fig:depth-planning} (bottom) and
Tables~\ref{tab:cost-ladder} and~\ref{tab:cost-ladder-appendix}.
The final evaluation set contains $50$ start--goal tasks per environment,
generated by the same process and with the same difficulty settings as the
compute-sweep tasks, but using different task-generation seeds. Planner settings
are fixed before evaluation on these new tasks. All methods and model depths
share the same tasks within each set.

\begin{figure}[t]
\centering
\includegraphics[width=\linewidth]{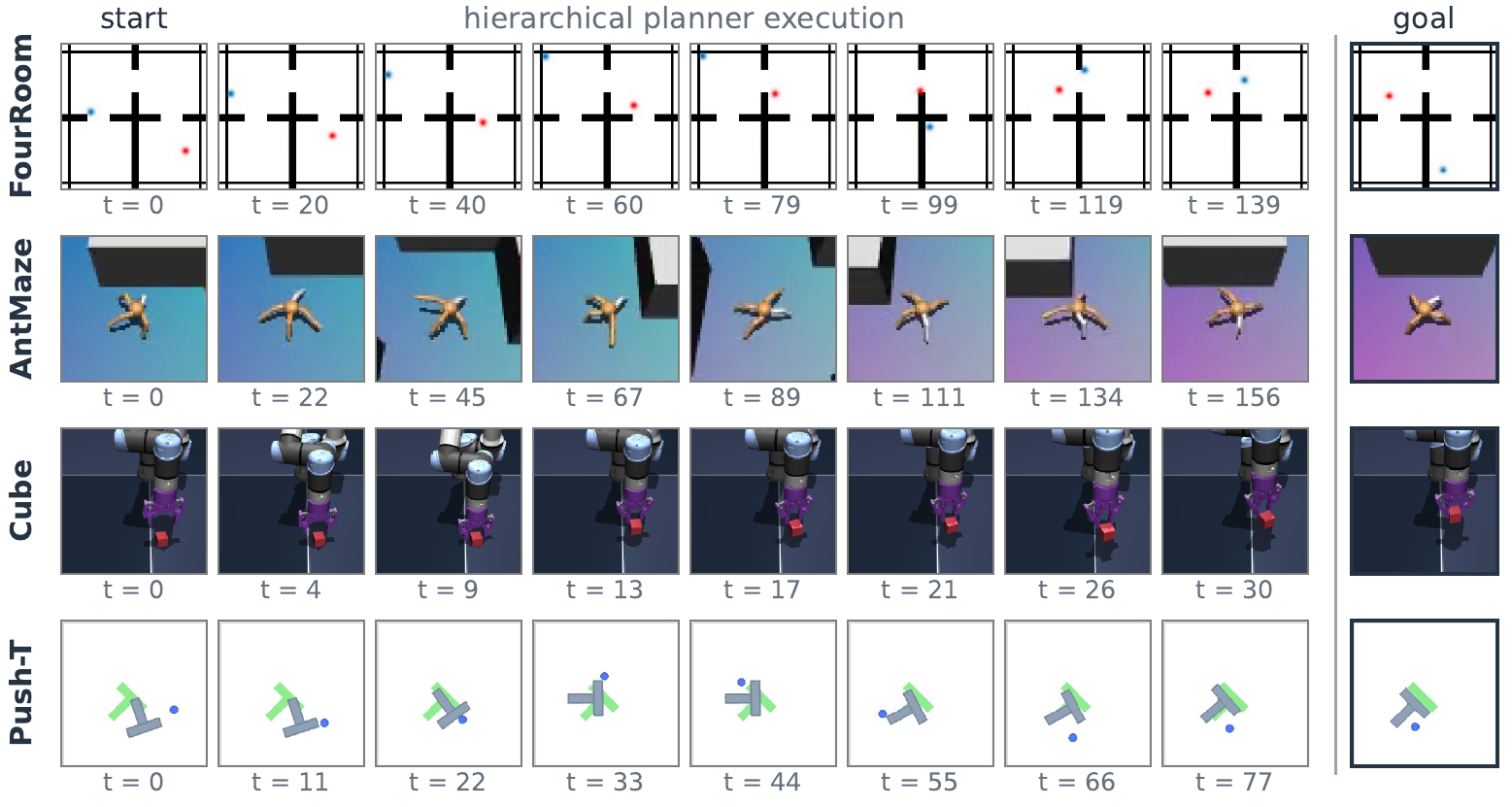}
\caption{\textbf{Successful hierarchical planner executions.} Selected H-JEPA
executions on one evaluation task per environment, using three-level models for
FourRoom and AntMaze and two-level models for Cube and Push-T. These checkpoints
also appear in Fig.~\ref{fig:depth-planning} (bottom); the recordings use their original
evaluation settings rather than the compute-selected sample counts. Each row
shows eight observations sampled uniformly in environment steps, followed by the
goal observation supplied to the planner. Rows end at first recorded success,
except Cube, which shows the full $30$-step execution to include the completed
lift. Labels give elapsed environment steps.}
\label{fig:eval-tasks}
\end{figure}

\paragraph{Shared solver.} Every active level optimizes candidate action sequences
by differentiating through its latent rollout, using AdamW for at most $90$
iterations with early stopping when the best cost stalls, and returns the lowest-cost
candidate. Level-1 candidates are initialized from a standard Gaussian in normalized
action space. Upper levels have no native action bounds, so their search is anchored
to the macro-actions seen in training: each upper-level action encoder keeps its last
$4000$ training macro-actions, candidates are initialized from a per-dimension Gaussian
fitted to them (one candidate at its mean), and after every optimizer step the
candidates are clipped to that sample's per-dimension 2nd--98th percentiles. The latent-distance cost scores the last
two predicted states at level 1 and the last state at upper levels. H-JEPA plans
top-down, passing the first predicted latent as a subgoal for the level below,
whose rollout is encoded into the upper latent space for comparison
(\S\ref{sec:method-planning}). On Push-T and Cube, level-1 actions are also pooled
and matched to the first level-2 macro-action with weight $0.2$; upper-level
optimization adds Gaussian action noise of standard deviation $0.01$. Both terms
are disabled on FourRoom, AntMaze and DROID. On DROID, the initial Gaussians are
$1.5$ times wider, and candidates at every level, level 1 included, are clipped to
the prior mean $\pm 2$ standard deviations instead of the percentiles.

\paragraph{Replanning and horizons.} All models execute two level-1 action blocks
($10$ environment steps) before replanning from scratch, truncating the final
commit at the episode budget. Level 1 advances $5$ environment steps per predicted
step, and each upper level doubles that interval. For LeWM and the top H-JEPA
level, the horizon decreases from the configured $H$ in
Table~\ref{tab:impl-planning} to one across the episode's replanning calls. Each
lower level holds horizon two until the level above reaches one, then decreases
from its own configured $H$ to one over the remaining calls. An upper level at
horizon one is skipped. FourRoom and AntMaze retain the goal cost in the highest
skipped level's latent space; Push-T and Cube use the active level's own goal
embedding.

\paragraph{Reported settings.} Table~\ref{tab:impl-planning} gives the operating
points in the depth comparison. On the compute-sweep tasks
(Fig.~\ref{fig:depth-planning}, top), LeWM and H-JEPA select the candidate
allocation with the highest observed mean success rate under $100$ mean planner
TFLOPs per episode, breaking ties by lower compute. Learning rates and horizons
were fixed from earlier experiments, before the compute sweeps and final
evaluation. The compute sweeps vary only candidate trajectory counts, with all
other planner settings held fixed. These settings are then evaluated without further tuning on the
separate $50$-task set used for Fig.~\ref{fig:depth-planning} (bottom) and
Tables~\ref{tab:cost-ladder} and~\ref{tab:cost-ladder-appendix}, which report
success on this final evaluation set. Results report mean and SE across
three paired model and planner seeds: $42$, $43$, and $44$.

\begin{table}[ht]
\centering
\small
\setlength{\tabcolsep}{3.5pt}   %
\caption{\textbf{Planning settings for the simulated depth comparison.}
$S$: candidate sequences; $\eta$: AdamW learning rate; $H$: configured horizon
in predicted steps of that level, used by the schedule described above. Tuples
list levels from lowest to highest, except the H-JEPA learning-rate row, which
lists level one and the rate shared by all upper levels. Episode budgets are in
environment steps. Cube's horizon-one upper solvers are skipped, so their listed
candidate counts are unused.}
\label{tab:impl-planning}
\begin{tabular}{lcccc}
\toprule
 & FourRoom & AntMaze & Push-T & Cube \\
\midrule
Episode budget & $\PlanRoomBudget$ & $\PlanAntBudget$ & $\PlanPushBudget$ & $\PlanCubeBudget$ \\
$\eta$, LeWM & $\PlanRoomFlatLR$ & $\PlanAntFlatLR$ & $\PlanPushFlatLR$ & $\PlanCubeFlatLR$ \\
$\eta$, H-JEPA (L1, upper) & $(\PlanRoomHierLR)$ & $(\PlanAntHierLR)$ & $(\PlanPushHierLR)$ & $(\PlanCubeHierLR)$ \\
\midrule
$H$, LeWM & $\PlanRoomHorizonsOne$ & $\PlanAntHorizonsOne$ & $\PlanPushHorizonsOne$ & $\PlanCubeHorizonsOne$ \\
$H$, two-level H-JEPA & $(\PlanRoomHorizonsTwo)$ & $(\PlanAntHorizonsTwo)$ & $(\PlanPushHorizonsTwo)$ & $(\PlanCubeHorizonsTwo)$ \\
$H$, three-level H-JEPA & $(\PlanRoomHorizonsThree)$ & $(\PlanAntHorizonsThree)$ & $(\PlanPushHorizonsThree)$ & $(\PlanCubeHorizonsThree)$ \\
$H$, four-level H-JEPA & $(\PlanRoomHorizonsFour)$ & $(\PlanAntHorizonsFour)$ & $(\PlanPushHorizonsFour)$ & $(\PlanCubeHorizonsFour)$ \\
\midrule
$S$, LeWM & $\PlanRoomSamplesOne$ & $\PlanAntSamplesOne$ & $\PlanPushSamplesOne$ & $\PlanCubeSamplesOne$ \\
$S$, two-level H-JEPA & $(\PlanRoomSamplesTwo)$ & $(\PlanAntSamplesTwo)$ & $(\PlanPushSamplesTwo)$ & $(\PlanCubeSamplesTwo)$ \\
$S$, three-level H-JEPA & $(\PlanRoomSamplesThree)$ & $(\PlanAntSamplesThree)$ & $(\PlanPushSamplesThree)$ & $(\PlanCubeSamplesThree)$ \\
$S$, four-level H-JEPA & $(\PlanRoomSamplesFour)$ & $(\PlanAntSamplesFour)$ & $(\PlanPushSamplesFour)$ & $(\PlanCubeSamplesFour)$ \\
\bottomrule
\end{tabular}
\end{table}

Plans are executed under a \emph{commit schedule}: the agent replans
at environment times $t_0 = 0 < t_1 < t_2 < \cdots$; at phase $j$ it encodes the
observation at $t_j$, runs the top-down solve of
Eqs.~\eqref{eq:top-plan}--\eqref{eq:lower-plan} from that state, executes the first
$\xi_j = t_{j+1} - t_j \le H_1$ actions of $a^{(1),*}_{0:H_1-1}$, and discards the rest.
At planning time every level encodes only the current observation and the goal, one frame each,
and $\mathrm{Roll}^{(\ell)}$ conditions each predictor step on the latest state alone, observed or
predicted, so the planning context is $c_\ell=1$ at every level and in every environment. Training
covers this regime: under the causal mask, the first position of each training context attends
only to itself.
The schedule need not be uniform; the remaining episode budget truncates the last phase.
Committing one command per phase, $\xi_j \equiv 1$, is fully closed-loop control;
committing an entire plan in one phase, $\xi_0 = H_1$, is open-loop single-shot execution.

DROID is offline and open-loop: one plan covers the whole window, so the commit width equals
the horizon and nothing is replanned. It is scored on the $16$ clips of
\S\ref{app:protocol} with three planner seeds and three training seeds.
\Cref{tab:impl-planning-real} gives its settings.

\begin{table}[ht]
\centering
\small
\setlength{\tabcolsep}{3.5pt}
\caption{\textbf{Planning settings on DROID.}
$S$: candidate sequences; $\eta$: AdamW
learning rate; $H$: configured horizon in predicted steps of that level. Tuples list levels from lowest to highest. Every level runs
AdamW for at most $90$ iterations with early stopping when the best cost stalls; the
cross-level action cost is $0$ and the subgoal weight $\beta$ of
Eq.~\eqref{eq:lower-plan} is $0.5$, for both two-level arms. Planning is open-loop: the whole level-1 plan is executed, so the commit
width equals the level-1 horizon. Each model keeps one $\eta$ and these settings at every
planner budget: the compute-Pareto of Fig.~\ref{fig:compute-pareto-real} varies only the number
of candidates, chosen independently per level for the two-level arms. The listed $S$ is the
operating point of each bar in Fig.~\ref{fig:cls-ladder-droid}.}
\label{tab:impl-planning-real}
\begin{tabular}{lc}
\toprule
 & DROID, $5$\,fps \\
\midrule
Episode budget & $36$ \\
$\eta$, LeWM+IDM & $0.01$ \\
$\eta$, HWM (L1, L2) & $(0.01, 0.3)$ \\
$\eta$, H-JEPA (L1, L2) & $(0.01, 0.3)$ \\
\midrule
$H$, LeWM+IDM & $36$ \\
$H$, HWM & $(36, 12)$ \\
$H$, H-JEPA & $(36, 12)$ \\
\midrule
$S$, LeWM+IDM & $32$ \\
$S$, HWM & $(16, 16)$ \\
$S$, H-JEPA & $(16, 16)$ \\
\bottomrule
\end{tabular}
\end{table}

\section{Temporal-frequency separation and selective abstraction}

\subsection{Frequency separation within a trajectory}
\label{app:entity-traces}

The entity frequency gap of \S\ref{sec:learning-abstractions} is a dataset-level statistic: one
spectral centroid per entity, pooled over episodes, and their fast/slow ratio.
Fig.~\ref{fig:entity-traces} shows the raw behaviour that statistic summarizes on a
single AntMaze training episode, so the reader can verify the gap by eye rather than
trust the aggregate. The episode is drawn from the same dataset and decimated to the
same level-1 frame rate (frameskip $5$) that the centroids in
Fig.~\ref{fig:selective-bars} are computed from; it is the episode among the first $64$
whose own fast/slow centroid ratio is closest to the dataset value, so it is
representative rather than the most striking case (per-episode ratios have median
$6.4\times$ and interquartile range $4.9$--$7.8\times$).

The entities are AntMaze's probe targets: the slow entity is the global position
($xy$, two dimensions) and the fast entity is the body state. For legibility we plot
the body's $13$ position-like dimensions (torso height, root orientation, and the
eight hip and ankle joint angles) and omit the $14$ joint- and root-velocity
dimensions, which oscillate at least as fast and would fill the panel; the episode's
gap is $8.3\times$ with either choice. Because the entities have different units, every
dimension is normalized to zero mean and unit variance over the episode,
making the spectral centroid amplitude-invariant.

\begin{figure}[t]
\centering
\begin{subfigure}{\linewidth}
\centering
\phantomsubcaption\label{fig:entity-traces}
\begin{tikzpicture}
\node[inner sep=0] (panel) {\includegraphics[width=\linewidth]{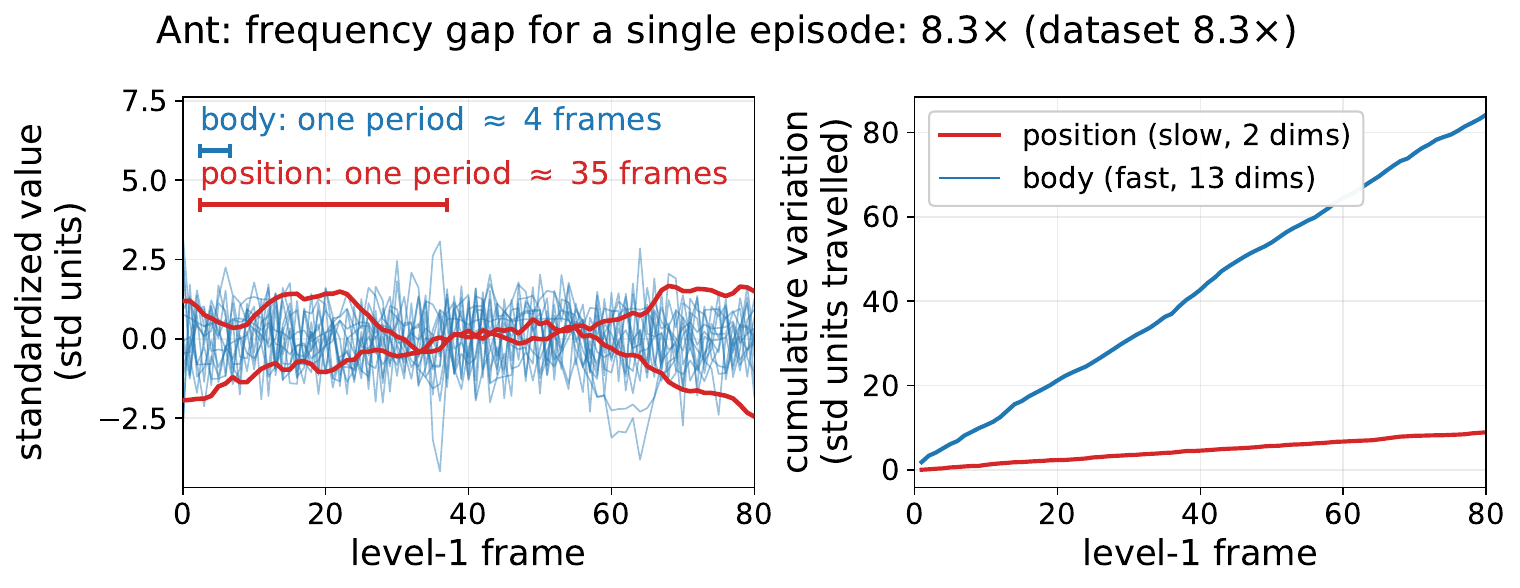}};
\node[anchor=north west, font=\small] at (panel.north west) {(a)};
\end{tikzpicture}
\end{subfigure}

\vspace{0.5em}

\begin{subfigure}{\linewidth}
\centering
\phantomsubcaption\label{fig:selective-bars}
\begin{tikzpicture}
\node[inner sep=0] (panel) {\includegraphics[width=\linewidth]{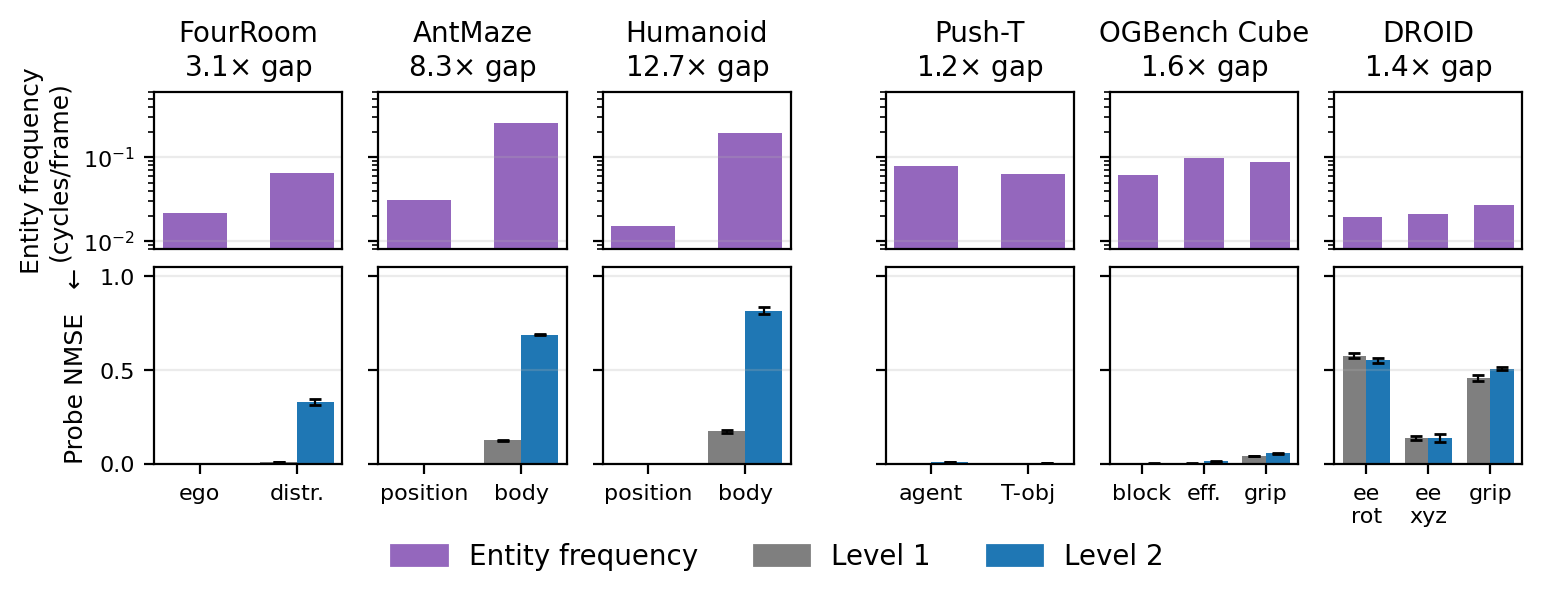}};
\node[anchor=north west, font=\small] at (panel.north west) {(b)};
\end{tikzpicture}
\end{subfigure}
\caption{\textbf{Temporal-frequency separation and selective abstraction.}
\emph{(a)} One AntMaze episode: normalized global position (red) and $13$
body-state dimensions (blue) over $81$ level-1 frames (left), and cumulative
absolute normalized increments averaged across dimensions (right). Inverse
spectral centroids are about $35$ frames for position and $4$ for body state
($8.3\times$ gap); body state accumulates roughly ten times more variation.
Including the omitted velocity dimensions leaves the frequency gap unchanged.
\emph{(b)} Per-entity spectral centroids in cycles per level-1 frame (top, log
scale) and paired level-1 and level-2 probe macro NMSE (bottom). The left group
shows FourRoom, AntMaze and Humanoid; the right shows Push-T, OGBench Cube and
DROID. Probe heads are matched across levels within each environment and differ
across environments. Error bars are one SE over three training seeds. All entities
in an environment share the same level-2 configuration.}
\label{fig:frequency-abstraction}
\end{figure}

The trajectory illustrates the slow position changes and fast
body-state variation summarized by the dataset-level frequency gap. Frequency
separation alone does not establish a difference in predictability, since fast
periodic motion can remain predictable. The per-entity probes below show how this
separation relates empirically to content retained at level 2 across environments.

\subsection{Per-entity probes across environments}
\label{app:selective-bars}

For each state entity, we compute the spectral centroid of its amplitude-normalized
trajectory stream at the level-1 sampling rate, pooling over episodes. The
entity-frequency gap is the ratio of the largest to smallest entity centroid.
Fig.~\ref{fig:selective-bars} expands Fig.~\ref{fig:selective} with each entity's
centroid and paired level-1 and level-2 probe NMSE, computed from two-level
H-JEPA models. Humanoid, AntMaze, and
FourRoom have gaps of $3.1$--$12.7\times$: body-state or distractor decoding
worsens at level 2 while agent position remains near its level-1 floor.
Push-T, OGBench Cube and DROID have gaps of $1.2$--$1.6\times$ and
show little selective abstraction, with comparable entity recoverability at
both levels. The relevant comparison is
the within-entity change between levels: absolute NMSE differs across environments,
particularly for DROID's natural video.

\section{Data specialization across levels}
\label{sec:dataset-per-level}

We test whether training each hierarchy level on a different type of data
improves planning. We use stagewise training, which freezes each lower level
before training the next, so each level can have its own dataset.

\begin{figure}[t]
\centering
\includegraphics[height=0.262\linewidth]{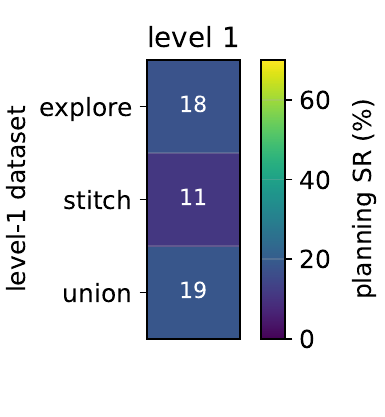}\hfill
\includegraphics[height=0.262\linewidth]{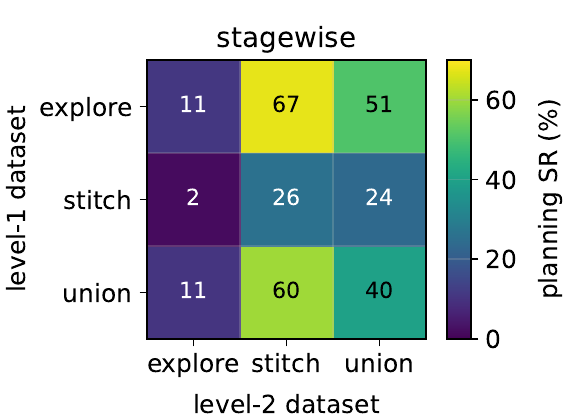}\hfill
\includegraphics[height=0.262\linewidth]{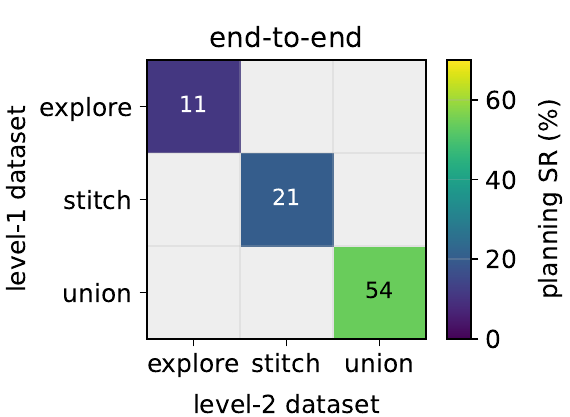}
\caption{\textbf{Planning with different training datasets on Visual AntMaze.}
Planning success rate (GD online, mean over three seeds), using
\texttt{explore}, \texttt{stitch}, or their \texttt{union} for training.
\emph{Left:} flat level-1 planning, with one model trained on each of the three
datasets. \emph{Middle:} two-level hierarchical planning after stagewise
training, covering all nine combinations of level-1 (rows) and level-2
(columns) datasets. \emph{Right:} two-level hierarchical planning after
end-to-end training, with both levels trained on the same dataset in
each run, giving only diagonal entries. Grey cells were not run.
The end-to-end runs are FLOP-matched to the stagewise ones.}
\label{fig:dataset-grid}
\end{figure}

\paragraph{Experimental setup.} We use a two-level hierarchy for ease of analysis,
with Visual AntMaze's two OGBench dataset types, \texttt{explore} and
\texttt{stitch}, and their \texttt{union}. For stagewise training, we vary the
dataset independently at each level; for end-to-end training, both
levels use the same dataset. Fig.~\ref{fig:dataset-grid} compares the
resulting two-level hierarchical planners with flat level-1 planners trained
on each of the three datasets.

\paragraph{Matched data: a second level helps only when the data is temporally
structured.} Along the diagonal, adding an abstract level improves long-horizon
planning on \texttt{stitch} and \texttt{union} (end-to-end hierarchical planning SR
rises to $21$ from the single-level $11$ on \texttt{stitch} and to $54$ from $19$ on
\texttt{union}) but not on \texttt{explore}, where neither two-level model
(\texttt{stagewise} $11$, end-to-end $11$) beats the single-level baseline ($18$).
We read this as the level-2 model being unable to form a useful temporal
abstraction from purely exploratory rollouts, whose trajectories lack the temporal
correlation the abstract level relies on; hierarchy then brings no performance gain.

\textbf{Mismatched data: the two levels prefer different data.} The pattern in the
\texttt{stagewise} panel is that the low-level model benefits from broad,
weakly-correlated exploration while the abstract level benefits from more
temporally coherent, goal-directed trajectories. The best configuration anywhere in
Fig.~\ref{fig:dataset-grid} is exactly this pairing (\texttt{explore} at
level 1 and \texttt{stitch} at level 2) reaching $67$, well above every
matched-dataset cell of either procedure, including end-to-end on
\texttt{union}/\texttt{union} ($54$) and stagewise on \texttt{union}/\texttt{union}
($40$), while the reverse assignment (\texttt{stitch} at level 1, \texttt{explore}
at level 2) collapses to $2$. It is the \emph{pairing} (exploratory data low,
coherent data high) not either dataset on its own, that drives the gain. This is
a practical case for stagewise training: freezing the lower-level model lets us
specialize the upper level's data without changing the lower level's learned
dynamics, a choice the default joint procedure does not offer.

\section{Planning-cost ablations across environments}
\label{sec:negative-control}

Table~\ref{tab:cost-ladder-appendix} extends the AntMaze cost ablation of
Table~\ref{tab:cost-ladder} to FourRoomDistractors, OGBench Cube and Push-T,
on the tasks of Fig.~\ref{fig:depth-planning} (bottom). Within each model,
we hold the level-1 rollout dynamics and planner settings fixed and vary only
the latent space used to score the goal cost.

\paragraph{FourRoom: gains from projected costs.}
As on AntMaze, at least one upper-level cost improves mean success over the
native level-1 cost at every tested depth. The best projection also exceeds
the separately trained LeWM baseline at three and four levels.
The deepest cost is not always best: the four-level model has its highest
mean success with the level-2 projection, although the projected-cost error
bars overlap.

\paragraph{Cube and Push-T: no clear gain from projected costs.}
These environments do not exhibit the selective abstraction observed on
AntMaze and FourRoom. On Cube, the level-2 projection lowers success in the
two-level model. At three and four levels it gives similar success to the native
cost, while the level-3 and level-4 projections roughly halve success.
On Push-T, projected and native costs give similar success at two and three
levels; the four-level model has near-zero success under every cost and with
hierarchical planning. These results show that an upper-level cost need not
improve planning; they do not establish that selective abstraction alone
causes the gains in the other environments.

\begin{table}[ht]
\centering
\caption{\textbf{Planning-cost ablations across environments.}
Success (\%, mean $\pm$ SE, three seeds), using the same protocol as
Table~\ref{tab:cost-ladder}. Within each H-JEPA row, only the cost space varies
across the four level-1 planning columns; LeWM and hierarchical H-JEPA are
references. For FourRoom, bold marks the highest mean among cost spaces.
FourRoom gains from projected costs, while Cube and Push-T show no clear benefit.}
\label{tab:cost-ladder-appendix}
\footnotesize
\setlength{\tabcolsep}{3pt}
\begin{tabular}{llccccc}
\toprule
& & \multicolumn{4}{c}{\shortstack{Level-1 planning:\\vary cost space}} & \shortstack{Hierarchical\\planner} \\
\cmidrule(lr){3-6}\cmidrule(lr){7-7}
Environment & Depth & Native L1 $\uparrow$ & L2 proj. $\uparrow$ & L3 proj. $\uparrow$ & L4 proj. $\uparrow$ & H-JEPA $\uparrow$ \\
\midrule
\multirow{4}{*}{\shortstack[l]{FourRoom\\Distractors}}
  & LeWM (flat)     & \DistrDLLewm & & & & \\
  & 2 levels & \DistrDLTwoNative   & {\boldmath\DistrDLTwoLtwo}   & & & \DistrDLTwoHier \\
  & 3 levels & \DistrDLThreeNative & \DistrDLThreeLtwo & {\boldmath\DistrDLThreeLthree} & & \DistrDLThreeHier \\
  & 4 levels & \DistrDLFourNative  & {\boldmath\DistrDLFourLtwo}  & \DistrDLFourLthree & \DistrDLFourLfour & \DistrDLFourHier \\
\midrule
\multirow{4}{*}{OGBench Cube}
  & LeWM (flat)     & \CubeDLLewm & & & & \\
  & 2 levels & \CubeDLTwoNative   & \CubeDLTwoLtwo   & & & \CubeDLTwoHier \\
  & 3 levels & \CubeDLThreeNative & \CubeDLThreeLtwo & \CubeDLThreeLthree & & \CubeDLThreeHier \\
  & 4 levels & \CubeDLFourNative  & \CubeDLFourLtwo  & \CubeDLFourLthree & \CubeDLFourLfour & \CubeDLFourHier \\
\midrule
\multirow{4}{*}{Push-T}
  & LeWM (flat)     & \PushtDLLewm & & & & \\
  & 2 levels & \PushtDLTwoNative   & \PushtDLTwoLtwo   & & & \PushtDLTwoHier \\
  & 3 levels & \PushtDLThreeNative & \PushtDLThreeLtwo & \PushtDLThreeLthree & & \PushtDLThreeHier \\
  & 4 levels & \PushtDLFourNative  & \PushtDLFourLtwo  & \PushtDLFourLthree & \PushtDLFourLfour & \PushtDLFourHier \\
\bottomrule
\end{tabular}
\end{table}

\section{Prediction-cost monotonicity}
\label{app:subgoal-monotonicity}

Table~\ref{tab:tmono} expands the comparison in
\S\ref{sec:cost-surface}. It reports goal-cost monotonicity over full
held-out trajectories and subgoal-cost monotonicity within shorter segments,
using the two-level HWM and H-JEPA models in the four simulated benchmarks. These measurements characterize
costs along expert trajectories rather than the success of optimized plans.

\begin{table}[H]
\centering
\caption{\textbf{Prediction-cost monotonicity} ($-\mathrm{Spearman}$ of
$\|z^{(\ell)}-g^{(\ell)}\|_2^2$ against time; higher is better; mean over three training
seeds, held-out episodes). \emph{L2} columns score each model's level-2 encoding against the final
goal; \emph{subgoal} scores a level-1 rollout, re-anchored to the true state at
each subgoal (bounded horizon), against the level-2 subgoals (per-segment).}
\label{tab:tmono}
\begin{tabular}{lccccc}
\toprule
& & \multicolumn{2}{c}{HWM} & \multicolumn{2}{c}{H-JEPA} \\
\cmidrule(lr){3-4}\cmidrule(lr){5-6}
Environment & Level-1 $\uparrow$ & Level-2 $\uparrow$ & subgoal $\uparrow$ & Level-2 $\uparrow$ & subgoal $\uparrow$ \\
\midrule
Visual AntMaze & \AntTmonoLone   & \AntTmonoHwmLtwo   & \AntTmonoHwmProj   & \textbf{\AntTmonoHjepaLtwo}   & \textbf{\AntTmonoHjepaProj} \\
Push-T         & \PushtTmonoLone & \PushtTmonoHwmLtwo & \PushtTmonoHwmProj & \textbf{\PushtTmonoHjepaLtwo} & \textbf{\PushtTmonoHjepaProj} \\
OGBench Cube   & \OgbTmonoLone   & \OgbTmonoHwmLtwo   & \OgbTmonoHwmProj   & \textbf{\OgbTmonoHjepaLtwo}   & \textbf{\OgbTmonoHjepaProj} \\
FourRoom Distractors & \DistrTmonoLone & \DistrTmonoHwmLtwo & \DistrTmonoHwmProj & \textbf{\DistrTmonoHjepaLtwo} & \textbf{\DistrTmonoHjepaProj} \\
\bottomrule
\end{tabular}
\end{table}

\section{DROID: corpus, planning evaluation and offline metric}
\label{app:droid}

DROID exposes no simulator, at least none that runs without the ray-tracing cores IsaacSim
requires~\citep{nvidia2025isaaclabgpuacceleratedsimulation}, so every DROID number in this
paper is offline: plans are scored against logged expert trajectories. This section
covers the evaluation protocol (\S\ref{app:protocol}), the metric and what it measures
(\S\ref{app:frechet}), the decoded plans (\S\ref{app:decoded}), and the re-encoded corpus
every DROID model is trained on (\S\ref{app:droid-pipeline}).

\subsection{Planning-evaluation protocol}
\label{app:protocol}

\paragraph{Clips.} The headline set is the \emph{16-clip evaluation set}: 16 pick-and-place
clips, with good lighting, good visibility of the manipulated objects from the left and
right cameras, and big enough manipulated objects. Once chosen, the exact frame indices of
each clip's evaluation slice are frozen so that every run scores the same initial and goal
frame and the same ground-truth actions. For each of the 16 clips, contact waypoints
$[\textit{init},\textit{grasp},\textit{pre-release},\textit{place}]$ are read off the
gripper-closure trace (grasp $=$ first frame with grip $>0.85\times$max), and each
clip is a fixed-length strided window \emph{centered on the grasp}. Episodes whose
window does not fit inside the video are dropped rather than clamped. Centering on
the grasp makes the evaluation a probe of non-greedy behavior: a plan that drives straight
to the deposit matches the ground-truth net displacement but misses the descent to the
object and its transport to the target. The metric of \S\ref{app:frechet} therefore scores
the shape of the whole path. Each window spans
$108$ recorded frames, $\sim$$7.2$\,s, with the goal as its last frame. A clip is read at
the frame rate its model was trained at, so the level-1 horizon is that span divided by the
stride: $H_1 = 36$ at $5$\,fps, the setting this paper trains and evaluates at.

\paragraph{Frame rates.} DROID is recorded at $15$\,Hz~\citep{khazatsky2024droid},
but its video containers are \emph{tagged} $60$\,fps, so a loader that derives its temporal
stride from the tag samples at a quarter of its configured rate. The frozen-encoder
baselines~\citep{terver2026whatdrives,assran2025vjepa2} are configured at $4$\,fps with such
a loader, which corresponds to $1$ frame per second of recorded time, and we evaluate them
at that native rate on the matching manifest (15 clips; one clip's window does not fit
at $1$\,fps).

\paragraph{Leak control.} Every DROID model in this paper is trained on one corpus: the
re-encoded corpus of \S\ref{app:droid-pipeline}, with the $16$ evaluation clips of
Fig.~\ref{fig:decoded-plans} removed. Plan-evaluation fidelity is therefore held out
throughout. Only those $16$ clips are removed, so the auxiliary unroll and light-eval
metrics remain train-set metrics.

\paragraph{Task and action space.} Actions are 7-dimensional end-effector delta-poses
(Cartesian $xyz$, roll-pitch-yaw, gripper). The goal is specified as a single RGB
frame, encoded by the model's own encoder; the planning cost is a Euclidean distance in
that latent space. All metrics use the first three (translation) dimensions; the
7-dimensional variant is dominated by gripper nuisance.

\subsection{The Fr\'echet fidelity offline metric}
\label{app:frechet}

Fr\'echet fidelity is defined in Eq.~\eqref{eq:skill}; the paths it compares are
$p_{1:T+1}$ and $\tilde{p}_{1:T+1}$, the origin-prepended cumulative $xyz$ paths of the
planned and ground-truth delta-pose sequences. Each planner seed in $\{42,43,44\}$ is reduced to
one scalar by averaging over clips; we report the median over
(training seed $\times$ planner seed) cells, because the per-clip mean is in absolute
meters and is therefore dominated by large-motion clips.

We need a path metric. The cumulative $xyz$ action trajectory error used in prior
work~\citep{terver2026whatdrives,NWM} compares net
displacements, $|\sum \text{planned} - \sum \text{ground truth}|$ per dimension, and is
therefore blind to the detour the grasp-centered protocol tests: a
straight-line plan to the deposit scores as well as one that descends to the object and
carries it. Within path metrics the choice was made empirically: Fr\'echet agrees with dynamic time warping
and with an arc-length-split reach/transport score.
We calibrate the metric in two simulated environments where the outcome is observable:
OGBench Cube, whose outcome is \emph{binary} (grasp success), and RoboCasa
CloseDrawer~\citep{nasiriany2024robocasa}, a kitchen task whose scenes vary between episodes
like DROID's and whose outcome is \emph{continuous} (graded drawer-closure progress). On
RoboCasa, we vary the planner on one frozen checkpoint to spread the outcomes, and read how
well the metric tracks them.

\paragraph{Against a binary outcome it is a necessary condition.} On OGBench Cube
(Fig.~\ref{fig:frechet-cube}) no episode succeeds below a floor of Fr\'echet fidelity, and
clearing that floor does not say which episodes will succeed. A grasp either closes or it
does not, and no continuous measure of path agreement separates the episodes that complete
it from those that come close. This is expected of any path metric.

\paragraph{Against a continuous outcome it ranks progress.} Scored against graded drawer
closure on RoboCasa CloseDrawer (Fig.~\ref{fig:robocasa-frechet}), the
within-configuration rank correlation is positive and moderate, and it survives controlling
for executed path length and for per-episode difficulty. The metric measures how far along
the expert path a plan got.

Together, the two calibrations set how we read the metric. The end of the path turns
progress into a completed task, so a plan can follow most of the expert path and still not
finish. On Cube
this shows up as a small offline advantage for the hierarchy alongside a large advantage in
grasp success: the Fr\'echet gap understates the outcome gap. On DROID, where no outcome
exists, we therefore read a gap in Fr\'echet fidelity as evidence that one planner is better,
not as a measure of how much better.

\begin{figure}[t]
\centering
\includegraphics[width=0.95\linewidth]{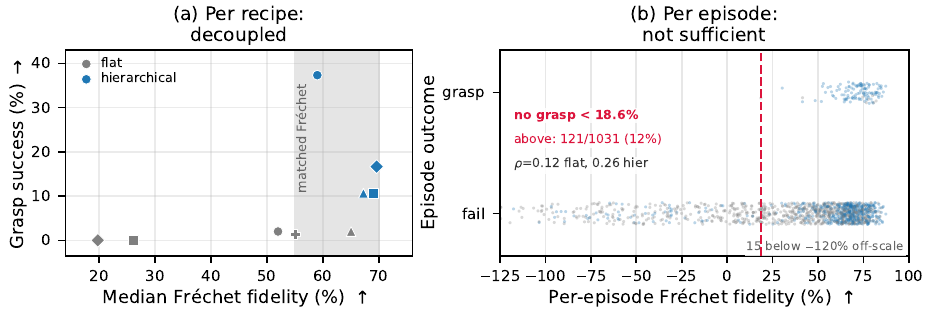}
\caption{\textbf{On OGBench Cube, where the outcome \emph{is} observable, Fr\'echet fidelity
separates the episodes that can succeed from those that cannot, but not the ones that do.} One frozen checkpoint,
$9$ configuration groups $\times$ $3$ planner seeds $\times$ $50$ episodes. \emph{(a)} One
point per planner configuration; in the shaded band the flat and hierarchical planners have
equal offline fidelity and opposite grasp outcomes. \emph{(b)} Every episode: none succeeds
below a fidelity threshold, and most above it still fail.
}
\label{fig:frechet-cube}
\end{figure}

\begin{figure}[t]
\centering
\includegraphics[width=0.80\linewidth]{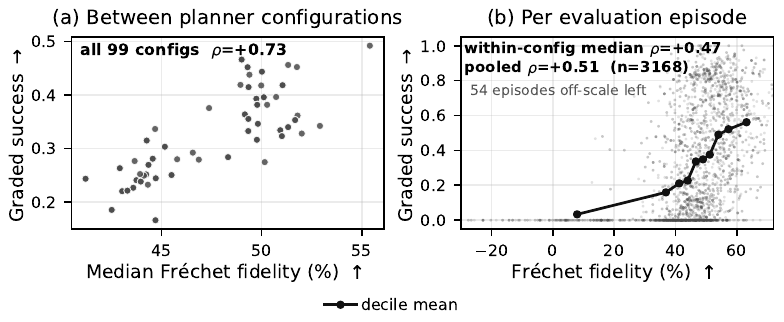}
\caption{\textbf{Against a continuous outcome the same offline metric ranks degree of
progress.} RoboCasa CloseDrawer, one frozen checkpoint, $99$ planner configurations
$\times$ $32$ held-out episodes, budget $200$ steps; the configurations are varied to spread
the outcomes, not tuned for task performance. $x$ is Fr\'echet fidelity, computed
exactly as on DROID; $y$ is graded drawer-closure success, a continuous outcome DROID
cannot report. \emph{(a)} One point per planner configuration, median fidelity against mean
graded success. \emph{(b)} Every episode, with the mean of each fidelity decile joined. The
claim rests on the \emph{within-configuration} rank correlation, which holds episode
difficulty and planner settings fixed; the pooled and between-configuration values are
inflated by a between-configuration mean shift.
}
\label{fig:robocasa-frechet}
\end{figure}

\subsection{Decoded plans}
\label{app:decoded}

The Fr\'echet numbers in \S\ref{sec:droid} summarize a plan by one scalar. Since DROID
plans cannot be executed, we decode them in Fig.~\ref{fig:decoded-plans}: the visual decoder is applied to the
level-1 latents the planner rolls out under its own committed commands, and to the
level-2 macro-plan that supplies its subgoals.

Two things are visible that the scalar does not carry. First, the imagined rollouts move
the gripper \emph{down toward the manipulated object} rather than straight at the goal
frame: the non-greedy behavior the evaluation clips are chosen to elicit. This holds even
for the clip scoring at the zero-action floor: a fidelity of zero means the planned
path did not follow the expert's, not that the plan had no idea where the object was. Second, the
level-2 macro-plan is a coarse but coherent version of the same trajectory, not a
degenerate one, as the coarse cost of Eq.~\eqref{eq:top-plan}, which scores only the final
predicted state, requires.
All 16 clips are shown, ordered best to worst, so both observations can be checked
against the full fidelity distribution rather than against a selection. The level-2
macro-plan advances one frame for every three of level 1, so it shares the level-1
strip rather than occupying a row of its own that would be two-thirds empty.

\begin{figure}[p]
\centering
\includegraphics[width=\linewidth]{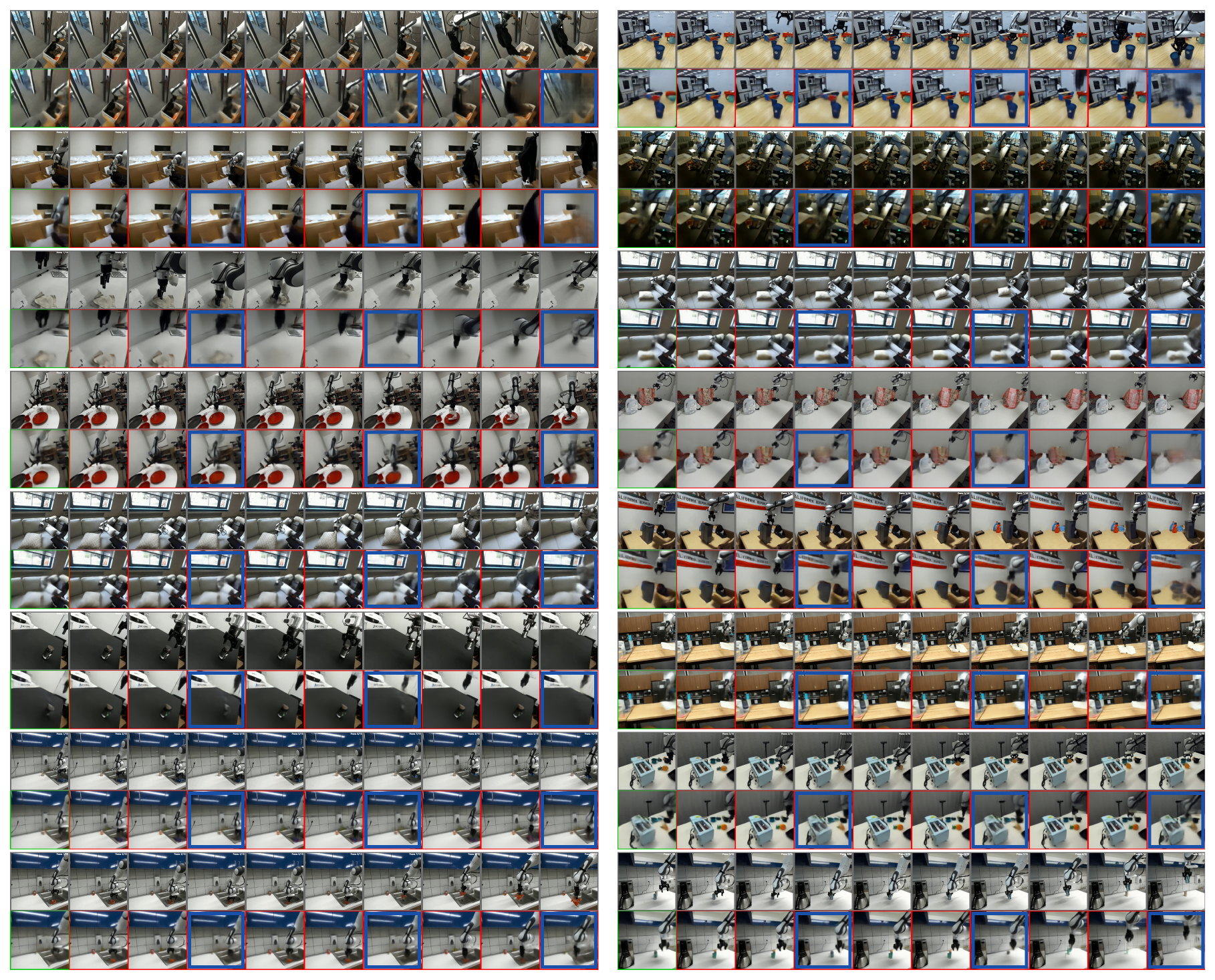}
\caption{\textbf{What the DROID planner imagines, on every clip of the evaluation set.}
Two-level stagewise model, held-out, training seed 1; all $16$ clips ordered best
to worst by Fr\'echet fidelity, down the left column then the right, $+75\%$ down
to $+0\%$ at the zero-action floor. Each clip is two strips: above, the logged expert clip;
below, the decoded plan, which is the level-1 rollout under the planner's committed commands
with the level-2 macro-plan interleaved into it at every third column (temporal stride
$3$), ringed in blue. Columns are timesteps. The leftmost is the context frame the planner
is given (green); the rest are imagined (red), and blur along the rollout, which is the
decoder's fidelity limit and not a planning failure.
}
\label{fig:decoded-plans}
\end{figure}

\subsection{The re-encoded training corpus}
\label{app:droid-pipeline}

Raw DROID is impractical to train on repeatedly: the loader would touch ${\sim}5.5$\,TB
and H.264 decoding, not the GPU, sets the epoch time. Every DROID result in this paper
therefore runs on a re-encoded copy: each of the three camera views downscaled to
$256\times256$ (short-side center crop, then scale) with \texttt{libx264} at CRF $18$, each
per-episode trajectory repacked into a flat \texttt{.npz}, and the stereo files our loader
never opens dropped. We keep every frame and its timestamp rather than decimating to the training rate, which
would collapse all stride phases into one.
The build is $127.6$\,GB for $74{,}530$ episodes, a ${\sim}45\times$ reduction that decodes
$17.9\times$ faster.
We measured the cost of the lossy re-encode: trajectories and assembled loader
samples come back bitwise identical, the video costs a median PSNR of $33.75$\,dB, and a
flat configuration retrained on the re-encoded corpus but scored on the \emph{original}
clips matches raw within noise.
We release the corpus, its manifests and the build script. DROID is
CC-BY 4.0~\citep{khazatsky2024droid}, which permits redistributing an adapted version with
credit and a statement of the change; we mark the re-encode as such.

\subsection{Future work}
\label{app:droid-future}

Every model here encodes one camera at a time: both settings pool the left and right
exterior views and sample one per training window. DROID's wrist camera is never used. Precise
manipulation is where that costs most, since the gripper and the object it closes on are
small from an exterior angle and often self-occluded, while a wrist view sees them
directly. Encoding all views jointly, including the wrist view, is the clearest route to
better performance; we leave it for future work.
Extending the method to videos without action labels would remove the IDM term, our lever
against slow-feature collapse. \Cref{app:slowfeature} suggests that SIGReg alone, applied so
that it repels the representations of different timesteps, cannot fully replace it.

\section{Anti-collapse: SIGReg, inverse dynamics and detachment}
\label{app:epps-pulley}
\label{app:anticollapse}

Given $Z^{(\ell)} \in \mathbb{R}^{n \times D}$, a matrix of $n$ level-$\ell$
embeddings of width $D$, SIGReg samples $M$ unit directions
$u_m \in \mathbb{S}^{D-1}$ and projects the embeddings to one-dimensional
samples $h_m = Z^{(\ell)}u_m$. It averages the Epps--Pulley normality
statistic~\citep{epps1983test,balestriero2025lejepa} across these projections:
\begin{equation}
\label{eq:sigreg}
\mathrm{SIGReg}\!\left(Z^{(\ell)}\right)
= \frac{1}{M}\sum_{m=1}^{M} \mathcal{T}_{\mathrm{EP}}(h_m),
\end{equation}
where $\mathcal{T}_{\mathrm{EP}}$ compares the empirical characteristic
function of $h_m$ to that of a standard Gaussian.
The statistic, its Gaussian weighting,
and the Cram\'er--Wold argument~\citep{cramer1936theorems} that reduces matching $\mathcal{N}(0,I_D)$ to matching every one-dimensional projection are those of \citet{balestriero2025lejepa}; the quadrature is that of
the implementation released with LeWM~\citep{maes2026lewm}, which exploits the even integrand to
work on the half-domain $[0,3]$ with $17$ trapezoidal knots, at $\mathcal{O}(n)$ cost in the $n$
samples entering one test.
With one embedding per frame, the tensor entering the test is $(T, N, D)$: one normality
test per timestep, over the $N$ sequences of the batch. Other terms also prevent
collapse: \S\ref{app:anticollapse-terms} separates what
it, the inverse-dynamics term and prediction-target detachment~\cite{ponce2026dualperspectives} each contribute, and
\S\ref{app:crossval} cross-validates the coefficients of the two mechanisms that have a clear effect, IDM and SIGReg.
\S\ref{app:sigreg-ddp} gives technical guidelines to implement SIGReg across devices.

\subsection{What SIGReg, IDM and detachment each contribute}
\label{app:anticollapse-terms}

In \S\ref{sec:droid}, we show an inverse-dynamics term is required on DROID. We test that with
a full grid over three candidate anti-collapse mechanisms, the SIGReg
weight, the IDM weight and prediction-target detachment (Fig.~\ref{fig:anticollapse}).

\textbf{Representation rank against planning fidelity.} SIGReg is what maintains the visual effective rank~\citep{RankMe} of the representation high enough: without it the
representation collapses at every IDM weight, with or without detachment, to an effective
rank between $1$ and $14$ out of $384$. Our models trained without SIGReg reach a planning performance below the ``do-nothing''
floor. Yet, visual representations with non-collapsed visual effective rank do not necessarily mean the JEPA world model will have high Fr\'echet fidelity after planning. With SIGReg on, every
model keeps a high rank, but models without an IDM term still plan close to the floor. Adding
the IDM term gives high planning fidelity, best at weight $100$.

\textbf{When detachment matters.} On this grid it does not change the outcome. With SIGReg
off, both variants collapse. With SIGReg on and an IDM term, the two variants differ by less
than the spread across planner seeds. We therefore do not use target detachment in our
headline setup (\cref{tab:hparams}).

\begin{figure}[t]
\centering
\includegraphics[width=\linewidth]{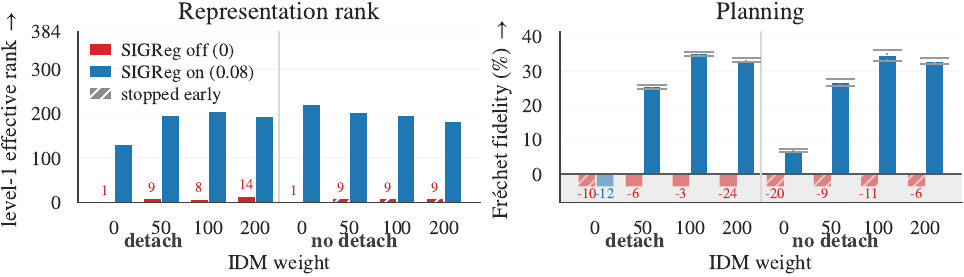}
\caption{\textbf{SIGReg restores representation rank; the inverse-dynamics term is what
turns rank into planning fidelity.} Full grid around the level-1 DROID configuration of
\cref{tab:hparams}: SIGReg $\in\{0,0.08\}$ $\times$ IDM weight $\in\{0,50,100,200\}$
$\times$ prediction-target detachment, $16$ cells, one training seed each, read at epoch
$100$. Five SIGReg-off cells (hatched) were stopped once collapsed and are read at their
last epoch.
\emph{Left:} level-1 effective rank~\citep{RankMe} of the \texttt{CLS} embedding, max $384$.
\emph{Right:} Fr\'echet fidelity on the 16-clip set; the grey band
holds sub-floor cells, drawn as fixed-depth stubs annotated with their true value. Each
planning bar is the mean over planner seeds $1$, $2$ and $3$ of one trained model, and its
error bar is the standard error (SE) over those three planner seeds.
}
\label{fig:anticollapse}
\end{figure}

\subsection{Cross-validating the SIGReg and inverse-dynamics coefficients}
\label{app:crossval}

In \S\ref{app:anticollapse-terms} we studied which terms the objective needs. Here we tune
their coefficients, the SIGReg coefficient and the inverse-dynamics weight, with a full grid
at level 1 (Fig.~\ref{fig:crossval-l1grid-cls}). We then freeze level 1 and run a grid
at level 2, pairing each level-2 training seed with the level-1 seed it builds on.

At level 1, the two coefficients have to be scaled together. A large inverse-dynamics weight
with a small SIGReg coefficient barely beats the do-nothing floor, and a large SIGReg
coefficient with a small inverse-dynamics weight falls below it. Between these two corners, a wide range of
pairs plans well. Level 2 is much less sensitive: with level 1 frozen, all nine combinations
reach a similar fidelity.

\begin{figure}[htbp]
\centering
\includegraphics[width=0.70\linewidth]{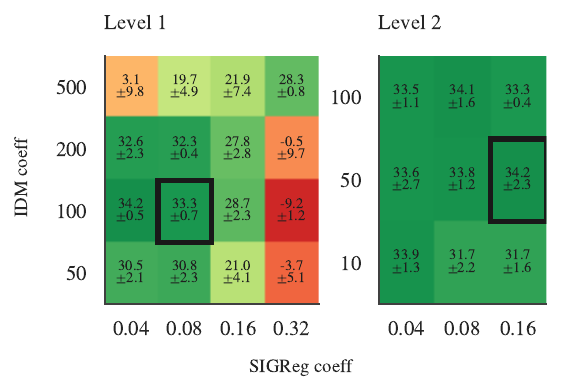}
\caption{\textbf{The SIGReg and inverse-dynamics coefficients are chosen jointly, once per
level.} DROID at $5$\,fps; Fr\'echet fidelity (\%, higher is better), mean and standard
error over three training seeds, each averaged over three planner seeds. \emph{Left}, the
level-1 grid on single-level models; \emph{right}, the level-2 grid on stagewise models with
level 1 frozen. Both use the gradient-based planner of our headline results, and share one
color scale. The heavy outline marks the coefficients we keep; at level 1 they lie within one
standard error of the best cell.
}
\label{fig:crossval-l1grid-cls}
\end{figure}

\subsection{SIGReg under data parallelism}
\label{app:sigreg-ddp}

$\mathcal{T}_{\mathrm{EP}}$ is a statistic of the whole batch, not a sum of per-sample terms. Evaluating it
on each rank's $n_{\mathrm{loc}} = n/W$ samples and averaging the $W$ results gives the mean of $W$ small-sample statistics, which shifts with the number of devices. We synchronize
one level lower. The empirical characteristic function (ECF) underneath the statistic is a mean
over samples, so with equal shards the global ECF is the average of the per-rank ECFs: we
\texttt{all\_reduce} the local real and imaginary ECFs before the quadrature and pass
$n_{\mathrm{loc}}W$ into the sample-count prefactor. Two $M\times17$ tensors cross the wire per
call, one entry per direction and quadrature knot, whatever the batch size, and the statistic
becomes exactly invariant to sharding (\Cref{tab:sigreg-ws}). The remaining $n$ dependence
is deliberate: the prefactor cancels the $1/n$ variance of the ECF, so on target data the
statistic sits at a floor ($1.04$ to $1.37$ over $n = 32$ to $2048$) and on collapsed data it
grows linearly in $n$, holding the anti-collapse gradient per sample constant. Thus $\lambda$
transfers across batch sizes, but not across sample definitions.

\begin{table}[ht]
\centering
\small
\caption{\textbf{Our distributed SIGReg is exactly invariant to how a fixed total batch is
sharded; a single-GPU implementation is not.} One fixed Gaussian matrix ($n=256$, $D=32$,
$M=64$ directions) sharded contiguously across $W$ ranks in \texttt{float64}. The reference is the SIGReg released with LeWM~\citep{maes2026lewm}, formula-identical to ours but with its prefactor set by the local batch and no collective; its code documents itself as single-GPU. The data follow the target, so its $14\%$ drift is launch topology, not signal.}
\label{tab:sigreg-ws}
\begin{tabular}{lccc}
\toprule
& & \multicolumn{2}{c}{$\mathcal{T}_{\mathrm{EP}}$} \\
\cmidrule(l){3-4}
Shards & $\|\nabla\|$ (ours) & ours & single-GPU reference \\
\midrule
$1 \times 256$ & $0.349154178012$ & $1.372619178543$ & $1.372619$ \\
$2 \times 128$ & $0.349154178012$ & $1.372619178543$ & $1.180770$ \\
$4 \times 64$  & $0.349154178012$ & $1.372619178543$ & $1.186233$ \\
\bottomrule
\end{tabular}
\end{table}

Equal shards are assumed throughout, so we drop the last partial batch.

\section{Slow-feature collapse in joint-embedding predictive objectives}
\label{app:slowfeature}

This appendix argues for the inverse-dynamics term of \S\ref{sec:droid}. \citet{sobal2022slowfeatures} exhibit an encoder that copies episode-constant noise and on which a variance--covariance regularizer vanishes. That shows a degenerate encoder attains zero loss, not that the objective \emph{prefers} it; it assumes the identity predictor; and it covers only Gaussian noise. We strengthen all three points; no result assumes a parametric encoder or predictor. We suppress the level index $\ell$ of
\Cref{tab:notation}: $E$, $F$, $I$ are the encoder, predictor and inverse-dynamics head,
$z_t = E(o_t)$, $\mathcal{L}_{\mathrm{pred}}$ and $\mathcal{L}_{\mathrm{idm}}$ the losses of
Eqs.~\eqref{eq:pred-loss} and~\eqref{eq:idm-loss}, $\lambda$, $\gamma$, $\lambda_{\mathrm{sim}}$
the SIGReg, IDM and temporal-similarity weights, and $N$, $T$, $D$ the minibatch shapes of
\Cref{tab:notation}. SIGReg averages $G$ independent $n$-sample normality tests over
$Gn$ minibatch embeddings.

\subsection{Three kinds of content in a sequence of observations}
\label{app:sf-defn}

The observation $o_t = O(\xi_t)$ is a deterministic function of an unobserved latent state $\xi_t = (b, x_t, y_t)$ taking values in a standard Borel space, and $z_t = E(o_t)$ encodes it. The \textbf{background} $b$ is
constant within an episode, varies across episodes (scene, camera, lighting), and is readable from a single frame: $b$ is $\sigma(o_t)$-measurable. The \textbf{foreground} $x_t$ is the agent-controlled content,
$x_{t+1} = \Phi(x_t, a_t, \varepsilon_{t+1})$, where the innovation $\varepsilon_{t+1}$ is a
random input independent of the past that makes the foreground only \emph{partially}
predictable (contact, occlusion). The \textbf{distractors} $y_t$ move but
are action-independent and unpredictable, $y_{t+1} = \Psi(y_t,\zeta_{t+1})$ with their own
innovation $\zeta_{t+1}$ (other agents, shadows), exogenous in the sense of \citet{Efroni2021ProvableRW,pmlr-v162-wang22c}. A distractor that moves predictably (a clock, a fan) has no innovation, and nothing below separates it from the background.

\begin{definition}[Slow-feature collapse]
\label{def:sf-collapse}
An encoder is \emph{slow-feature collapsed} when $z_t$ is measurable with respect to the
episode-invariant $\sigma$-algebra $\mathcal{C} = \sigma(b)$, the events determined by the background alone, so $z_t$ is constant within every episode. Its \emph{degree}
is the within-episode variance share
\begin{equation}
\label{eq:chi}
\chi(E) = \frac{\mathbb{E}_{b}\!\left[\operatorname{tr}\operatorname{Var}_{t}(z_t \mid b)\right]}
{\operatorname{tr}\operatorname{Var}_{b,t}(z_t)},
\end{equation}
the complement of the between-episode share measured in \S\ref{app:varcomp}, and $0$ at exact collapse.
\end{definition}

\subsection{Where the latent variance goes}
\label{app:varcomp}

\begin{wrapfigure}{r}{0.46\textwidth}
\vspace{-7pt}
\centering
\includegraphics[width=\linewidth]{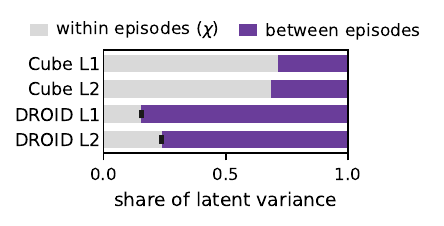}
\caption{\textbf{Scene diversity sends
latent variance between episodes.} Within-episode (the slow-feature degree $\chi$ of
\Cref{def:sf-collapse}) and between-episode shares of the latent variance.
Bars are $\pm$1 standard error (SE) over three training seeds.
}
\label{fig:varcomp}
\vspace{-18pt}
\end{wrapfigure}

\S\ref{sec:droid} claims scene diversity separates DROID from Cube. We measure
it directly in the latents, splitting the total latent variance into a within-episode and a
between-episode part (Fig.~\ref{fig:varcomp}). Neither latent size nor camera site
explains the gap.
The within-episode part is $\chi(E)$ of \Cref{def:sf-collapse}, so the diverse-scene
environments start close to collapse. The cost is capacity, not
reachability: a per-episode mean cancels in a same-episode state-minus-goal difference, but
it leaves little of the latent for the arm. Level 2 barely moves the share, hence the inverse-dynamics term at both levels.

\subsection{No regularizer on the marginal law can exclude it}
\label{app:sf-feasibility}

\begin{proposition}[Feasibility]
\label{prop:sf-feasibility}
Let $\mathcal{R}$ be any functional of the marginal law of $z$, the distribution of a single
embedding $E(o)$ over the pooled corpus irrespective of time, with $\mathcal{R}(\mu_0) = 0$
for some law $\mu_0$ on $\mathbb{R}^D$, and let $\operatorname{law}(b)$ be atomless. Then some $E$ and $F$ attain $\mathcal{L}_{\mathrm{pred}} = 0$ and
$\mathcal{R}(\operatorname{law}(z)) = 0$ simultaneously, with $F$ the identity in its state
argument.
\end{proposition}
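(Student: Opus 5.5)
The plan is to build a slow-feature collapsed encoder in the sense of \Cref{def:sf-collapse}: $E$ reads only the background $b$ from each frame, with $E(o)=\phi(b(o))$ for a measurable map $\phi$ chosen so that the pushforward of $\operatorname{law}(b)$ under $\phi$ is exactly $\mu_0$. Such a $\phi$ exists because of two facts. First, $b$ takes values in a standard Borel space (it is a coordinate of $\xi_t$) and its law is atomless, so by the Borel isomorphism theorem for measures there is a measurable $\psi$ from the background space to $[0,1]$ with $\psi_\#\operatorname{law}(b)=\mathrm{Unif}[0,1]$. Second, $\mathbb{R}^D$ is standard Borel, so any law $\mu_0$ on it is the pushforward of $\mathrm{Unif}[0,1]$ under some measurable $\kappa$; for $D=1$ this is the quantile map, and in general it is obtained by composing with a Borel isomorphism $\mathbb{R}^D\cong[0,1]$ or $\mathbb{N}$ and handling atoms. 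Setting $\phi=\kappa\circ\psi$ gives $\phi_\#\operatorname{law}(b)=\mu_0$. Measurability of $E$ as a function of $o$ follows from $b$ being $\sigma(o_t)$-measurable: by the Doob--Dynkin lemma there is a measurable $\beta$ with $b=\beta(o_t)$, and we take $E=\phi\circ\beta$.

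Next we check the marginal law. The pooled corpus draws an episode, hence a background $b$, and then a time $t$. Since $z_t=\phi(b)$ does not depend on $t$, the marginal law of a single embedding is the mixture over $t$ of $\phi_\#\operatorname{law}(b)$. This equals $\mu_0$ provided the background marginal is the same at every sampled time, which holds because $b$ is episode-constant. Then $\mathcal{R}(\operatorname{law}(z))=\mathcal{R}(\mu_0)=0$.

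For the prediction term, take $F(z_{<\tau},a_{<\tau})=z_\tau$, the identity in the latest state argument that ignores actions. Within an episode every $z$ equals $\phi(b)$, so each summand of Eq.~\eqref{eq:pred-loss} is $\|\phi(b)-\phi(b)\|_2^2=0$ almost surely. This gives $\mathcal{L}_{\mathrm{pred}}=0$, both as an expectation and on every minibatch drawn from single episodes.

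I expect the main obstacle to be the measure-theoretic transport in the first paragraph, specifically an arbitrary $\mu_0$ (possibly with atoms) on $\mathbb{R}^D$ fed from an atomless $\operatorname{law}(b)$ on a general standard Borel space. I would handle it by the uniform intermediary exactly as above: the atomless-to-uniform step is Kuratowski's isomorphism theorem, and the uniform-to-$\mu_0$ step is the generalized inverse-CDF after reducing $\mathbb{R}^D$ to $\mathbb{R}$ by a Borel isomorphism. Atomlessness is used only in the first step. That is why it is a hypothesis: an atomic background law cannot be pushed onto, for example, a Gaussian $\mu_0$ as SIGReg requires.
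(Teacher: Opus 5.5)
Your proposal is correct and follows the paper's proof: push the atomless $\operatorname{law}(b)$ to $\mathrm{Unif}[0,1]$, push that to $\mu_0$, compose with the map that reads $b$ off $o$ (the paper's $E=Q\circ U\circ b$), and take $F$ to be the identity in its state argument. One simplification: build $\psi$ from the pooled-frame law of $b$ itself, so that the marginal of $z$ is $\phi_\#\operatorname{law}(b)=\mu_0$ directly. You then do not need your claim that the background marginal is the same at every time $t$, which episode-constancy alone does not guarantee.
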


\begin{proof}
An atomless law on a standard Borel space admits $U$ with $U(b)\sim\mathrm{Unif}[0,1]$, and $Q$ pushing $\mathrm{Unif}[0,1]$ to $\mu_0$ exists for any $\mu_0$; take $E = Q\circ U\circ b$, well defined on $o$ because $b$ is $\sigma(o)$-measurable (\S\ref{app:sf-defn}), and $F(z,a) = z$. Then $z_{t+1} = z_t$ within every episode, so $\mathcal{L}_{\mathrm{pred}} = 0$, and $\operatorname{law}(z) = \mu_0$.
\end{proof}

The atomless hypothesis is a population idealization. A corpus of finitely many scenes makes $\operatorname{law}(b)$ atomic and $\operatorname{law}(z)$ supported on at most that many points, which a normality test on far fewer embeddings per group cannot distinguish from $\mu_0$; the construction fails outright only in the point-mass limit of a single scene (\S\ref{app:sf-optimality}). Variance--covariance regularization, whitening, entropy penalties and SIGReg (Eq.~\eqref{eq:sigreg}) all look only at the marginal law, so none can exclude the collapse; a barrier has to act on the joint law of $z$ across time.

\subsection{The objective selects by predictability, and the collapse is an optimum}
\label{app:sf-optimality}

Since $F$ ranges over measurable functions, $\min_F\mathcal{L}_{\mathrm{pred}}$ equals the
\emph{irreducible risk}
$\mathcal{E}(E) = \mathbb{E}\|z_{t+1}-\mathbb{E}[z_{t+1}\mid z_t,a_t]\|_2^2$~\citep{Weghenkel2016GraphbasedPF}. Define the \emph{innovation sensitivity}
$\nu(E) = \mathbb{E}\operatorname{tr}\operatorname{Var}(E(o_{t+1})\mid \xi_t,a_t)$, the
next-embedding variance that survives knowing the full latent state and the action.

\begin{theorem}[Predictability bound]
\label{thm:sf-optimality}
For any measurable $E$, $\;\mathcal{E}(E)\geq\nu(E)\geq 0$, and $\mathcal{E}(E) = 0$ if and
only if $\sigma(z)$ is \emph{action-closed}, i.e.\ $z_{t+1} = \Gamma(z_t,a_t)$ almost surely
for some measurable $\Gamma$.
\end{theorem}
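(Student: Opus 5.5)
The plan is to read $\mathcal{E}(E)$ as a conditional variance and compare two nested $\sigma$-algebras through the Pythagorean identity for conditional expectations in $L^2$. Throughout I would assume $\mathbb{E}\|z_{t+1}\|_2^2<\infty$; otherwise $\mathcal{E}(E)=+\infty$ and the inequality is trivial. Write $\mathcal{G}=\sigma(z_t,a_t)$ and $\mathcal{H}=\sigma(\xi_t,a_t)$. Since $z_t=E(O(\xi_t))$ is a measurable function of $\xi_t$ (\S\ref{app:sf-defn}), we have $\mathcal{G}\subseteq\mathcal{H}$. This inclusion is the only structural fact the bound needs. In particular, neither the foreground/distractor dynamics nor any Markov property enter.

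\textbf{Step 1 (the bound).} Set $m_{\mathcal{G}}=\mathbb{E}[z_{t+1}\mid\mathcal{G}]$ and $m_{\mathcal{H}}=\mathbb{E}[z_{t+1}\mid\mathcal{H}]$. Decompose
\[
z_{t+1}-m_{\mathcal{G}}=(z_{t+1}-m_{\mathcal{H}})+(m_{\mathcal{H}}-m_{\mathcal{G}}).
\]
The first term is orthogonal in $L^2$ to every $\mathcal{H}$-measurable square-integrable variable. The second term is such a variable, because $\mathcal{G}\subseteq\mathcal{H}$. Summing over coordinates therefore gives
\[
\mathcal{E}(E)=\mathbb{E}\|z_{t+1}-m_{\mathcal{H}}\|_2^2+\mathbb{E}\|m_{\mathcal{H}}-m_{\mathcal{G}}\|_2^2 .
\]
The first summand equals $\mathbb{E}\operatorname{tr}\operatorname{Var}(z_{t+1}\mid\xi_t,a_t)=\nu(E)$ by the tower property. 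The second summand is nonnegative, so $\mathcal{E}(E)\ge\nu(E)\ge0$. A by-product is that the slack is exactly the part of the next-embedding mean that $z_t$ fails to capture about $\xi_t$.

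\textbf{Step 2 (the characterization).} First, $\mathcal{E}(E)=0$ holds iff $z_{t+1}=m_{\mathcal{G}}$ almost surely. By the Doob--Dynkin lemma, $m_{\mathcal{G}}=\Gamma(z_t,a_t)$ for some measurable $\Gamma$, which gives action-closure. Conversely, suppose $z_{t+1}=\Gamma(z_t,a_t)$ almost surely. Then $z_{t+1}$ is $\mathcal{G}$-measurable, so $m_{\mathcal{G}}=z_{t+1}$ and the residual vanishes. For this converse I would note that square-integrability of $z_{t+1}$ is inherited from the standing assumption. Step 2 also shows that the infimum over $F$ in the definition of $\mathcal{E}$ is attained by $F=\Gamma$.

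\textbf{Expected obstacle.} The main difficulty is not the algebra but the measure-theoretic bookkeeping. First, one must check that $\operatorname{Var}(\cdot\mid\xi_t,a_t)$ is well defined; this needs a regular conditional law, which is available because $\xi_t$ lives in a standard Borel space and $a_t$ in Euclidean space. Second, one must justify the paper's identification $\min_F\mathcal{L}_{\mathrm{pred}}=\mathcal{E}(E)$, i.e.\ that the conditional expectation is the $L^2$-optimal measurable predictor. I would state both facts as standard and keep the argument at the level of $L^2$ projections, where they are immediate.
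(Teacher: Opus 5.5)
Your proposal is correct and follows the paper's proof. It uses the same inclusion $\sigma(z_t,a_t)\subseteq\sigma(\xi_t,a_t)$ and the same fact that coarser conditioning cannot lower expected conditional variance; your Pythagorean decomposition proves that fact and additionally gives the exact slack $\mathbb{E}\|m_{\mathcal H}-m_{\mathcal G}\|_2^2$, and your Doob--Dynkin step is the paper's equality case. One minor slip: $z_{t+1}\notin L^2$ does not force $\mathcal{E}(E)=+\infty$ (an action-closed, heavy-tailed $z_{t+1}$ has $\mathcal{E}(E)=0$), so the cleaner reduction is to apply your argument to $z_{t+1}-m_{\mathcal G}$, which lies in $L^2$ whenever $\mathcal{E}(E)<\infty$; also, conditional variance is defined directly by conditional expectations, so you need no regular conditional law.
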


\begin{proof}
$\sigma(z_t,a_t)\subseteq\sigma(\xi_t,a_t)$ because $o_t = O(\xi_t)$, and coarser conditioning cannot reduce expected conditional variance, which gives $\mathcal{E}(E) = \mathbb{E}\operatorname{tr}\operatorname{Var}(z_{t+1}\mid z_t,a_t) \geq \nu(E)$; $\mathcal{E}(E) = 0$ iff $\operatorname{Var}(z_{t+1}\mid z_t,a_t) = 0$ a.s., i.e.\ iff $z_{t+1}$ is a measurable function of $(z_t,a_t)$.
\end{proof}

Reading only $b$ gives $\nu = 0$ and $\Gamma = \mathrm{id}$, hence $\mathcal{E} = 0$; reading
$x_t$ or $y_t$ pays the variance injected by $\varepsilon$ or $\zeta$, unless the feature read is itself action-closed (a component of $x_t$ that $(x_t,a_t)$ determines, such as arm state under deterministic dynamics), which attains $\mathcal{E} = 0$ as well. Under any marginal constraint, which \Cref{prop:sf-feasibility} shows the background can satisfy exactly, the collapse is therefore an optimum the objective cannot exclude. Among zero-loss solutions the theorem does not select; the implicit bias of training does~\citep{littwin2024how}. In our runs it selects the collapse, whose predictor is the identity: action sensitivity, the relative change of the predicted rollout when the
true actions are replaced by those of another episode, falls below $0.05$ in all fifteen seeds of the SIGReg ladder (\Cref{tab:sf-idm0}). In the tabular case the theorem is a corollary of \citet[Thm.~6]{tang2023understanding_self_predictive}, whose non-collapse guarantee needs semi-gradient targets and a near-optimal predictor; our collapsed runs use the former and lack the latter. Conversely, a background identical in every episode makes $\operatorname{law}(b)$ a point mass, so \Cref{prop:sf-feasibility} fails and SIGReg alone excludes the collapse: the fixed-camera environments of \S\ref{sec:long-horizon} need no inverse-dynamics term, and DROID, whose scenes change every episode, does (\S\ref{sec:droid}, Fig.~\ref{fig:anticollapse}).

\paragraph{Experiments of \citet{sobal2022slowfeatures}.}
The taxonomy and \Cref{thm:sf-optimality} account for three of their results. \emph{Fixed} distractors are fatal and \emph{changing} ones
harmless: fixed noise is background and becomes the
collapse target, changing noise is a distractor and is shed. Their three-dot probe errors
order as $\nu$ does: stationary $0.066$, action-controlled $0.277$, random $0.273$, chance
$0.235$. And their inverse-dynamics model recovers the action-controlled dot ($0.035$) but
loses the stationary one ($0.272$), as \Cref{prop:sf-idm} predicts.

\subsection{Pooling time into the sample axis: a real but small barrier}
\label{app:sf-barrier}

\Cref{prop:sf-feasibility} concerns the distribution of a single embedding, but SIGReg is computed on a minibatch. At exact collapse every sequence contributes $T$ identical copies of one embedding, and a grouping with time in the sample axis, such as a single test on the flattened $(NT, D)$
tensor, puts them into the same normality test.

\begin{proposition}[The barrier, per grouping]
\label{prop:sf-barrier}
Let $\kappa$ be the expected value of $\mathcal{T}_{\mathrm{EP}}$ (\S\ref{app:epps-pulley}) on $n$
samples drawn from its Gaussian target; it does not depend on $n$. At exact collapse, with the per-episode embeddings distributed as the target (the construction of \Cref{prop:sf-feasibility}), the SIGReg term exceeds its null value $\lambda\kappa$ by
\begin{equation}
\label{eq:sf-barrier}
\Delta = \lambda\,(T-1)\,\kappa
\end{equation}
whenever the grouping places time in the sample axis, so that a group holds several
timesteps of the same sequence, and by zero whenever it places time in the group axis.
\end{proposition}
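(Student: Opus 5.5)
The plan is to compute the expected Epps--Pulley statistic exactly, exploiting its structure as a weighted squared distance between characteristic functions. Following \S\ref{app:epps-pulley}, for one projected sample $h=(h_1,\ldots,h_n)$ I write
\[
\mathcal{T}_{\mathrm{EP}}(h)=n\int|\hat\varphi_n(s)-\varphi(s)|^2\,w(s)\,ds,\qquad \hat\varphi_n(s)=\frac1n\sum_{i=1}^n e^{\mathrm{i}sh_i},\qquad \varphi(s)=e^{-s^2/2},
\]
with $w$ the Gaussian weight. The quadrature is a fixed linear functional, so it does not affect the argument. By Fubini, $\mathbb{E}\,\mathcal{T}_{\mathrm{EP}}=n\int\mathbb{E}|\hat\varphi_n-\varphi|^2\,w$. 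Whenever each $h_i$ is marginally $\mathcal{N}(0,1)$, the ECF is unbiased, so the integrand reduces to the variance of $\hat\varphi_n(s)$.

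I would first treat the null case, which defines $\kappa$. For i.i.d.\ $h_i\sim\mathcal{N}(0,1)$, $\operatorname{Var}\hat\varphi_n(s)=(1-\varphi(s)^2)/n$. This gives
\[
\kappa=\int\bigl(1-e^{-s^2}\bigr)\,w(s)\,ds,
\]
which is independent of $n$, as the statement asserts. Every random direction $u_m$ projects an $\mathcal{N}(0,I_D)$ embedding to $\mathcal{N}(0,1)$, and both the SIGReg average over $M$ directions and the average over $G$ groups are linear. So the expected SIGReg term is $\lambda\kappa$ at the null, and it suffices to compute one test per grouping.

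Next comes the collapsed case. By the construction of \Cref{prop:sf-feasibility}, each sequence carries a single embedding, drawn from the target and independent across sequences, repeated at all $T$ timesteps.

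If time sits in the group axis, the grouping is $(T,N,D)$. Each group then contains one embedding per sequence, and these are i.i.d.\ target draws. The expectation is exactly $\kappa$, so the excess is zero.

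If time sits in the sample axis, for example with the flattened $(NT,D)$ tensor, a group of $n$ samples consists of $n/T$ i.i.d.\ values, each repeated $T$ times. I will assume sequences enter groups whole, which I would state as the grouping convention. The ECF then equals the ECF of the $n/T$ distinct values and is still unbiased, but its variance becomes $T(1-\varphi^2)/n$. Hence $\mathbb{E}\,\mathcal{T}_{\mathrm{EP}}=T\kappa$, and after weighting by $\lambda$ the excess is $\lambda(T-1)\kappa$, which is Eq.~\eqref{eq:sf-barrier}.

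The main obstacle is bookkeeping about what ``a group holds several timesteps'' means. If a group holds only $T'<T$ timesteps of some sequences, the same computation gives excess $\lambda(T'-1)\kappa$, or more generally a multiplicity-weighted version. I would therefore phrase the proof for groups containing all $T$ copies and remark on the partial case. I would also check that the finite-$n$ prefactor of the implementation (\S\ref{app:sigreg-ddp}) is exactly $n$, so that $\kappa$ is truly $n$-free. If it is not, I would restate $\kappa$ as the null expectation at the relevant $n$, and the ratio argument still yields the factor $T$.
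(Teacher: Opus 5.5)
Your proposal is correct and follows the paper's own argument. When each of $n/T$ distinct values is repeated $T$ times in a group, the ECF equals that of the $n/T$ values, so its variance about the target is that of $n/T$ samples while the prefactor stays $n$, giving $\mathbb{E}[\mathcal{T}_{\mathrm{EP}}]=T\kappa$; with time in the group axis every group holds $n$ i.i.d.\ target draws. Your closed form $\kappa=\int(1-e^{-s^2})\,w(s)\,ds$ and your remark that the formula needs each group to contain all $T$ copies of a sequence make explicit what the paper's short proof takes for granted.
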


\begin{proof}
$\mathcal{T}_{\mathrm{EP}}$ compares an empirical characteristic function, an unweighted average over the $n$ samples of a group, against its target, and scales the squared discrepancy by $n$. Repeating each of $k = n/T$ distinct values $T$ times leaves that average unchanged, so its variance about the target remains that of $k$ samples while the prefactor stays $n$, giving $\mathbb{E}[\mathcal{T}_{\mathrm{EP}}] = T\kappa$. With time in the group axis every group holds $n$ distinct embeddings.
\end{proof}

The grouping this paper uses, $(T,N,D)$, places time in the group axis: each normality test
sees one timestep of $N$ different sequences, so $\Delta = 0$ and SIGReg carries no barrier
against slow-feature collapse. Where a grouping does carry one, the barrier is
measured against the null value $\lambda\kappa$, not against the value the statistic takes
on a trained encoder, so a collapsed encoder need not score above a healthy one.
In \Cref{tab:sf-idm0} it does not prevent collapse: every run of the SIGReg ladder collapses,
later at larger $\lambda$ but to the same endpoint, with $\chi$ between $0.008$ and $0.016$
on every run past epoch~$50$.

\begin{table}[ht]
\centering
\footnotesize
\setlength{\tabcolsep}{3.1pt}   %
\caption{\textbf{Training without the inverse-dynamics term: the SIGReg
ladder (top) and time repulsion at its largest coefficient (middle).} Level-1 models at
$5$\,fps, $\gamma = 0$, three train seeds, $100$ epochs. These runs feed SIGReg the flattened
$(NT, D)$ tensor, one normality test over all $NT$ embeddings of the minibatch, which places
time in the sample axis, so they carry the $\Delta$ of \Cref{prop:sf-barrier};
the barrier is what the ladder probes. ``Onset'' is the first epoch at which Fr\'echet fidelity (Eq.~\eqref{eq:skill}) drops clearly below the zero-action floor, the next column the epoch at which the predictor's action sensitivity first falls below $0.05$ (seed spread $\pm 2$ epochs). Best fidelity is the highest before turning over; final fidelity and sensitivity are read over epochs $98$--$100$; $\chi$ is Eq.~\eqref{eq:chi} on the final checkpoint (one seed for the repulsion runs, three for the IDM-on row). Ladder rows carry no terminal value (at most two seeds reach epoch $100$).}
\label{tab:sf-idm0}
\begin{tabular}{llcccccc}
\toprule
$\lambda$ & $\lambda_{\mathrm{sim}}$ & Onset $\uparrow$ & Sens.\ $<0.05$ $\uparrow$ & Best fidelity $\uparrow$ & Final fidelity $\uparrow$ & Final sens. & $\chi$ \\
\midrule
$9.77\times10^{-6}$ & $0$ & $7$  & $13$ & $-0.64${\small\,(2.74)} & \textendash & \textendash & \textendash \\
$1.95\times10^{-5}$ & $0$ & $21$ & $20$ & $3.35${\small\,(1.40)}  & \textendash & \textendash & \textendash \\
$3.91\times10^{-5}$ & $0$ & $27$ & $26$ & $0.63${\small\,(5.49)}  & \textendash & \textendash & \textendash \\
$7.81\times10^{-5}$ & $0$ & $34$ & $39$ & $5.86${\small\,(5.10)}  & \textendash & \textendash & \textendash \\
$1.56\times10^{-4}$ & $0$ & $48$ & $57$ & $8.38${\small\,(2.29)}  & $-81.03$ & $0.014$ & $0.011$ \\
\midrule
$1.56\times10^{-4}$ & $-0.05$ & \textendash & none & $6.46$  & $\mathbf{2.44}${\small\,(5.80)} & $0.301${\small\,(0.001)} & $0.17$ \\
$1.56\times10^{-4}$ & $-0.5$  & \textendash & none & $-8.05$ & $-15.18${\small\,(17.39)} & $0.210${\small\,(0.024)} & $0.84$ \\
$1.56\times10^{-4}$ & $-5.0$  & \textendash & none & $-3.26$ & $-5.96${\small\,(2.35)} & $0.529${\small\,(0.083)} & $0.92$ \\
\midrule
IDM-on, $\gamma = 50$ & $0$ & none & none & $\mathbf{30.66}${\small\,(1.05)} & $\mathbf{30.66}${\small\,(1.05)} & $0.280${\small\,(0.016)} & $0.12$ \\
\bottomrule
\end{tabular}
\end{table}

\subsection{Time repulsion: a barrier that needs no action labels}
\label{app:sf-simt}

A collapsed encoder (\Cref{def:sf-collapse}) produces embeddings that do not change within an
episode. A direct counter is to reward change over time: we add the temporal-similarity term
$\mathcal{L}_{\mathrm{sim}} = \frac{1}{T-1}\sum_t\|z_{t+1}-z_t\|_2^2$ with a weight
$\lambda_{\mathrm{sim}} < 0$. Unlike SIGReg, it depends on consecutive pairs $(z_t,z_{t+1})$, not
only on the distribution of $z_t$, so \Cref{prop:sf-feasibility} does not apply to it. It
needs no action labels.

To see which features it favors, take a scalar feature $f(o_t)$ at unit variance. Its
temporal roughness $\varrho(f) = \mathbb{E}\,(f(o_{t+1})-f(o_t))^2$ measures how much it moves
between frames; slow feature analysis~\citep{wiskott2002slow} learns features that minimize
it, and $\lambda_{\mathrm{sim}} < 0$ does the opposite. Its irreducible risk $\mathcal{E}(f)$
(\S\ref{app:sf-optimality}) is the error of the best predictor of $f(o_{t+1})$ from $f(o_t)$
and $a_t$. Predicting no change is one such predictor, with error $\varrho(f)$, so
$\theta(f) = \mathcal{E}(f)/\varrho(f) \in [0,1]$ is the fraction of the feature's motion that
stays unpredictable~\citep{weghenkel2018slowness_predictability}: $\theta = 0$ for an arm
driven deterministically by the action, $1/2$ for a feature resampled independently at every
frame, $1$ for a random walk. A background feature costs nothing in either term. Encoding $f$
instead adds $\mathcal{E}(f)$ to the prediction loss and $\lambda_{\mathrm{sim}}\varrho(f)$ to
the temporal term, a total of $\varrho(f)\,(\theta(f)-|\lambda_{\mathrm{sim}}|)$, so the
objective prefers $f$ to the background exactly when $\theta(f) < |\lambda_{\mathrm{sim}}|$.

Let $\theta_F$ and $\theta_D$ be the ratios $\theta$ of features reading only the foreground $x_t$
and only the distractors $y_t$. The weights that prefer the foreground to the background but
still reject distractors satisfy $\theta_F < |\lambda_{\mathrm{sim}}| < \theta_D$, a window that is
nonempty exactly when $\theta_F < \theta_D$.

In the middle block of \Cref{tab:sf-idm0}, time repulsion does what no marginal regularizer can: all three repulsion runs keep action sensitivity far above the control at $\lambda_{\mathrm{sim}} = 0$, so the predictor no longer degenerates to the identity. But no repulsion run clears the zero-action floor, which the IDM-on baseline exceeds by a wide margin. On the IDM-on checkpoint, the trained predictor's one-step error is $0.51$ times that of the identity predictor. This ratio, $\hat\theta = 0.51$, is the measured $\theta$ of the encoded features: about half of their motion is unpredictable. Taking $\hat\theta$ as $\theta_F$, the window is thin at best: an uncorrelated distractor also has $\theta = 1/2$ (see above), a correlated one more. On $\theta$'s scale the three weights are $0.057$, $0.57$ and $5.7$, and the window does not select: the best run lies far below it, at $0.057$, and the run inside it, at $0.57$, fails worst. A small repulsion thus acts as a motion floor that keeps the predictor off the identity, not as a selector. Above $|\lambda_{\mathrm{sim}}| = 1$ the marginal cost $\varrho(f)(\theta(f)-|\lambda_{\mathrm{sim}}|)$ is negative for every feature, including a random walk ($\theta = 1$), so the term rewards any temporal change and strips even the episode identity: $\chi$ rises far above the healthy parent's at the two larger weights (\Cref{tab:sf-idm0}).

\subsection{The inverse-dynamics term is a barrier of a different order}
\label{app:sf-idm}

\begin{proposition}[IDM floor]
\label{prop:sf-idm}
If $E$ is slow-feature collapsed (\Cref{def:sf-collapse}), i.e.\ $z_t$ is
$\mathcal{C}$-measurable at every $t$ with $\mathcal{C} = \sigma(b)$, then, for any head $I$
and with $d_a$ the action dimension,
\begin{equation}
\label{eq:sf-idm-floor}
\mathcal{L}_{\mathrm{idm}} \;\geq\; \frac{1}{d_a}\,\frac{1}{T-1}\sum_{t=1}^{T-1}
\left(\operatorname{tr}\operatorname{Var}(a_t) - \operatorname{tr}\operatorname{Var}\!\left(\mathbb{E}[a_t\mid\mathcal{C}]\right)\right),
\end{equation}
a constant fixed by the data alone.
\end{proposition}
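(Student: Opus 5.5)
The plan is to reduce the bound to the $L^2$-optimality of conditional expectation, one timestep at a time. Fix $t$ and write the $t$-th summand of Eq.~\eqref{eq:idm-loss} as $\frac{1}{d_a}\mathbb{E}\|I(z_t,z_{t+1})-a_t\|_2^2$. The stop-gradient only affects optimization, not the value of the loss, so it can be dropped. At level $1$ the target is the raw action block; at higher levels it is the pooled macro-action. In both cases it is a fixed square-integrable random vector, which I will call $a_t$ as in the statement.

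\textbf{Step 1: the input is $\mathcal{C}$-measurable.} By Definition~\ref{def:sf-collapse}, both $z_t$ and $z_{t+1}$ are $\mathcal{C}$-measurable, with $\mathcal{C}=\sigma(b)$. For any measurable head $I$, the composite $I(z_t,z_{t+1})$ is therefore $\mathcal{C}$-measurable. Heads that are not measurable do not define a loss, so they can be ignored.

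\textbf{Step 2: the conditional expectation is the best $\mathcal{C}$-measurable predictor.} For any $\mathcal{C}$-measurable $Y$ with finite second moment, the orthogonality of $a_t-\mathbb{E}[a_t\mid\mathcal{C}]$ to $L^2(\mathcal{C})$ gives the decomposition
\[
\mathbb{E}\|Y-a_t\|_2^2=\mathbb{E}\|a_t-\mathbb{E}[a_t\mid\mathcal{C}]\|_2^2+\mathbb{E}\|Y-\mathbb{E}[a_t\mid\mathcal{C}]\|_2^2 .
\]
Dropping the last term leaves $\mathbb{E}\operatorname{tr}\operatorname{Var}(a_t\mid\mathcal{C})$ as a lower bound. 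If $Y$ has infinite second moment, the loss is infinite and the bound holds trivially.

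\textbf{Step 3: convert to the form in the statement.} The law of total variance gives $\operatorname{tr}\operatorname{Var}(a_t)=\mathbb{E}\operatorname{tr}\operatorname{Var}(a_t\mid\mathcal{C})+\operatorname{tr}\operatorname{Var}(\mathbb{E}[a_t\mid\mathcal{C}])$. So the per-step floor equals $\operatorname{tr}\operatorname{Var}(a_t)-\operatorname{tr}\operatorname{Var}(\mathbb{E}[a_t\mid\mathcal{C}])$. Averaging over $t$ and multiplying by $1/d_a$ yields Eq.~\eqref{eq:sf-idm-floor}. The right-hand side depends only on the joint law of $(b,a_t)$, so it is fixed by the data, which is the final claim.

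The only delicate step is Step 1 at levels $\ell>1$, where the target $a^{(\ell)}_t$ is produced by the learned action encoder and is therefore not obviously "data alone." I would handle this by treating $A^{(\ell)}$ as fixed when the statement is read: the inequality holds for any fixed target. The "fixed by the data" remark is then literal at level $1$ and conditional on $A^{(\ell)}$ above it. I would note this explicitly rather than try to prove more. Square-integrability of the actions is assumed throughout; it holds since the actions are bounded.
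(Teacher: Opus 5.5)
Your proof is correct and follows the paper's own argument. Under collapse the head's input $(z_t,z_{t+1})$ is $\mathcal{C}$-measurable, so each summand is at least the Bayes risk $\mathbb{E}\operatorname{tr}\operatorname{Var}(a_t\mid\mathcal{C})$, and the law of total variance with the $1/d_a$ normalization gives Eq.~\eqref{eq:sf-idm-floor} term by term. You also spell out the $L^2$-orthogonality and integrability details the paper leaves implicit, and your caveat that at levels $\ell>1$ the floor is ``fixed by the data'' only for a fixed learned action encoder $A^{(\ell)}$ is a fair refinement of the paper's wording.
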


\begin{proof}
Under collapse $(z_t,z_{t+1})$ is $\mathcal{C}$-measurable, so for each $t$ the Bayes risk of regressing
$a_t$ on it is at least $\mathbb{E}\operatorname{tr}\operatorname{Var}(a_t\mid\mathcal{C})$; the law of total variance
and the $1/d_a$ normalization of Eq.~\eqref{eq:idm-loss} give Eq.~\eqref{eq:sf-idm-floor} term by term.
\end{proof}

With actions normalized to unit variance per dimension ($\operatorname{tr}\operatorname{Var}(a_t) = d_a$) and not predictable from episode identity, the floor is $1$: at $\gamma = 50$ the collapse costs $50$, against a SIGReg barrier $\Delta$ that is zero for our grouping. The floor does not shrink with batch size; it is a statement at exact collapse, and the near-collapsed runs of \Cref{tab:sf-idm0} ($\chi\leq 0.016$) are covered by the measurements, not by the bound. The term never penalizes keeping distractors~\citep{pathak2017curiosity,lamb2023guaranteed,levine2024multistep}. It is the only $\Omega(1)$ barrier here and needs action labels. This explains why \S\ref{sec:droid} adds it, and why SIGReg alone restores rank but not fidelity (Fig.~\ref{fig:anticollapse}).

\section{Extended related work}
\label{app:related-work}

\subsection{World models and planning}
Predicting the next sensory state has long been thought central to how animals learn the
structure of their world~\citep{tolman1948cognitive,rao1999predictive,clark2013whatever}. As early as
\citet{suttonbarto1981adaptive}, AI research has sought agents that build an internal
model of the world and use it for planning and control.
Modern instances learn such models directly from high-dimensional
observations, in pixel space or in a learned latent space, and scale to both
simulation and real-world robotics. Models such as
PlaNet, the Dreamer line, TD-MPC and
MuZero~\citep{haschmidhuber2018worldmodels,watter2015e2c,hafner2019planet,hafner2020dreamer,hafner2023dreamerv3,hansen2022tdmpc,hansen2024tdmpc2,schrittwieser2020muzero,micheli2023iris,janner2022diffuser,wu2022daydreamer}
plan or learn behavior against a reward or value signal. A distinct robotics line plans
directly with action-conditioned video prediction (visual foresight) to control
real robots from raw pixels~\citep{finn2017visualforesight,ebert2018visualforesight}.
More recently, task-agnostic
generative world models trained on large-scale real-world video produce realistic and
diverse simulations of physical
environments~\citep{bruce2024genie,yang2024unisim,brooks2024sora,nvidia2025cosmos,wu2024ivideogpt,du2023unipi},
showing that predictive modeling scales beyond task-specific datasets. Both directions
remain coupled to a reward signal or to modeling the pixels themselves.

JEPAs drop both couplings: they predict
\emph{latent} targets directly, with no decoder and no reward model~\citep{lecun2022path,assran2023ijepa,bardes2024vjepa}.
Recent action-conditioned approaches plan in this latent space~\citep{terver2026whatdrives,NWM,baldassarre2025dinoworld,wang2026adajepa,wang2026straightening,terver2026ebjepa,kuang2026lpwm,nilaksh2026reconstructionsemantics}.
They differ mainly in how the encoder is obtained: DINO-WM~\citep{zhou2025dinowm} trains a dynamics model on
\emph{frozen} pretrained features; V-JEPA\,2~\citep{assran2025vjepa2} pretrains a video JEPA and then attaches an action-conditioned predictor, \citet{terver2026whatdrives}
unifies the two in the JEPA-WMs framework; and end-to-end variants such as PLDM~\citep{sobal2025pldm} and LeWM~\citep{maes2026lewm} train encoder and predictor together from pixels.
H-JEPA builds directly on this end-to-end, action-conditioned setting and
extends it from a single latent space to a stack of JEPA world models at increasing
temporal abstraction.

\subsection{Preventing collapse in non-contrastive SSL}
Predicting latent targets risks a trivial \emph{collapsed} solution in which the
encoder and predictor emit a constant vector, driving the prediction loss to zero while
encoding nothing about the input. Contrastive methods~\citep{chen2020simclr,caron2021dino} avoid it with negatives; non-contrastive methods
instead regularize the embedding distribution: BYOL and SimSiam~\citep{grill2020byol,chen2021simsiam,ponce2026dualperspectives} through architectural asymmetries, VICReg and
Barlow Twins~\citep{bardes2022vicreg,zbontar2021barlow} through explicit
variance/covariance criteria. LeJEPA~\citep{balestriero2025lejepa} makes this a
distributional target, pushing embeddings toward an isotropic Gaussian via a sketched
normality test (SIGReg). Rectified LpJEPA~\citep{kuang2026rectifiedlpjepa}
extends this distribution-matching approach to rectified generalized Gaussian targets,
enabling explicit control over embedding sparsity.

Temporal prediction introduces a failure mode. \citet{sobal2022slowfeatures}
show that a JEPA can encode static background noise while discarding the moving agent:
the embeddings vary across episodes but remain constant within each episode, so
an identity predictor achieves zero prediction error while satisfying
variance--covariance constraints. They evaluate IDM,
which predicts actions from consecutive embeddings, as a way to retain information
about the moving agent. This use of inverse dynamics builds on representation learning
for curiosity-driven exploration~\citep{pathak2017curiosity}.
PLDM~\citep{sobal2025pldm} and the action-conditioned world models in
EB-JEPA~\citep{terver2026ebjepa} combine latent prediction with IDM and
variance--covariance regularization. Sensorimotor World Models~\citep{sensorimotorworldmodels}
instead use IDM as the sole regularizer alongside latent prediction.
Our LeWM+IDM level-1 model (\S\ref{sec:droid}) follows this use of inverse dynamics,
with SIGReg providing marginal anti-collapse regularization.

\subsection{Hierarchical World Models}
\label{sec:hier-abstraction}
Recent work builds hierarchical world models in which different levels occupy different
learned representation spaces. Hierarchical latent-variable video
models give each level its own latent and advance the coarser levels only once every
several frames, on a fixed stride (Clockwork VAE~\citep{saxena2021clockwork}) or at
learned boundaries (VTA~\citep{kim2019vta}), so the top levels come to represent
slower-changing content. STREAMER~\citep{mounir2023streamer} and
SUNTA~\citep{iiyama2026sunta} share a common structure: each reconstructs the input at the
lowest level but, at higher levels, predicts the \emph{latent} of the level below rather
than pixels, segmenting the stream into events and
summarizing each into a coarser, lower-detail code. Hierarchies of distinct
representations also appear in recent language models, via content-based dynamic chunking~\citep{pagnoni2024blt,hwang2025hnet} or a
coarse latent predicting several steps ahead~\citep{shi2026hilp}, though there the
hierarchy is built over text tokens rather than perceptual observations of an environment.

None of these methods is action-conditioned or used for planning, and all model the raw
input at the lowest level: reconstructing pixels (the video models) or predicting
tokens (the language models). H-JEPA differs on both counts: it is latent-predictive at
\emph{every} level (even the base is a JEPA trained without
reconstruction), and it is \emph{action-conditioned} and used for \emph{planning and
control} (\S\ref{sec:method-planning}).

\subsection{Hierarchical control}

\begin{table}[t]
\centering
\caption{\textbf{Model-based hierarchical control methods along four axes that define H-JEPA.}
\emph{Pixels}: operates from high-dimensional visual observations rather than a
low-dimensional state.
\emph{Task-agnostic}: reward-free training with no task signal; the task objective
(a goal or a planning cost) is defined only at test time.
\emph{Reconstruction Free}: learns representations and dynamics with no reconstruction
objective, predicting latent targets rather than decoding observations.
\emph{Hierarchy of learned latent spaces}: a stack
of \emph{distinct} latent spaces, each a coarser abstraction of the one below that
discards detail (vs.\ a single/shared space, parallel spaces, or a hierarchy of
decisions only). \cmark~yes, \xmark~no, \pmark~partial. H-JEPA is the only method satisfying all four;
HWM, our baseline, differs on exactly one axis.}
\label{tab:hier-control}
\resizebox{\linewidth}{!}{%
\begin{tabular}{lcccc}
\toprule
\makecell[l]{Method} & \makecell{Pixels (In)} & \makecell{Task-Agnostic\\World Models} & \makecell{Reconstruction\\Free} & \makecell{Hierarchy of Learned\\Latent Spaces} \\
\midrule
Classical hierarchical MPC~\citep{fang2019cavin,li2021latentaction,koegel2022safehmpc} & \xmark & \cmark & \xmark & \xmark \\
Director~\citep{hafner2022director} & \cmark & \xmark & \xmark & \xmark \\
Puppeteer~\citep{hansen2025puppeteer} & \cmark & \xmark & \cmark & \xmark \\
SPlaTES~\citep{guertler2025splates} & \xmark & \xmark & \cmark & \pmark \\
Schiewer et al.~\citep{schiewer2024limits} & \xmark & \xmark & \xmark & \cmark \\
THICK~\citep{gumbsch2024thick} & \cmark & \pmark & \xmark & \xmark \\
GCP~\citep{pertsch2020gcp} & \cmark & \cmark & \xmark & \xmark \\
HWM~\citep{zhang2026hierarchical} & \cmark & \cmark & \cmark & \xmark \\
\midrule
\textbf{H-JEPA (ours)} & \cmark & \cmark & \cmark & \cmark \\
\bottomrule
\end{tabular}%
}
\end{table}

Hierarchical control has a long history in reinforcement learning, where the
hierarchy is one of \emph{decisions}: a higher-level policy commits to a subgoal
or sub-policy that a low-level controller executes over many steps, as in the
options framework and feudal approaches~\citep{sutton1999options,dayan1992feudal,vezhnevets2017feudal,nachum2018hiro}.
Classical hierarchical model predictive control makes an analogous split for
optimal control, pairing a high-level planner over subgoals
with a low-level tracking controller~\citep{fang2019cavin,li2021latentaction,koegel2022safehmpc}.
Both organize \emph{control} across timescales: RL
methods learn subgoals and controllers by maximizing reward, and classical
MPC derives its levels from supplied dynamics models and hand-designed state
abstractions.

More recent work brings hierarchy into deep model-based RL, learning the
hierarchical world model itself within a reward-driven, task-specific training
loop~\citep{hafner2022director,hansen2025puppeteer,gumbsch2024thick,guertler2025splates,schiewer2024limits}.
Setting reward aside, these methods differ in their hierarchy of
\emph{representations}. \citet{schiewer2024limits} learn a separate
state-space model per level, though the models are generative and reward-driven and
each upper latent is only indirectly coupled to the level below.
The high-level space of SPlaTES~\citep{guertler2025splates} is an
independent low-dimensional projection of the state; THICK~\citep{gumbsch2024thick}
factorizes its latent into fast- and slow-changing components and uses sparse
changes in the slow component to define event-scale time steps, all within a
single latent space; and
the two world models of Puppeteer~\citep{hansen2025puppeteer} share one grain, bridged
by a hand-engineered command interface. None of
them builds the hierarchy as a stack of \emph{distinct} latent spaces, each
recursively derived from the one below at increasing temporal abstraction, as
H-JEPA does.

Closest to us in setting are pixel-based methods that plan toward a goal
zero-shot at test
time. GCP~\citep{pertsch2020gcp} recursively predicts visual subgoals, decoding
intermediate \emph{images}. Our baseline
HWM~\citep{zhang2026hierarchical} is a JEPA-based hierarchical world model with
predictors over different time horizons. It plans hierarchically at test time, but
all its predictors operate in a single shared latent space rather than a hierarchy of distinct representations.

\end{document}